\let\oldaddcontentsline\addcontentsline
\let\addcontentsline\oldaddcontentsline
\def\eqref#1{equation~\ref{#1}}
\def\1{\bm{1}}
\def\rvv{{\mathbf{v}}}
\DeclareMathAlphabet{\mathsfit}{\encodingdefault}{\sfdefault}{m}{sl}
\SetMathAlphabet{\mathsfit}{bold}{\encodingdefault}{\sfdefault}{bx}{n}
\def\gB{{\mathcal{B}}}
\def\gD{{\mathcal{D}}}
\def\gI{{\mathcal{I}}}
\def\gN{{\mathcal{N}}}
\def\gP{{\mathcal{P}}}
\def\gQ{{\mathcal{Q}}}
\def\gY{{\mathcal{Y}}}
\def\sI{{\mathbb{I}}}
\def\sP{{\mathbb{P}}}
\def\sR{{\mathbb{R}}}
\DeclareMathOperator*{\argmax}{arg\,max}
\newtheorem{theorem}{Theorem}
\definecolor{Highlight}{HTML}{39b54a}  % green
\Crefname{problem}{Problem}{Problems}
\newcommand\DoToC{%
  \startcontents
  \printcontents{}{1}{\textbf{Contents}\vskip3pt\hrule\vskip5pt}
  \vskip3pt\hrule\vskip5pt
}
\newtheorem{lemma}{Lemma}[section]
\theoremstyle{definition}
\Crefname{adxproposition}{Proposition}{Propositions}
\Crefname{adxcorollary}{Corollary}{Corollaries}
\Crefname{adxdefinition}{Definition}{Definitions}
\Crefname{adxtheorem}{Theorem}{Theorems}
\theoremstyle{remark}
\newtheorem*{remark}{Remark}
\newcommand{\name}{Rob-FCP\xspace}
\newcommand{\mcolor}[2]{\textcolor{#1}{#2}}
\newcommand{\reb}[1]{\mcolor{black}{#1}}
\begin{document}

\twocolumn[
\icmltitle{Certifiably Byzantine-Robust Federated Conformal Prediction}

% It is OKAY to include author information, even for blind
% submissions: the style file will automatically remove it for you
% unless you've provided the [accepted] option to the icml2024
% package.

% List of affiliations: The first argument should be a (short)
% identifier you will use later to specify author affiliations
% Academic affiliations should list Department, University, City, Region, Country
% Industry affiliations should list Company, City, Region, Country

% You can specify symbols, otherwise they are numbered in order.
% Ideally, you should not use this facility. Affiliations will be numbered
% in order of appearance and this is the preferred way.

\begin{icmlauthorlist}
\icmlauthor{Mintong Kang}{1}
\icmlauthor{Zhen Lin}{1}
\icmlauthor{Jimeng Sun}{1,3}
\icmlauthor{Cao Xiao}{2}
\icmlauthor{Bo Li}{1,4}
%\icmlauthor{}{sch}
%\icmlauthor{}{sch}
\end{icmlauthorlist}

\icmlaffiliation{1}{University of Illinois at Urbana-Champaign, USA}
\icmlaffiliation{2}{GE Healthcare, USA}
\icmlaffiliation{3}{Carle’s Illinois College of Medicine, USA}
\icmlaffiliation{4}{University of Chicago, USA}

\icmlcorrespondingauthor{Mintong Kang}{mintong2@illinois.edu}
\icmlcorrespondingauthor{Bo Li}{lbo@illinois.edu}

% You may provide any keywords that you
% find helpful for describing your paper; these are used to populate
% the "keywords" metadata in the PDF but will not be shown in the document
\icmlkeywords{Machine Learning, ICML}

\vskip 0.3in
]

% this must go after the closing bracket ] following \twocolumn[ ...

% This command actually creates the footnote in the first column
% listing the affiliations and the copyright notice.
% The command takes one argument, which is text to display at the start of the footnote.
% The \icmlEqualContribution command is standard text for equal contribution.
% Remove it (just {}) if you do not need this facility.

%\printAffiliationsAndNotice{}  % leave blank if no need to mention equal contribution
\printAffiliationsAndNotice{} % otherwise use the standard text.

\begin{abstract}
    Conformal prediction has shown impressive capacity in constructing statistically rigorous prediction sets for machine learning models with exchangeable data samples.
    The siloed datasets, coupled with the escalating privacy concerns related to local data sharing, have inspired recent innovations extending conformal prediction into federated environments with distributed data samples. However, this framework for distributed uncertainty quantification is susceptible to Byzantine failures. A minor subset of malicious clients can significantly compromise the practicality of coverage guarantees.
    To address this vulnerability, we introduce a novel framework \name, which executes robust federated conformal prediction, effectively countering malicious clients capable of reporting arbitrary statistics in the conformal calibration process. We theoretically provide the conformal coverage bound of \name in the Byzantine setting and show that the coverage of \name is asymptotically close to the desired coverage level. 
    We also propose a malicious client number estimator to tackle a more challenging setting where the number of malicious clients is unknown to the defender. We theoretically show the precision of the malicious client number estimator.
    Empirically, we demonstrate the robustness of \name against various portions of malicious clients under multiple Byzantine attacks on five standard benchmark and real-world healthcare datasets.
\end{abstract}

\section{Introduction}

As deep neural networks (DNNs) achieved great success across multiple fields~\citep{He_2016_CVPR,vaswani2017attention,li2022competition}, quantifying the uncertainty of model predictions has become essential, especially in safety-conscious domains such as healthcare and medicine \citep{ahmad2018interpretable,erickson2017machine,kompa2021second}.
For example, in sleep medicine domain, accurately classifying sleep stages (typically on EEG recordings) is crucial for understanding sleep disorders.
% Just like a human specialist who oftentimes offers a set of possible interpretations for one recording, a DNN should not only provide the point prediction but preferably a \textit{prediction set} (of possible sleep stages), whose cardinality conveys the level of uncertainty in a natural way.
Analogous to a human expert who may offer multiple possible interpretations of a single recording, it is desirable for a DNN to provide not just a singular prediction but a set of possible outcomes.
In constructing such prediction sets, we often consider the following coverage guarantee: the prediction set should contain the true outcome with a pre-specified probability (e.g. 90\%).
\textit{Conformal prediction} \citep{shafer2008tutorial,balasubramanian2014conformal,romano2020classification} demonstrates the capacity to provide such statistical guarantees for any black-box DNN with exchangeable data.

%Zhen: I used the suggestion from Danica here
Meanwhile, the demand for training machine learning models on large-scale and diverse datasets necessitates model training across multiple sites and institutions. 
\reb{
Federated learning~\citep{konevcny2016federated,smith2017federated,mcmahan2017communication,bonawitz2019towards, yang2019federated, kairouz2021advances} offers an effective approach to collaboratively train a global model while preserving data privacy, as it enables training with distributed data samples without the requirement of sharing the raw data.
}
%Due to the growing amount of large-scale data and privacy concerns of sharing sensitive data in the aforementioned safety-critical domains, federated learning \citep{bonawitz2019towards,yang2019federated,kairouz2021advances} is developed as a workaround to collaboratively train a global model with distributed data samples.
% \zl{
For example, multiple hospitals (``clients'') could jointly train a global clinical risk prediction model without sharing raw patient data.
% }
%the need for model training using patient data across multiple institutions, \zl{such as healthcare},
% \zl{
However, this introduces a unique challenge: { 
% Heterogeneous clients might not satisfy the exchangeability assumption, and 
the existence of malicious or negligent clients can negatively affect the training/testing of the global model.}
% }

%\cx{here is one concrete example to show the Byzantine failure as the challenge we aim to address. Feel free to adapt it to the paper: For example, we have various hospitals collaborate to train a global machine learning model for disease prediction using their patient data. Each hospital contributes its dataset, which contains medical records and diagnostic information. However, one or more hospitals could experiences a Byzantine failure due to a malfunctioning data collection system or an untrustworthy data curator. As a result, the hospital's data could be corrupted with incorrect or even fabricated medical information, or deliberately mislabel patient data to skew the training process, or manipulate data statistics, such as age, gender, or disease prevalence, to disrupt the model's fairness. The malicious or erroneous data can mislead the federated learning process and results.}

\reb{Recently, federated conformal prediction (FCP) methods \citep{lu2021distribution,lu2023federated,plassier2023conformal,humbert2023one} provide rigorous bounds on the coverage rate with distributed data samples.}
%of prediction coverage with distributed data samples.
However, FCP demonstrates vulnerability to \textit{Byzantine failures} \citep{lamport2019byzantine}, which are caused by uncontrollable behaviors of malicious clients. 
% \zl{
For example, a hospital's data could be corrupted with incorrect or even fabricated medical information due to human negligence or deliberate manipulation of data statistics (such as age, gender, or disease prevalence).
% The malicious or erroneous data can mislead the federated learning process and results.
% }
In the Byzantine federated setting, the prediction coverage guarantees of FCP are broken, and the empirical marginal coverage is downgraded severely, even with a small portion of malicious clients as \Cref{fig:fig_res_1}.

\begin{figure}[t]
    \centering
\includegraphics[width=0.6\linewidth]{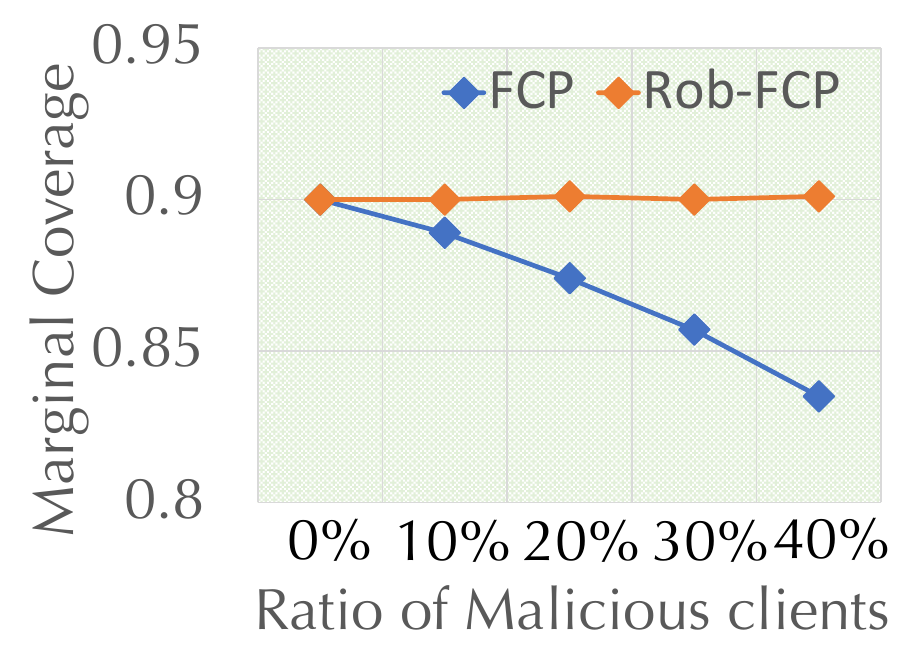}
\vspace{-0.4em}
    \caption{Coverage rate with different ratios of malicious clients on SHHS dataset. The desired coverage is $0.9$.}
    \label{fig:fig_res_1}
    \vspace{-0.8em}
\end{figure}

% In this paper, we aim to restore the coverage rate compromised by malicious clients in Byzantine federated learning settings by proposing a robust federated conformal prediction algorithm, \name. 
In this paper, we propose a robust federated conformal prediction algorithm, \name, aimed at mitigating the impact of malicious clients on the coverage rate in Byzantine federated learning environments.
The \name algorithm computes local conformity scores, sketches them with characterization vectors, and detects malicious clients based on averaged vector distance. Clients deemed highly malicious are subsequently excluded from the calibration process. 
Furthermore, we provide a technique for estimating the number of malicious clients, when their exact count is unknown, by optimizing the likelihood of the characterization vectors.   Our theoretical analysis of the coverage bounds shows that the coverage of \name is asymptotically close to the desired coverage level as long as the number of malicious clients is less than that of benign clients and the sample sizes of benign clients are sufficiently large. 

We empirically evaluate \name against multiple Byzantine attacks.
\name outperforms FCP by a large margin and even achieves comparable prediction coverage and efficiency as the benign settings on \textit{five} realistic datasets covering multiple fields. 
We also demonstrate the validity and tightness of the bounds of prediction coverage with different ratios of malicious clients.
% and conduct a set of ablation studies.
We further conduct a set of ablation studies on the methods of conformity scores characterization and different distance measurements to highlight the critical components in \name.

\noindent\textbf{\underline{Technical Contributions:}} Our contributions span both theoretical and empirical aspects.
\vspace{-1em}
\begin{itemize}[noitemsep,leftmargin=*]
    \item We provide the \textit{first} certifiably robust federated conformal prediction framework (\name) in the Byzantine setting where malicious clients can report arbitrary conformity score statistics.
    \item We propose a maliciousness score to effectively detect Byzantine clients and a malicious client number estimator to predict the number of Byzantine clients.
    \item We theoretically certify the coverage guarantees of \name. We also theoretically analyze the precision of the malicious client number estimator.
    \item We empirically demonstrate the robustness of \name in federated Byzantine settings across multiple datasets. We also empirically validate the soundness and tightness of the coverage guarantees.
\end{itemize}

%\vspace{-1em}
\section{Preliminaries}
%\vspace{-0.5em}
\subsection{Conformal prediction}
%\vspace{-0.5em}
Suppose that we have $n$ data samples $\{(X_i,Y_i)\}_{i=1}^n$ with features $X_i \in \mathbb{R}^d$ and labels $Y_i \in \mathcal{Y}:=\{1,2,...,C\}$. Assume that the data samples are drawn exchangeably from some unknown joint distribution of feature $X$ and label $Y$, denoted by $\gP_{XY}$.
\reb{
Given a desired coverage $1-\alpha \in (0,1)$, conformal prediction methods construct a prediction set $\hat{C}_{n,\alpha} \subseteq \mathcal{Y}$ for a new data sample $(X_{n+1}, Y_{n+1})\sim \gP_{XY}$ with the guarantee of \textit{marginal prediction coverage}:
$
    \mathbb{P}[Y_{n+1} \in \hat{C}_{n,\alpha}(X_{n+1})] \ge 1-\alpha
$.}

\reb{
In this work, we focus on the split conformal prediction setting~\citep{papadopoulos2002inductive}, where the data samples are randomly partitioned into two disjoint sets: a training set $\mathcal{I}_{\text{tr}}$ and a calibration (hold-out) set $\mathcal{I}_{\text{cal}}=[n]${$\backslash \mathcal{I}_{\text{tr}}$}.}
\footnote{In here and what follows, {$[n] := \{1, \cdots, n\}$}.}
We fit a classifier to the training set {$\mathcal{I}_{\text{tr}}$} to estimate the conditional class probability $\pi: \sR^d \mapsto \Delta^C$, with the $y$-th element denoted as {$\pi_y(x) = \sP[Y=y|X=x]$}.
Using the estimated probabilities that we denote by {$\hat{\pi}(x)$}, we then compute a non-conformity score {$S_{\hat{\pi}}(X_i,Y_i)$} for each sample in the calibration set $\gI_{\text{cal}}$.
The non-conformity score measures how much non-conformity each sample has with respect to its ground truth label.
A small non-conformity score $S_{\hat{\pi}}(X_i,Y_i)$ indicates that the estimated class probability $\hat{\pi}(X_i)$ aligns well with the ground truth label $Y_i$ for the data sample $(X_i,Y_i)$.
A simple and standard non-conformity score \citep{sadinle2019least} is $S_{\hat{\pi}}(x,y) = 1 - \hat{\pi}_y(x)$.
% A commonly used non-conformity score for valid and adaptive coverage is introduced by \citep{romano2020classification} as:
% {$
%     S_{\hat{\pi}}(x,y) = \sum\nolimits_{j \in \gY} \hat{\pi}_{j}(x) \sI_{[\hat{\pi}_{j}(x) > \hat{\pi}_{y}(x)]} + \hat{\pi}_{y}(x) u,
% $}
% where {$\sI_{[\cdot]}$} is the indicator function and {$u$} is uniformly sampled over the interval {$[0,1]$}.

Given a desired coverage {$1-\alpha$}, the prediction set of the new test data point {$X_{n+1}$} is formulated as:
\begin{equation}
\small
\label{eq:pre_set}
\begin{aligned}
    \hat{C}_{n,\alpha}(X_{n+1}) =& \left\{ y \in \gY: S_{\hat{\pi}}(X_{n+1},y) \le \right.\\ & \left.  Q_{1-\alpha}\left(\{S_{\hat{\pi}}(X_i,Y_i)\}_{i \in \gI_{\text{cal}}} \right) \right\},
\end{aligned}
\end{equation}
where {$Q_{1-\alpha}(\{S_{\hat{\pi}}(X_i,Y_i)\}_{i \in \gI_{\text{cal}}})$} is the {$\lceil (1-\alpha)(1+|\gI_{\text{cal}}|) \rceil$}-th largest value of the set {$\{S_{\hat{\pi}}(X_i,Y_i)\}_{i \in \gI_{\text{cal}}}$}.
The prediction set {$\hat{C}_{n,\alpha}(X_{n+1})$} includes all the labels with a smaller non-conformity score than the {$(1-\alpha)$}-quantile of scores in the calibration set. 
Since we assume the data samples are exchangeable, the marginal coverage of the prediction set {$\hat{C}_{n,\alpha}(X_{n+1})$} is no less than {$1-\alpha$}. 
We refer to
\citep{vovk2005algorithmic} for a more rigorous analysis of the prediction coverage.

%\vspace{-1em}
\subsection{Federated conformal prediction}
%\vspace{-0.5em}
\label{sec:fcp_pre}
In federated learning, multiple clients own their private data locally and collaboratively develop a global model. Let $K$ be the number of clients. 
We denote the local data distribution of the $k$-th client ($k \in [K]$) by $\gP^{(k)}$. Let $\{(X_i^{(k)},Y_i^{(k)})\}_{i \in [n_k]} \sim \gP^{(k)}$ be $n_k$ calibration samples owned by the $k$-th client. 
\reb{
We denote $(X_{\text{test}},Y_{\text{test}})$ as the future test point sampled from the global distribution $\gQ_{\text{test},\lambda}$ for some probability vector $\mathbf{\lambda} \in \Delta^K$: $(X_{\text{test}},Y_{\text{test}}) \sim \gQ_{\text{test},\lambda}:= \sum_{k=1}^K \lambda_k \gP^{(k)}$.
}
Let $N=\sum_{k=1}^K n_k$ be the total sample size of $K$ clients and $\hat{q}_\alpha$ be the $\lceil (1-\alpha)(N+K) \rceil$-th largest value in $\{S_{\hat{\pi}}(X_i^{(k)},Y_i^{(k)})\}_{i \in [n_k], k \in [K]}$, where $\hat{\pi}$ is the collaboratively trained conditional class probability estimator ($\alpha \ge {1}/{(N/K+1)}$). 
FCP \citep{lu2023federated} proves that under the assumption of partial exchangeability \citep{carnap1980studies} and $\lambda_k \propto (n_k+1)$, the prediction set $\hat{C}_\alpha(X_{\text{test}})=\{y \in \gY: S_{\hat{\pi}}(X_{\text{test}},y) \le \hat{q}_{\alpha}\}$ is a valid conformal prediction set with the guarantee:
\begin{equation}
\small
\label{eq:guarantee_marginal}
    1-\alpha \le \sP\left[ Y_{\text{test}} \in \hat{C}_\alpha(X_{\text{test}})\right] \le 1-\alpha+\dfrac{K}{N+K}.
\end{equation}
% Due to the concerns of privacy and communication costs, it is not practical for all the agents to upload the local non-conformity scores to the server for quantile value computation. Therefore, FCP \citep{lu2023federated} leverages data sketching algorithms such as T-digest \citep{Dunning2021100049} for distributed quantile estimation. 
Considering communication cost and privacy concerns, having all agents upload their local non-conformity scores to the server for quantile computation of $\hat{q}_\alpha$ is impractical. Consequently, FCP \citep{lu2023federated} utilizes data sketching algorithms like T-digest \citep{Dunning2021100049} for efficient and privacy-preserving distributed quantile estimation.
They prove that if the rank of quantile estimate $\hat{q}_\alpha$ is between $(1-\alpha-\epsilon)(N+K)$ and $(1-\alpha+\epsilon)(N+K)$ where \textit{$\epsilon$ denotes the quantile estimation error induced by data sketching}, then the guarantee in \Cref{eq:guarantee_marginal} can be corrected as the following:
\begin{equation}
\small
\label{eq:marginal_eps}
    1-\alpha -\dfrac{\epsilon N + 1}{N+K} \le \sP\left[ Y_{\text{test}} \in \hat{C}_\alpha(X_{\text{test}})\right] \le 1-\alpha + \epsilon +\dfrac{K}{N+K},
\end{equation}
where $K$ is the number of clients and $N$ is the total sample sizes of clients.

\section{\name and coverage guarantees}

\begin{figure}[t]
    \centering
    \includegraphics[width=\linewidth]{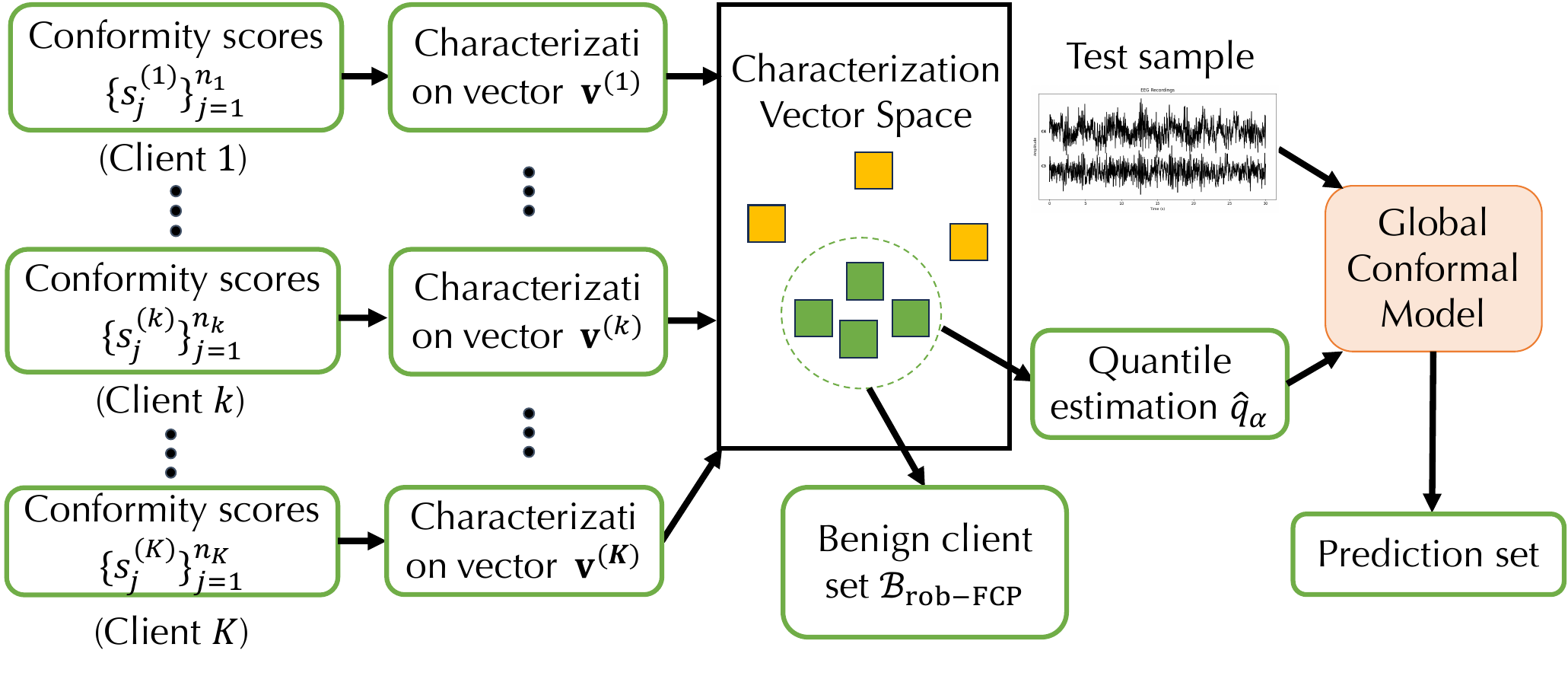}
    \vspace{-1.1em}
    \caption{Overview of \name.}
    \label{fig1}
    \vspace{-0.5em}
\end{figure}

%\vspace{-0.5em}
\subsection{Threat model}
%\vspace{-0.5em}
\label{sec:pro_def}

We follow the standard setup of FCP in \Cref{sec:fcp_pre} and consider the following Byzantine threat model.
Suppose that among $K$ clients, there exist $K_b$ benign clients and $K_m~(K_m=K-K_b)$ malicious (Byzantine) clients. Without loss of generality, let the clients indexed by $[K_b]=\{1,...,K_b\}$ be benign clients and the clients indexed by $[K] \backslash [K_b]=\{K_b+1,...,K\}$ be malicious clients. The $k$-th benign client ($k \in [K_b]$) leverage the collaboratively trained global model $\hat{\pi}$ to compute the conformity scores on its local calibration data and sketched the score statistics with a characterization vector $\mathbf{v}^{(k)} \in \Delta^H$ where $H$ is the granularity of the characterization statistics and then report the score vector $\mathbf{v}^{(k)}$ to the server. In contrast, $K_m$ malicious clients can submit arbitrary characterization vectors $\mathbf{v}^{(k)} (k \in [K] \backslash [K_b])$ to the server. 

Following FCP \cite{lu2023federated}, the server considers a global distribution $\mathcal{Q}$ as a weighted combination of local distributions, denoted by $\mathcal{Q} = \sum_{i=1}^K \lambda_i \mathcal{P}^{(i)}$, where $\lambda_i$ represents the weight assigned to each local distribution and is proportional to the size of local samples $n_i$: $\lambda_i \propto (n_i + 1)$. Note that the server knows the true weights of local distributions (or equivalently, quantities of local samples), which can not be manipulated by malicious clients during the conformal prediction phase. Since the weights of local distributions (or equivalently, quantities of local samples) are a known priori to the server during the federated model learning phase, the threat model is reasonable and practical, aligning with the existing Byzantine analysis literature \cite{blanchard2017machine,park2021sageflow,data2021byzantine}. For the threat model, we aim to develop a Byzantine-robust FCP framework (\name) that maintains coverage and prediction efficiency despite the existence of malicious clients. We also aim to provide rigorous coverage guarantees of \name in the Byzantine setting

% Note that FCP can leverage data-sketching algorithms and only report a sketch of conformity scores to the server as illustrated in \Cref{sec:fcp_pre}.

% \textbf{Robust Distributed Conformal Prediction}. ($\epsilon$,$b$)-resilience: When $b$ malicious clients exist among $K$ clients in FL, the algorithm computes the quantile value with the guarantee of error $\epsilon$.

% \textbf{Non-iid setting}: use bucketing process before distribution distance computation and compute trustworthiness scores at the bucket level. Similar ideas work in identifying malicious gradients in Byzantine robust learning in \citep{karimireddy2022byzantinerobust}.

% \textbf{Quantity-aware settings}: we can consider a more challenging setting where the malicious clients are able to upload adversarial quantities. We can make the trustworthiness scores $T_k$ proportional to $\dfrac{1}{q_k}$ and $\dfrac{1}{q_j} (j \in N_k)$, which is demonstrated effective in Byzantine robust learning in \citep{yi2022robust}.

%\vspace{-0.5em}
\subsection{\name algorithm}
\label{sec:rob_alg}
%\vspace{-0.5em}

\name first detects the set of malicious clients, then excludes their score statistics during the computation of empirical quantile of conformity scores, and finally performs federated conformal prediction with the quantile value, which is not affected by malicious clients.
\vspace{-1em}
\paragraph{Characterization of conformity scores} Let $\{s_j^{(k)}\}_{j\in[n_k]}$ be the conformity scores computed by the $k$-th client ($k \in [K]$) on its local calibration set.
Since it is challenging to detect abnormal behavior from the unstructured and unnormalized conformity scores, we characterize the local conformity scores $\{s_j^{(k)}\}_{j\in[n_k]}$ with a vector $\rvv^{(k)} \in \mathbb{R}^H$ for client $k$, where the vector dimension $H \in \mathbb{Z}^+$ implicates the granularity of the characterization.
Specifically, we can partition the range of conformity score values (e.g., $[0,1]$ for APS score \citep{romano2020classification}) into $H$ subintervals $\{[a_{h},a_{h+1})\}_{0\le h \le H-2} \cup \{[a_{H-1},a_H]\}$, where $a_h$ denotes the $h$-th cut point.\footnote{For simplicity, we abuse the last interval $[a_{H-1},a_H]$ as $[a_{H-1},a_H)$ in the future discussions.}
Thus, the $h$-th element of the characterization vector ($\rvv^{(k)}_h$) represents the probability that a conformity score falls within the specific subinterval $[a_{h-1},a_h)$:
\begin{equation}
\small
%\vspace{-0.5em}
\label{eq:score2vec}
\begin{aligned}
    \rvv^{(k)}_h &= \sP_{s \sim \left\{s_j^{(k)}\right\}_{j\in[n_k]}}\left[ a_{h-1} \le s < a_{h} \right] \\ &= \dfrac{1}{n_k}\sum_{j=1}^{n_k} \sI\left[a_{h-1} \le s_j^{(k)} <  a_{h} \right],
\end{aligned}
\vspace{-0.2em}
\end{equation}
where $\sI[\cdot]$ denotes the indicator function.
The characterization vector $\rvv^{(k)}$ is designed to encapsulate the distribution of score samples via histogram statistics, reflecting a fundamental multinomial distribution. This methodology leverages the observation that conformity scores originating from homogeneous distributions typically show substantial similarity. Consequently, characterization vectors from benign clients exhibit notable resemblance, in contrast to those from malicious clients, whose score statistics are anomalous. Such a distinct pattern facilitates the reliable identification of malicious clients.

Furthermore, \name is designed with the flexibility to incorporate various methodologies for representing empirical conformity score samples as a real-valued vector $\rvv$. Among these methodologies are kernel density estimation \citep{terrell1992variable}, offering a more nuanced interpretation of histogram statistics; parametric model fitting, such as Gaussian models; and clustering-based exemplar representations, including KMeans. The empirical analysis in \Cref{sec:abla} indicates that the histogram-based approach, as formulated in \Cref{eq:score2vec}, surpasses both parametric models and clustering techniques in performance. Hence, we consider the histogram statistic in \name as our primary method of analysis.

\vspace{-0.5em}
\paragraph{Maliciousness score computation}
\name detects the malicious clients via a maliciousness score in the space of characterization vectors. 
First, we compute pairwise $\ell_p$ ($p \in \mathbb{Z}^+$) vector distances among $K$ clients:
\begin{equation}
\small
\label{eq:dis}
    d_{k_1,k_2} = \|\rvv^{(k_1)} - \rvv^{(k_2)} \|_p,~ \forall k_1, k_2 \in [K].
\end{equation}
%Denote $N_{ear}(k,n,\{d_{k_1,k_2}\}_{k_1,k_2 \in [K]})$ as the index set of $n$ nearest neighbors of client $k$ in the vector space, with constraints $k \notin N_{ear}(k,n,\{d_{k_1,k_2}\}_{k_1,k_2 \in [K]})$ and $|N_{ear}(k,n,\{d_{k_1,k_2}\}_{k_1,k_2 \in [K]})|=n$. 
%(When it is clear from the context, we omit $\{d_{k_1,k_2}\}_{k_1,k_2 \in [K]}$ for the $N_{ear}$ notation for readability.)
% \zl{
Denote $N_{ear}(k, t)$ as the index set of the $t$-nearest neighbors of client $k$ (excluding itself), with the distance between two clients $k_1$ and $k_2$ given by \Cref{eq:dis}.
% }
We define the maliciousness score $M(k) \in \sR$ of client $k$ $(k \in [K])$ as the averaged distance to the $K_b-1$ nearest neighbors, where $K_b$ is the number of benign clients:
\begin{equation}
\small
    M(k) = \dfrac{1}{K_b-1} \sum_{k' \in N_{ear}(k,K_b-1)} d_{k,k'}.
\end{equation}
% Then, we let the benign set identified by \name $\gB_{\text{\name}}$ be the set of the index of the clients with the lowest $K_b$ maliciousness scores in $\{M(k)\}_{k=1}^K$. 
We define the benign set identified by \name, denoted as $\gB_{\text{\name}}$, as the set containing the indices of clients with the lowest $K_b$ maliciousness scores among $\{M(k)\}_{k=1}^K$.
Subsequently, quantile estimation $\hat{q}_\alpha$ is carried out using the characterization vectors from the clients within the benign set $\gB_{\text{\name}}$. The quantile estimation $\hat{q}\alpha$ is then applied to perform federated conformal prediction on the globally trained model in a distributed manner. An overview of \name is presented in \Cref{fig1}, with the pseudocode detailed in \Cref{alg:rob_fl_confinf} in \Cref{app:alg}.
% Finally, we can perform quantile estimation $\hat{q}_\alpha$ with the statistics of the clients in the benign set $\gB_{\text{\name}}$ and do conformal prediction with $\hat{q}_\alpha$ on the distributedly trained global model. We provide the overview of \name in \Cref{fig1} and the pseudocode of the \name algorithm in \Cref{alg:rob_fl_confinf} in \Cref{app:alg}.

%  Accordingly, the characterization vectors of malicious clients are also separable from the cluster of benign characterization vectors. Since the maliciousness scores compute the averaged vector distance to $K_b-1$ nearest neighbors, the maliciousness scores of malicious clients are larger than those of benign clients when $K_b>K_m$ holds (a general condition in Byzantine analysis \citep{blanchard2017machine}). 
% Therefore, \name can effectively exclude the malicious statistics during conformal calibration and robustly generate the conformal prediction set. We rigorously analyze \name and provide the coverage bound of \name in the existence of a certain ratio of malicious clients in \Cref{sec:analysis}.

To impair the overall performance of global conformal predictions, malicious clients often submit conformity score statistics that starkly contrast with those of benign clients. This difference results in the characterization vectors of malicious clients being distinct and separable from the aggregation of benign vectors. The calculation of maliciousness scores, which is based on the average distance to the $K_b-1$ nearest neighbors, further accentuates this separation. Specifically, malicious clients tend to have higher maliciousness scores than benign clients, given the condition $K_b>K_m$, a common assumption in Byzantine resilience studies \citep{blanchard2017machine}). Leveraging this distinction, \name effectively isolates and disregards the skewed statistics introduced by malicious clients during the conformal calibration process, thereby maintaining the validity of the conformal prediction set. A theoretical analysis of \name, including rigorous coverage bounds, is provided in \Cref{sec:analysis}.

\vspace{-0.5em}
\paragraph{Effectiveness of \name against mimick attacks} 
 Malicious clients with mimic attack \cite{karimireddy2022byzantinerobust,shejwalkar2021manipulating} transmit similar gradients to benign clients in FL optimization, which is stealthy and deteriorates the optimization process by over-representing the mimicked clients in the setting with high data heterogeneity. However, in FCP, for a collaboratively trained model, we observe that the heterogeneity of distributions of non-conformity scores cannot be effectively used by mimic attacks to disturb the FCP process. The major difference between the setting in \cite{karimireddy2022byzantinerobust,shejwalkar2021manipulating} and FCP is that the former considers the FL optimization, where clients perform multi-step local updates on local data distribution, and thus the gradients among clients can show a pretty high heterogeneity due to the data heterogeneity and also the high dimensionality of the gradients. This makes a great opportunity for the attackers to hide in and still distort the FL optimization effectively. However, in the FCP setting, the model is well-trained and converges well. Thus, the heterogeneity in the space of nonconformity score vectors is not as great as the heterogeneity in the high-dimensional gradient space during optimization. 
 Note that in \name, we do not have assumptions that the malicious clients should be very different from benign clients. The principle of the effectiveness of \name is that (1) if the score vector of malicious clients is close to the benign clients, although \name may identify it as benign, it can only make a limited and bounded difference on the FCP results, and (2) if the score vector is far from the benign cluster, although it is effective to distort FCP, \name will filter it out in this case. 
 We provide the empirical validation results of the observation in \Cref{tab:mimick} in \Cref{app:res}.

\subsection{Coverage guarantee of \name}
\label{sec:analysis}
%\vspace{-0.5em}

We rigorously analyze the lower and upper bounds of the prediction coverage of \name in the Byzantine setting in \Cref{thm1:improve}. 
% The theoretical results demonstrate that \name asymptotically approaches the desirable coverage level with sufficiently large sample sizes of benign clients.
The analysis reveals that, with an adequately large sample size of benign clients, \name is capable of reaching the desired coverage level. This finding underscores the effectiveness of \name in maintaining reliable prediction coverage, even in the presence of Byzantine clients.

\begin{theorem}[Coverage guarantees of \name in Byzantine setting]
\label{thm1:improve}
    Consider FCP setting with $K_b$ benign clients and $K_m$ malicious clients.
    % For $K$ clients including $K_b$ benign clients and $K_m:=K-K_b$ malicious clients, 
    The $k$-th client reports the characterization vector $\rvv^{(k)}$ and local sample size $n_k$ to the server ($k \in [K_b+K_m]$). 
    % Suppose that the reported characterization vectors of benign clients are sampled from the same underlying multinomial distribution $\gD$, while those of malicious clients can be arbitrary. 
    Assume that the benign characterization vector $\rvv^{(k)}$ follows multinomial distribution $\gD_k$ with event probability $\overline{\rvv}^{(k)}$ for the $k$-th client ($k \in [K_b]$).
    We use $\sigma$ to quantify the heterogeneity of benign vectors as  $\sigma = \max_{k_1 \in [K_b] ,k_2 \in [K_b]}\| \overline{\rvv}^{(k_1)} - \overline{\rvv}^{(k_2)} \|_1$.
    Let $\epsilon$ be the data sketching error as \Cref{eq:marginal_eps}.
    Under the assumption that $K_m < K_b$, the following coverage guarantee for test instance $(X_{\text{t}},Y_{\text{t}})$ holds with probability $1-\beta$:
    \begin{equation}
    \small
    \label{eq:coro_improve}
    \begin{aligned}
        & \sP\hspace{-0.2em}\left[ Y_{\text{t}} \hspace{-0.2em} \in \hspace{-0.2em} \hat{C}_\alpha(X_{\text{t}})\right] \hspace{-0.1em} \ge \hspace{-0.1em}  1 - \hspace{-0.1em} \alpha \hspace{-0.1em} - \hspace{-0.1em} P_{\text{byz}} \hspace{-0.1em} -  \hspace{-0.1em} \dfrac{N_m\sigma}{n_b(1-\tau)} \hspace{-0.1em} - \hspace{-0.1em} \dfrac{\epsilon n_b + 1}{n_b+K_b} \\
        & 
       \sP\hspace{-0.2em}\left[ Y_{\text{t}} \hspace{-0.2em} \in \hspace{-0.2em} \hat{C}_\alpha(X_{\text{t}})\right] \hspace{-0.1em} \le \hspace{-0.1em} 1 \hspace{-0.1em} - \hspace{-0.1em} \alpha \hspace{-0.1em} + \hspace{-0.1em} P_{\text{byz}} \hspace{-0.1em} + \hspace{-0.1em} \dfrac{N_m\sigma}{n_b(1-\tau)} \hspace{-0.1em} + \hspace{-0.1em} \dfrac{\epsilon n_b + (\epsilon+1) K_b}{n_b+K_b} \\
        & \text{where} \quad P_{\text{byz}} = \dfrac{H\Phi^{-1}({1-\beta/2HK_b})}{2 \sqrt{n_b}} \left( 1+\dfrac{N_m}{n_b}\dfrac{2}{1-\tau} \right)
    \end{aligned}
    \end{equation}
    where $\tau = K_m / K_b$ is the ratio between the number of malicious clients and the number of benign clients, \reb{$N_m:=\sum_{k \in [K]\backslash[K_b]} n_k$ is the total sample size of malicious clients}, $n_b:=\min_{k'\in[K_b]}n_{k'}$ is the minimal sample size of benign clients, and $\Phi^{-1}(\cdot)$ denotes the inverse of the cumulative distribution function (CDF) of standard normal distribution.
\end{theorem}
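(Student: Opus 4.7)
The plan is to reduce the analysis to the vanilla FCP coverage guarantee (Equation~\ref{eq:marginal_eps}) by controlling how far \name's empirical quantile $\hat q_\alpha$ can drift from the quantile that would be produced from the true benign clients alone. I would combine three ingredients: (i) multinomial concentration to control $\|\rvv^{(k)}-\overline{\rvv}^{(k)}\|_1$ for each benign $k\in[K_b]$; (ii) a structural argument showing that every client surviving the maliciousness filter must be $\ell_1$-close to the benign cluster; and (iii) a Lipschitz-type step translating an $\ell_1$ perturbation of the aggregated empirical histogram into a perturbation of the empirical CDF and hence of the coverage.

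For (i), I would apply a bin-wise normal approximation with worst-case variance $1/4$ to each coordinate of the multinomial $\rvv^{(k)}$, then union-bound across the $H$ bins and the $K_b$ benign clients. This yields an event $\mathcal E$, holding with probability at least $1-\beta$, on which $\|\rvv^{(k)}-\overline{\rvv}^{(k)}\|_1 \le \tfrac{H\,\Phi^{-1}(1-\beta/(2HK_b))}{2\sqrt{n_b}}$ for every benign $k$; this is precisely the leading factor of $P_{\text{byz}}$. On $\mathcal E$, by the definition of $\sigma$ and the triangle inequality, all benign empirical vectors lie within an explicit $\ell_1$ radius of one another. For (ii), I would argue that because $K_m<K_b$, every benign client has at least $K_b-1$ benign neighbors, so its maliciousness score $M(k)$ is bounded by the benign intra-cluster radius; hence any malicious client admitted to $\gB_{\text{\name}}$ must lie within a comparable radius of the benign mean. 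In the worst case, $K_m$ benign clients are swapped out for $K_m$ malicious ones, and the weighted aggregate $\sum_{k\in\gB_{\text{\name}}} n_k\rvv^{(k)}/\sum_{k\in\gB_{\text{\name}}} n_k$ differs from the all-benign aggregate by an $\ell_1$ amount controlled by $\sigma$ and the concentration term, amplified by the relative mass factor $\tfrac{N_m}{n_b}\cdot\tfrac{1}{1-\tau}$, where $K_b - K_m = K_b(1-\tau)$ enters via the surviving benign fraction.

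Step (iii) is a rank/CDF Lipschitz argument: an $\ell_1$ shift of a histogram displaces the empirical CDF uniformly by at most the displaced mass, which shifts the rank of $\hat q_\alpha$ by the same amount. Plugging this rank perturbation into the rank-based derivation underlying Equation~\ref{eq:marginal_eps} and adding the sketching error $\epsilon$ produces the two stated bounds, with the Byzantine deviations assembling into $P_{\text{byz}} + \tfrac{N_m\sigma}{n_b(1-\tau)}$. The main obstacle I expect is step (ii): rigorously excluding a ``stealth'' attack in which several malicious clients each sit individually close to the benign mean yet collectively bias the aggregate. Handling this requires leaning on the cap of $K_m<K_b$ survivors together with the per-client $\ell_1$ radius inherited from the intra-benign geometry to produce the $1/(1-\tau)$ amplification in a tight way; the concentration in (i) and the histogram-to-coverage Lipschitz step (iii) are comparatively routine once (ii) is in place.
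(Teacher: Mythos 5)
Your plan is correct and follows essentially the same route as the paper's proof: (a) per-bin binomial-proportion (normal-approximation) concentration with a union bound over $H$ bins and $K_b$ benign clients giving the radius $\tfrac{H\Phi^{-1}(1-\beta/2HK_b)}{2\sqrt{n_b}}$, (b) a case analysis of the filter in which an admitted malicious client's score is compared to an excluded benign client's score, with the nearest-neighbor averaging over at least $K_b-K_m$ benign neighbors producing exactly the $1/(1-\tau)$ amplification and the $N_m/n_b$ weighting of the aggregate (which also disposes of your ``stealth attack'' worry, since near-benign malicious vectors contribute only boundedly to the weighted aggregate), and (c) an $\ell_1$-Lipschitz translation of the aggregated-histogram error into coverage error via the FCP rank/sketching bound. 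No substantive difference from the paper's argument.
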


\begin{remark}
\textbf{\underline{(R1)}} \Cref{eq:coro_improve} offers the lower and upper bound of the prediction coverage 
% (i.e., $\sP\left[ Y_{\text{t}} \in \hat{C}_\alpha(X_{\text{t}})\right]$) 
with \name in the Byzantine setting. The coverage bounds are in relation to (a) Byzantine coverage penalty $P_{\text{byz}}$, (b) client disparity penalty $\nicefrac{N_m\sigma}{n_b(1-\tau)}$, and (c) data sketching penalty $\nicefrac{\epsilon n_b + 1}{n_b+K_b}$ or $\nicefrac{\epsilon n_b + (\epsilon+1) K_b}{n_b+K_b}$.
\textbf{\underline{(R2)}} The Byzantine coverage penalty $P_{\text{byz}}$ is induced by the presence of malicious clients. It can be exacerbated by a large ratio of malicious clients (a large $\tau$) and a large total sample size of malicious clients (a large $N_m$). However, the Byzantine coverage penalty $P_{\text{byz}}$ can be effectively reduced by a larger benign sample size $n_b$.
\textbf{\underline{(R3)}} The client disparity penalty is induced by the data heterogeneity among clients. Similarly, it can be exacerbated by a large $\tau$ and $N_m$, but reduced by a large $n_b$. We leverage the maximal pairwise vector norm to quantify the client heterogeneity, which aligns with existing Byzantine analysis \cite{park2021sageflow,data2021byzantine}.
\textbf{\underline{(R4)}} The data sketching penalty is induced by the local approximation error $\epsilon$ as \Cref{eq:marginal_eps}, with more details provided in \cite{lu2023federated}.
\textbf{\underline{(R5)}} The assumption $ K_m<K_b~(i.e., \tau<1)$ requires that the number of malicious clients is less than the number of benign clients, aligning with the break point of $\lceil K/2 \rceil$ in Byzantine analysis \citep{blanchard2017machine,yin2018byzantine,guerraoui2018hidden}.
\textbf{\underline{(R6)}} There exists a trade-off of selecting the characterization granularity $H$. According to FCP \cite{lu2023federated}, with the histogram estimate, when $H$ decreases, the data sketching becomes rough and increases the approximation error $\epsilon$. At the same time, a smaller $H$ will decrease the Byzantine coverage penalty $P_{\text{byz}}$ due to a better concentration rate. We empirically perform ablation studies on the selection of $H$ in \Cref{app:res}.
\textbf{\underline{(R7)}} We bound the concentration of the characterization vectors with the binomial proportion confidence interval \cite{wallis2013binomial}. We also provide results with more advanced concentration bounds DKW inequality \citep{dvoretzky1956asymptotic} in \Cref{app:dkw}.
\textbf{\underline{(R8)}} Asymptotically, as long as the benign sample size $n_b$ is sufficiently large, both the coverage lower bound and the upper bound reach the desired coverage level $1-\alpha$, demonstrating the robustness of \name.
\end{remark}
% \vspace{-1.5em}
\textit{Proof sketch.} 
% The proof consists of 3 parts: (a) concentration analysis of the characterization vectors $\rvv^{(k)}$ for benign clients ($1\le k \le K_b$), (b) analysis of \name in terms of the error of aggregated statistics within in $\gB_{\name}$, and (c) analysis of the error of the coverage bound based on the error of aggregated statistics. 
% Part (a) is achieved by leveraging statistical confidence intervals and union bounds. In part (b), we particularly consider the maliciousness scores of special points and relax the histogram statistics error of \name. In part (c), we mainly leverage FCP analysis in \citep{lu2023federated} to translate the error of histogram statistics to the error of coverage bound.
We first leverage statistical confidence intervals and union bounds to conduct concentration analysis of the characterization vectors $\rvv^{(k)}$ for benign clients ($1\le k \le K_b$). Then we consider the maliciousness scores of critical clients and relax the histogram statistics error. We finally translate the error of aggregated statistics to the error of the coverage bounds by algebra analysis.
We provide complete proofs in \Cref{app:proof_thm1}.

\section{\name with unknown numbers of malicious clients}
\label{sec:mal_ubk}
%\vspace{-0.5em}
\subsection{Malicious client number estimator}
%\vspace{-0.5em}
% The number of malicious clients $K_m$ is typically a known quantity for the defender in the standard Byzantine setting \citep{blanchard2017machine,park2021sageflow,liu2023byzantine}. The number of malicious clients is a critical quantity for the defense. An overestimation of the quantity will involve malicious clients that attempt to deteriorate the global performance, while an underestimation of the quantity will exclude benign clients and induce a global distribution shift with heterogeneous data. However, the number of malicious clients is usually agnostic to the server in practice. Therefore, we propose a malicious client number estimator for \name to unleash its potential in a more challenging Byzantine setting with unknown numbers of malicious clients.

In the standard Byzantine framework \citep{blanchard2017machine,park2021sageflow,liu2023byzantine}, the defender is often assumed to have prior knowledge of the quantity of malicious clients $K_m$. This number plays a pivotal role in defense strategies: underestimating it results in the inclusion of malicious clients, leading to a degradation in overall performance, while overestimating it results in the exclusion of benign clients, thereby causing a shift in the global data distribution. However, in real-world applications, the exact count of malicious clients is typically unknown to the server. To address this gap and enhance the system's resilience in more complex Byzantine environments where the number of malicious clients is uncertain, we introduce a novel estimator for malicious client numbers for \name.

To accurately estimate the number of malicious clients $K_m$, we pivot to calculating the number of benign clients $K_b$, given the total client count $K$ is known. 
To achieve this, we aim to maximize the likelihood of benign characterization vectors while minimizing the likelihood of malicious characterization vectors over the number of benign clients $\hat{K}_b$. 
The likelihood computation necessitates a predefined distribution for benign characterization vectors.

Considering that benign characterization vectors $\rvv^{(k)}$ ($k \in [K_b]$) are sampled from a multinomial distribution, which, for substantial sample sizes, can be closely approximated by a multivariate normal distribution as \citep{severini2005elements}, we proceed under the assumption that the benign characterization vectors are samples from a multivariate normal distribution denoted as $\gN(\mu,\Sigma)$, where $\mu \in \sR^H$ represents the mean, and $\Sigma \in \sR^{H \times H}$ denotes the covariance matrix.

Then, we use expectation–maximization (EM) algorithm to effectively estimate the number of benign clients $\hat{K}_b$. In the expectation (E) step, given the current estimate of benign client number $\tilde{K}_b$, we compute the expected Gaussian mean and covariance by the observations of benign characterization vectors, which can be identified by the \name algorithm in \Cref{sec:rob_alg}. In the maximization (M) step, we maximize the likelihood of characterization vectors given the estimated Gaussian mean and covariance in the E step. Formally, 
let $I(\cdot): [K] \mapsto [K]$ be the mapping from the rank of maliciousness scores by \name to the client index.
The EM optimization step can be formulated as:
% The discrete optimization can be tackled by traversing the finite feasible set, but the challenge lies in the computation of the likelihood, which requires a specified distribution of benign characterization vectors.
% Since the benign characterization vector $\rvv^{(k)}$ $(k \in [K_b])$ follows a multinomial distribution and the multinomial distribution can be approximated with multivariate normal distribution with large sample sizes \citep{severini2005elements},
% we assume the benign characterization vectors are drawn from a multivariate normal distribution $\gN(\mu,\Sigma)$ with mean $\mu \in \sR^H$ and covariance $\Sigma \in \sR^{H \times H}$.

% For a given benign client number $\tilde{K}_b$, we can leverage the \name algorithm illustrated in \Cref{sec:rob_alg} to identify the set of benign clients, which enables different likelihood computations for benign clients and malicious clients individually.
% Let $I(\cdot): [K] \mapsto [K]$ be the mapping from the rank of clients sorted by the maliciousness scores to the original index.
% The estimate of the benign client number $\hat{K}_b$ is given by:
\vspace{-1em}
\begin{equation}
\small
\label{eq:opt}
\begin{aligned}
     \hat{K}_b =& \argmax_{z \in [K]}\left[ \dfrac{1}{z} \sum_{k=1}^{z} \log p(\rvv^{(I(k))};\hat{\mu}(z),\hat{\Sigma}(z)) \right. \\  & \left. - \dfrac{1}{K-z} \sum_{k=z+1}^K \log p(\rvv^{(I(k))};\hat{\mu}(z),\hat{\Sigma}(z)) \right]
\end{aligned}
\vspace{-0.5em}
\end{equation}
where $\hat{\mu}(z)$ and $\hat{\Sigma}(z)$ are the expected mean and covariance: $\hat{\mu}(z) = \nicefrac{1}{z} \sum_{k \in [z]} \rvv^{(I(k))}$, $\hat{\Sigma}(z)=\mathbb{E}_{k \in [z]}[ (\rvv^{(I(k))}-\hat{\mu}(z))^T(\rvv^{(I(k))}-\hat{\mu}(z)) ]$, and $p(\rvv;\mu,\Sigma)$ computes the likelihood of $\rvv$ given Gaussian $\gN(\mu,\Sigma)$ as $p(\rvv;\mu,\Sigma)=\exp\left( -1/2 (\rvv-\mu)^T \Sigma^{-1} (\rvv-\mu)\right)/{\sqrt{(2\pi)^H |\Sigma|}}$.
The EM optimization in \Cref{eq:opt} essentially searches for $\hat{K}_b$ such that the characteristic vectors of $\hat{K}_b$ clients with the lowest maliciousness scores (higher probability of being benign) exhibit a strong alignment with the benign normal distribution, and conversely,  the characteristic vectors of the remaining clients (more likely to be malicious) show a decreased likelihood of fitting the benign normal distribution. Note that the derived estimate of $\hat{K}_b$ can be utilized as the input parameter $\tilde{K}_b$ in subsequent iterations, allowing for the refinement of the estimation through recursive applications of the EM optimization process.

\subsection{Precision of malicious client number estimator}
%\vspace{-0.5em}
In this part, we theoretically show the precision of benign client number estimate in \Cref{eq:opt}. 
% We demonstrate that the estimator can accurately predict the number of benign clients with high probability. 
\begin{theorem}[Precision of malicious client number estimator]
\label{thm2}
    Assume $\rvv^{(k)}~(k \in [K_b])$ are IID sampled from Gaussian $\gN(\mu,\Sigma)$ with mean $\mu \in \sR^H~(H \ge 2)$ and positive definite covariance matrix $\Sigma \in \sR^{H \times H}$. Let $d=\min_{k \in [K]\backslash [K_b]} \| \rvv^{(k)} - \mu \|_2$. 
    Consider EM optimization as \Cref{eq:opt} and an initial guess of benign client number $\tilde{K}_b$ such that $K_m<\tilde{K}_b\le K_b$.
    Then we have:
   \begin{equation}
   \small
   \label{eq:thm2}
   \begin{aligned}
        \sP\left[ \hat{K}_m = K_m \right] \ge& 1 - \dfrac{(3\tilde{K}_b-K_m-2)^2\text{Tr}(\Sigma)}{(\tilde{K}_b-K_m)^2d^2} \\ &- \dfrac{2(K+K_b) \text{Tr}(\Sigma)\sigma^2_{\text{max}}(\Sigma^{-1/2})}{\sigma^2_{\text{min}}(\Sigma^{-1/2}) d^2}
   \end{aligned}
    \end{equation}
    where $\sigma_{\text{max}}(\Sigma^{-1/2})$, $\sigma_{\text{min}}(\Sigma^{-1/2})$ denote the maximal and minimal eigenvalue of matrix $\Sigma^{-1/2}$, and $\text{Tr}(\Sigma)$ denotes the trace of matrix $\Sigma$.
\end{theorem}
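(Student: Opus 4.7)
\textbf{Proof plan for \Cref{thm2}.} The strategy is to lower-bound $\sP[\hat{K}_b = K_b]$, which via $\hat{K}_m = K - \hat{K}_b$ immediately yields the stated bound on $\sP[\hat{K}_m = K_m]$. Denoting the EM objective in \Cref{eq:opt} by $G(z)$, I need to show $G(K_b) > G(z)$ for every $z \neq K_b$ with probability at least the right-hand side of \Cref{eq:thm2}. The analysis splits into two regimes: \emph{underestimation} ($z < K_b$), where the group indexed $k > z$ in $G$ contains at least one benign vector whose unduly large likelihood pulls $G(z)$ down; and \emph{overestimation} ($z > K_b$), where the group indexed $k \leq z$ contains at least one malicious vector whose small likelihood has the same effect. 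Both reductions hinge on a log-likelihood gap that I establish in the next two steps.

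\emph{Step 1 --- concentration of the plug-in Gaussian.} For IID draws from $\gN(\mu,\Sigma)$, the sample mean over $m$ draws satisfies $\mathbb{E}\|\hat{\mu}-\mu\|_2^2 = \Tr(\Sigma)/m$, so Chebyshev gives $\sP[\|\hat{\mu}-\mu\|_2 \geq t] \leq \Tr(\Sigma)/(m t^2)$, and a matrix-Chebyshev argument yields an analogous bound for $\hat{\Sigma}$. The assumption $\tilde{K}_b > K_m$ guarantees that the top-$\tilde{K}_b$ positions of the \name maliciousness ranking $I(\cdot)$ must contain at least $\tilde{K}_b - K_m$ benign clients, supplying the pool of truly benign samples over which Chebyshev applies. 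A triangle-inequality decomposition of $\hat{\mu}(z) - \mu$ across the possible misassignments at a given $z \leq \tilde{K}_b$ will contribute the first term of \Cref{eq:thm2}, with the prefactor $(3\tilde{K}_b - K_m - 2)^2/(\tilde{K}_b - K_m)^2$ accounting for at most three concentrated blocks summed via triangle inequalities.

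\emph{Step 2 --- log-likelihood separation.} Writing
\begin{equation*}
\log p(\rvv;\mu,\Sigma) = -\tfrac{1}{2}\|\Sigma^{-1/2}(\rvv-\mu)\|_2^2 - \tfrac{H}{2}\log(2\pi) - \tfrac{1}{2}\log|\Sigma|,
\end{equation*}
a benign vector contributes a quadratic form of expected magnitude $H$, whereas any vector at distance at least $d$ from $\mu$ contributes a quadratic of size at least $\sigma_{\min}^2(\Sigma^{-1/2})\, d^2$. There is therefore a baseline log-likelihood gap of order $\sigma_{\min}^2(\Sigma^{-1/2})\, d^2$ between benign and malicious clients. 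Replacing $(\mu,\Sigma)$ by the plug-in $(\hat{\mu}(z),\hat{\Sigma}(z))$ inflates the quadratic by a multiplicative factor $\sigma_{\max}^2(\Sigma^{-1/2})/\sigma_{\min}^2(\Sigma^{-1/2})$ and introduces a perturbation proportional to $\|\hat{\mu}(z)-\mu\|_2$ that is again $O(\Tr(\Sigma)/m)$ in expectation; union-bounding this perturbation event over the at most $K + K_b$ relevant client-estimate pairs yields the second term of \Cref{eq:thm2}.

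\emph{Step 3 --- argmax at $z = K_b$ and main obstacle.} On the intersection of the concentration events from Steps 1--2, any $z \neq K_b$ suffers a net loss in $G(z)$ relative to $G(K_b)$ of at least the log-likelihood gap from Step 2 (one mismatched client is enough in either direction), so $G$ is argmaxed at $K_b$, forcing $\hat{K}_m = K_m$. The chief difficulty is the coupling between the \name ranking $I(\cdot)$ and the random vectors $\{\rvv^{(k)}\}$: both are functions of the same samples, so a naive union bound over all possible choices of which benign clients occupy the top slots is loose. The remedy I would use is to first condition on the high-probability event that every single benign client lies within $d/2$ of $\mu$ (so that ranking ambiguities cannot push a benign client below a malicious one in norm), and then run Steps 1--2 conditionally. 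The blow-up of the first term as $\tilde{K}_b \downarrow K_m$ through the factor $(\tilde{K}_b - K_m)^2$ in the denominator confirms that this ranking-coupling is the tight bottleneck, and that the quality of the initial guess $\tilde{K}_b$ directly controls the achievable precision.
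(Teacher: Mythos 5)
Your high-level architecture (reduce to $\hat{K}_b=K_b$, split the failure modes into underestimation and overestimation, and intersect a "ranking is correct" event with a log-likelihood-separation event) matches the paper's proof. But two of your quantitative steps have genuine gaps. First, your remedy for the ranking/sample coupling — conditioning on every benign vector lying within $d/2$ of $\mu$ — is not sufficient, because the maliciousness score of a malicious client is an average over its $\tilde{K}_b-1$ nearest neighbors, and up to $K_m-1$ of those neighbors can be other malicious clients sitting at distance zero from it. On your event one only gets $M(m)\ge \frac{\tilde{K}_b-K_m}{\tilde{K}_b-1}\cdot\frac{d}{2}$ while a benign client can have $M(b)$ as large as $d$, so the scores need not separate and the ranking $I(\cdot)$ can misorder clients. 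The paper fixes this by choosing the concentration radius $r=\frac{\tilde{K}_b-K_m}{3\tilde{K}_b-K_m-2}\,d$ exactly so that the benign upper bound $2r$ meets the malicious lower bound $\frac{\tilde{K}_b-K_m}{\tilde{K}_b-1}(d-r)$; the first term of the theorem is precisely the (Chebyshev-type) failure probability of this per-client concentration event, not a sample-mean/"misassignment block" error as in your Step 1 — your triangle-inequality heuristic with "three concentrated blocks" does not produce the $(3\tilde{K}_b-K_m-2)^2/(\tilde{K}_b-K_m)^2$ prefactor and leaves the ranking step unproved.

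Second, your Step 2 runs the likelihood comparison with the plug-in pair $(\hat{\mu}(z),\hat{\Sigma}(z))$ and proposes to control $\hat{\Sigma}(z)$ by a matrix-Chebyshev argument with an inflation factor $\sigma_{\max}^2(\Sigma^{-1/2})/\sigma_{\min}^2(\Sigma^{-1/2})$. With only $z\le K_b$ client vectors in $H$ dimensions the sample covariance can be singular or arbitrarily ill-conditioned, so bounding the plug-in quadratic form this way does not go through as stated and would require substantially stronger tools (or extra assumptions) than anything yielding the stated bound. The paper sidesteps this entirely: its proof of the second term compares the objective $T(z)$ evaluated with the \emph{true} $(\mu,\Sigma)$, drops a positive term, rearranges the quadratic forms, applies the eigenvalue sandwich $\sigma_{\min}(\Sigma^{-1/2})\|d_k\|_2\le\|d_k^{T}\Sigma^{-1/2}\|_2\le\sigma_{\max}(\Sigma^{-1/2})\|d_k\|_2$, and bounds $\sP\bigl[\max_{k\in[K_b]}\|d_k\|_2>\sqrt{K_b/(K+K_b)}\,\sigma_{\min}(\Sigma^{-1/2})d/\sigma_{\max}(\Sigma^{-1/2})\bigr]$ by Gaussian concentration; the $(K+K_b)$ factor arises from this threshold, not from a union bound over client--estimate pairs. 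So while your plan's skeleton is the right one, the two steps that actually generate the constants in \Cref{eq:thm2} are either misattributed or would fail as written.
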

%\vspace{-0.5em}

\begin{remark}
\textbf{\underline{(R1)}} The lower bound in \Cref{eq:thm2} rises as the minimal distance between the malicious characterization vector to the benign mean $\mu$ (i.e., $d$) increases. The lower bound asymptotically approaches $1$ with a sufficiently large $d$.
    % The lower bound in \Cref{eq:thm2} positively correlates with the minimal distance between the malicious characterization vector to the mean $\mu$ and is asymptotically close to $1$ when the minimal distance $d$ is sufficiently large. 
    It implies that when the malicious characterization vector is far away from the benign cluster (i.e., a large $d$), the malicious client number estimator has a high precision. 
   \textbf{\underline{(R2)}}
    The lower bound in \Cref{eq:thm2} also shows that when the initial guess $\tilde{K}_b$ is closer to $K_b$, the lower bound of estimate precision is higher, demonstrating the effectiveness of iterative EM optimization with \Cref{eq:opt}. 
    \textbf{\underline{(R3)}} Note that the condition of the initial guess $K_m<\tilde{K}_b<K_b$ is satisfiable by simply setting $\tilde{K}_b=\lceil K/2 \rceil$. 
    % Furthermore, considering a simple diagonal covariance matrix $\Sigma$, the malicious client number estimator is accurate with higher probability when the distribution of benign characterization vectors is an isotropic multivariate Gaussian (i.e., a smaller $\sigma_{\text{max}(\Sigma^{-1/2})}/\sigma_{\text{min}}(\Sigma^{-1/2})$). Since we treat $H$ dimensions of characterization vectors equally, a uniform variance across dimensions can avoid instability and downgraded capacity of maliciousness identification induced by dimensions with particularly high variances.
\end{remark}

\textit{Proof sketch.} We first analyze the tail bound of the multivariate normal distribution as \citep{vershynin2018high}, and then derive the probabilistic relationships between the maliciousness scores of benign clients and those of malicious clients using the tail bounds. We finally upper bound the probability of overestimation and underestimation by opening up the probability formulations.
We defer the complete proof to \Cref{app:proof_thm}.

%\vspace{-0.5em}
\section{Experiments}
%\vspace{-0.5em}

% In this section, we empirically evaluate \name in the setting of Byzantine federated conformal prediction. As a summary, we find that (1) 

\begin{table*}[t]
    \centering
    \small
    \caption{Marginal coverage / average set size under different Byzantine attacks with $40\%~(K_m/K=40\%)$ malicious clients. The desired marginal coverage is $0.9$. The Dirichlet parameter $\beta$ is $0.5$.
    % The results whose coverage rates closer to the all-benign-client scenario (shown in~\Cref{tab:benign}) are in bold.
    Results that more closely align with those observed in an all-benign-client scenario (provided in \Cref{tab:benign} in \Cref{app:res}) are highlighted in bold.
    % Note that a smaller prediction set is not inherently preferable in this context; our primary aim is to approximate the prediction set that would be expected in the absence of any malicious clients.
    % Note that here, it is not always the case that a smaller prediction set is better, as our goal is to identify the prediction set that would have been issued, when no client is malicious.
    }
    \vspace{-0.7em}
    \resizebox{1.0\textwidth}{!}{\begin{tabular}{c|cc|cc|cc}
    \toprule
       \multicolumn{1}{c}{Byzantine Attack} & \multicolumn{2}{c}{Coverage Attack} & \multicolumn{2}{c}{Efficiency Attack} & \multicolumn{2}{c}{Gaussian Attack}  \\
        \multicolumn{1}{c}{Method}  & FCP & \name & FCP & \name & FCP & \name \\
     \midrule
     % \multirow{5}{*}{\rotatebox{90}{IID}} & MNIST & 0.832 / 0.834 & \textbf{0.899} / {0.903} & 1.000 / 10.00 & \textbf{0.901} / {0.907} & 0.979 / 1.025 & \textbf{0.908} / {0.913} \\
     % &CIFAR-10 & 0.831 / 1.189 & \textbf{0.906} / {1.641} & 1.000 / 10.00  & \textbf{0.902} / {1.617} & 0.916 / 1.733 & \textbf{0.899} / {1.609} \\
     % &T-ImageNet & 0.830 / 12.97 & \textbf{0.898} / {21.65} & 1.000 / 200.0  & \textbf{0.903} / {22.63} & 0.918 / 25.69 & \textbf{0.906} / {24.15} \\
     % &SHHS & 0.834 / 1.093 & \textbf{0.899} / {1.354} & 1.000 / 6.000 & \textbf{0.900} / {1.359} & 0.937 / 1.611 & \textbf{0.901} / {1.366} \\
     % &PathMNIST & 0.840 / 0.997 & \textbf{0.907} / {1.259} & 1.000 / 9.000 & \textbf{0.901} / {1.228} & 1.000 / 6.632 & \textbf{0.909} / {1.275}\\
     % \midrule
     MNIST & 0.805 / 1.284 & \textbf{0.899} / \textbf{1.783} & 1.000 / 10.00 & \textbf{0.902} / \textbf{1.804} &  0.941 / 2.227 & \textbf{0.923} / \textbf{2.182} \\
     CIFAR-10 & 0.829 / 1.758 & \textbf{0.897} / \textbf{2.319}  & 1.000 / 10.00  & \textbf{0.892} / \textbf{2.351} & 0.970 / 3.863 & \textbf{0.921} / \textbf{2.623} \\
     Tiny-ImageNet & 0.825 / 27.84 & \textbf{0.903} / \textbf{43.47}  & 1.000 / 200.0  & \textbf{0.904} / \textbf{43.68} & 0.942 / 61.50 & \textbf{0.928} / \textbf{54.91} \\
     SHHS & 0.835 / 1.095 & \textbf{0.901} / \textbf{1.365} & 1.000 / 6.000 & \textbf{0.901} / \textbf{1.366} & 0.937 / 1.609 & \textbf{0.900} / \textbf{1.359}\\
     PathMNIST & 0.837 / 1.055 & \textbf{0.900} / \textbf{1.355} & 1.000 / 9.000 & \textbf{0.900} / \textbf{1.344} & 1.000 / 6.935 & \textbf{0.926} / \textbf{1.585}\\
    \bottomrule
    \end{tabular}}
    \label{tab:exp_main}
    \vspace{-1em}
\end{table*}

\subsection{Experiment setup}
%\vspace{-0.5em}

\paragraph{Datasets} We evaluate \name on a variety of standard datasets, including MNIST \citep{deng2012mnist}, CIFAR-10 \citep{cifar}, and Tiny-ImageNet \citep{le2015tiny}.
Our evaluation of \name also cover two realistic healthcare datasets: the Sleep Heart Health Study (SHHS) dataset \citep{zhang2018national} and a pathology dataset PathMNIST \citep{medmnistv2}. 
% \zl{
% SHHS (the Sleep Heart Health Study) is a large scale multi-center study to determine consequences of sleep-disordered breathing.
% We use the EEG recordings from SHHS for the sleep-staging task, where every 30-second-epoch is classified into Wake, N1, N2, N3 and REM stages.
% There are 2,514 patients (2,543,550 samples) used for calibration and testing.
% PathMNIST consists of 107,180 hematoxylin and eosin stained histological images.
% }
%SHHS train/val/test 2011/503/2514 patients
% 2037151/508718/2543550 samples
%PathMNIST: ADI, adipose tissue; BACK, background; CRC, colorectal cancer; DEB, debris; HE, hematoxylin–eosin; LYM, lymphocytes; MUC, mucus; MUS, smooth muscle; NCT, National Center for Tumor Diseases; NORM, normal colon mucosa; STR, cancer-associated stroma; TUM, colorectal adenocarcinoma epithelium
%\cx{Zhen, please add more details about the two health datasets, including what type of data, volume, target and feature dimension. Perhpas we need to add a table to summarize basis info about all data used in evaluation.}
\vspace{-0.7em}
\reb{
\paragraph{Data partition in federated conformal prediction}
Our approach of data partition adheres to the standard federated learning evaluation framework by using the Dirichlet distribution to create different label ratios across clients \citep{yurochkin2019bayesian,lin2020ensemble,wang2020federated,gao2022feddc}.
% We follow the standard evaluation setup of non-IID federated learning by sampling different label ratios for different clients from the Dirichlet distribution as the literature \citep{yurochkin2019bayesian,lin2020ensemble,wang2020federated,gao2022feddc}. 
Concretely, we sample $p_{c,j} \sim \text{Dir}(\beta)$ and allocate a portion of $p_{c,j}$ instances with class $c$ to the client $j$, where $\text{Dir}(\cdot)$ denotes the Dirichlet distribution and $\beta$ is a concentration parameter ($\beta > 0$), controlling the degree of data heterogeneity among clients.
A lower $\beta$ value results in a more heterogeneous data distribution. 
By default, we set $\beta$ to 0.5 to establish a consistent level of data heterogeneity. 
 Additionally, we explore alternative methods for generating heterogeneous data that reflect demographic variations. We segment the SHHS dataset based on five attributes (wake time, N1, N2, N3, REM), distributing instances to clients based on differing attribute intervals, thereby introducing another dimension of data diversity.
% We also consider alternative approaches to construct non-IID data with demographic differences. Concretely, we split the SHHS dataset by sorting five different attributes (wake time, N1, N2, N3, REM) and allocating instances within different intervals to different clients. 
% For example of the attribute of wake time, we first sort the instances according to the wake time of patients and then assign them to 100 clients in sequence. Therefore, some clients are assigned patients with an early wake time, and some are assigned instances with a late wake time.
}
% \noindent\textbf{Training and evaluation strategy.}
% In the IID setting, we randomly partitioned the datasets into local datasets of multiple clients and further split them into a local training set and a conformal calibration set.
% In the non-IID setting, except for SHHS\footnote{{For SHHS, we assign the patients to different clients according to the proportion of their time being awake.
% }}, we partition the datasets by sampling the proportion of each label from Dirichlet distribution for every agent, following the literature \citep{li2022federated}.
% We pretrain the models with standard FedAvg algorithm \citep{mcmahan2016federated} and perform conformal prediction with nonconformity scores LAC \citep{sadinle2019least} and APS \citep{romano2020classification}. 
% We use the LAC score by default across evaluations.
\vspace{-0.7em}
\reb{
\paragraph{Byzantine attacks} To evaluate the robustness of \name in the Byzantine setting, we conducted comparisons with the baseline FCP \citep{lu2023federated} under three types of Byzantine attacks: (1) \textit{coverage attack} (CovAttack) involves malicious clients reporting maximized conformity scores (e.g., $1$ for LAC score \citep{sadinle2019least}) to artificially inflate the conformity score at the targeted quantile, resulting in reduced coverage; (2) \textit{efficiency attack} (EffAttack) involves malicious clients submitting minimized conformity scores (e.g., score of $0$ for LAC score) to lower the conformity score at the quantile, thereby expanding the prediction set; (3) Gaussian Attack (GauAttack) involves malicious clients dispersing random Gaussian noise with a standard deviation of $0.5$ into the scores, thereby disrupting the conformal calibration process. 
}
% and (4) copy attack (CopAttack) \citep{karimireddy2022byzantinerobust} which randomly mimics the statistics of certain clients in the non-IID setting to induce a mismatch between calibration distribution and global distribution and downgrade the conformal performance.
\vspace{-0.7em}
\paragraph{Evaluation metric} We consider the global test data set $\gD_{\text{test}}=\{(X_i,Y_i)\}_{i=1}^{N_{\text{test}}}$. We notate $C_\alpha(X_i)$ as the conformal prediction set given test sample $X_i$ and consider the desired coverage level $1-\alpha$. We evaluate with the metrics of \textit{marginal coverage} $ \sum_{i=1}^{N_{\text{test}}} \sI\left[ Y_i \in C_\alpha(X_i) \right] /{N_{\text{test}}}$ and \textit{average set size} $\sum_{i=1}^{N_{\text{test}}} \left| C_\alpha(X_i) \right| /{N_{\text{test}}}$.
Without specification, the desired coverage level $1-\alpha$ is set $0.9$.
We provide more details of experiment setups in \Cref{sec:app_exp_set}.

The codes to reproduce all the evaluation results are publicly available at \url{https://github.com/kangmintong/Rob-FCP}.

%\vspace{-1em}
\subsection{Evaluation results}
%\vspace{-0.5em}

\begin{figure*}[t]
\centering
\begin{minipage}{0.49\linewidth}
 	\centerline{\includegraphics[width=1.0\textwidth]{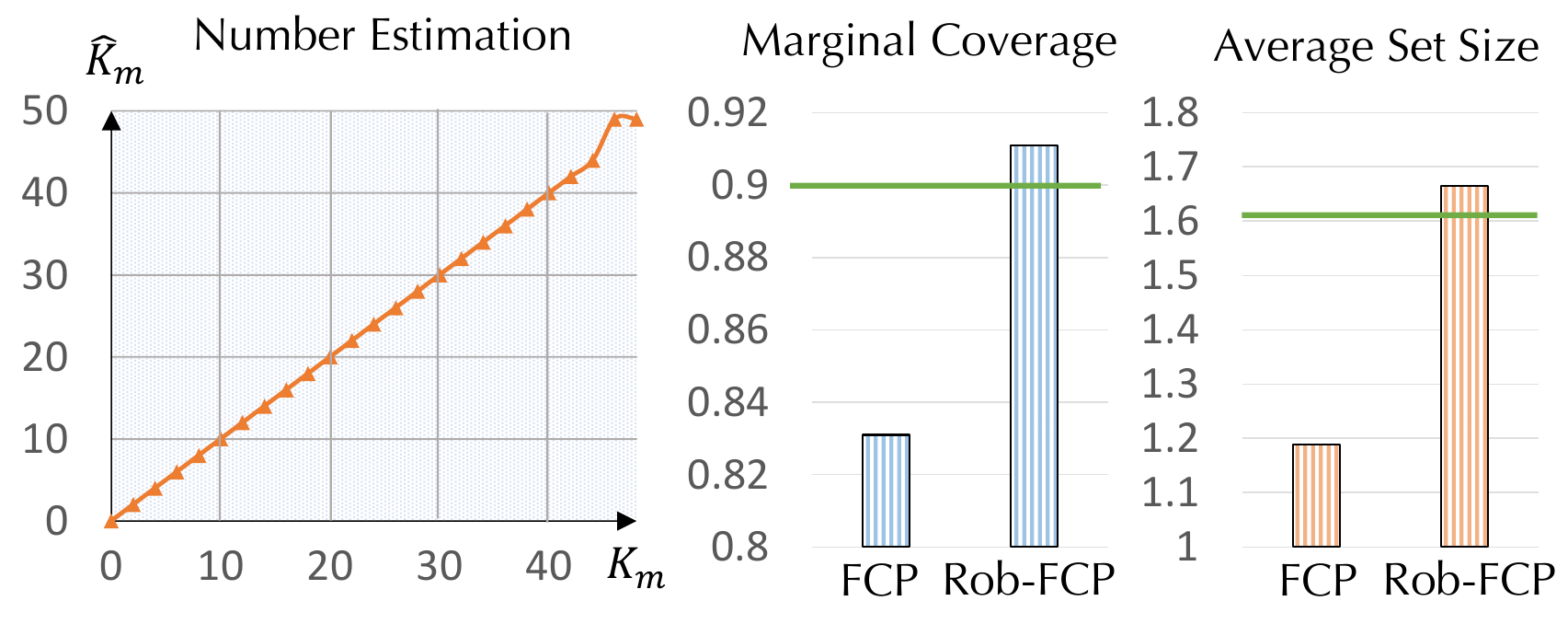}}
   \centerline{\footnotesize{(a) CIFAR-10}}
 \end{minipage}
 \hfill
 \begin{minipage}{0.49\linewidth}
 	\centerline{\includegraphics[width=1.0\textwidth]{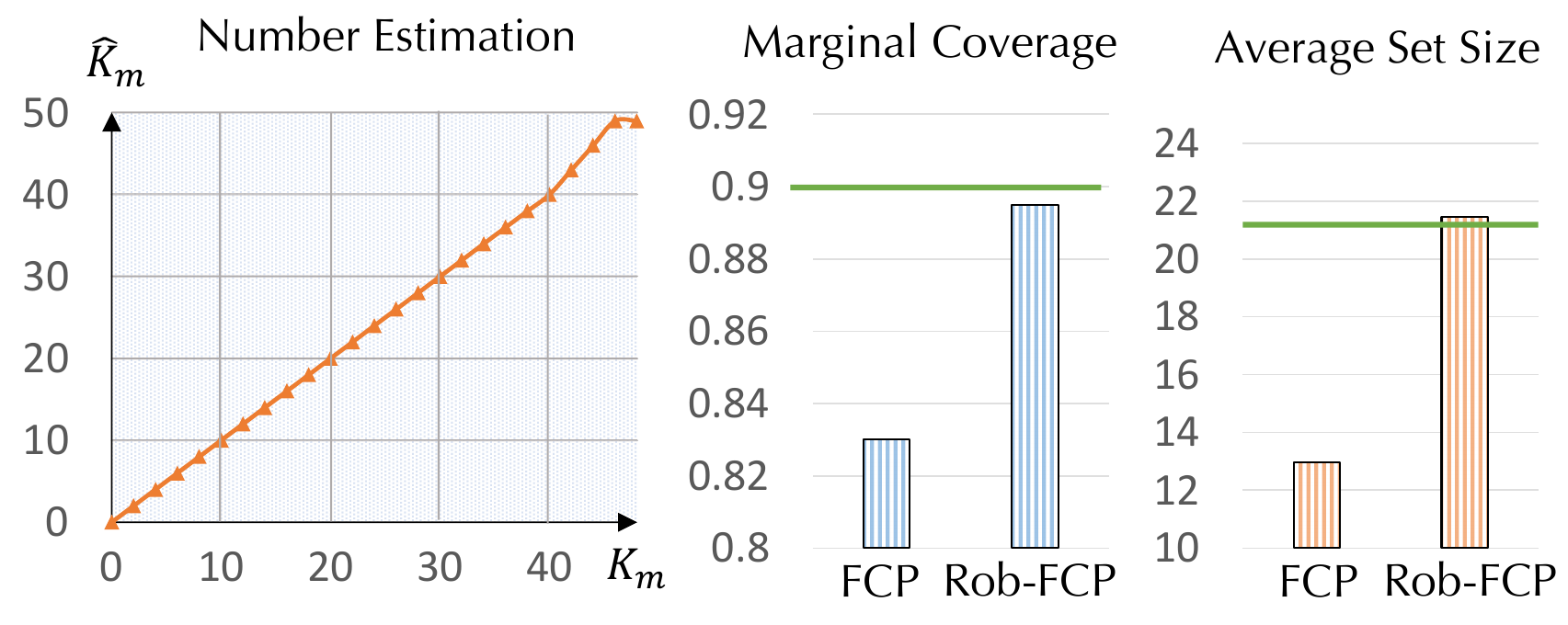}}
  \centerline{\footnotesize{(b) Tiny-ImageNet}}
 \end{minipage}
 \vspace{-0.7em}
\caption{Results of malicious client number estimation and conformal prediction performance in the setting with unknown numbers of malicious clients. The green horizontal line denotes the benign conformal performance.
\name estimates the number of malicious clients faithfully, and provides an empirical coverage rate matching the target (benign level).
}
\label{fig:exp_num_est}
\vspace{-1em}
\end{figure*}

\paragraph{Byzantine robustness of \name} 
We evaluate \name in terms of marginal coverage and average set size under coverage attack, efficiency attack, and Gaussian attacks, and compare these results against the baseline FCP.
We present the results of FCP and \name in the existence of $40\%~(K_m/K=40\%)$ malicious clients on MNIST, CIFAR-10, Tiny-ImageNet, SHHS, and PathMNIST in \Cref{tab:exp_main}.
Under Byzantine attacks, FCP shows a significant deviation from the targeted coverage level of $0.9$ and the expected benign set size.
In contrast, \name maintains comparable levels of marginal coverage and average set size, underlining its robustness.
% Note that while in general a smaller prediction set is preferred, here the underlying global model is the same, so the same coverage always corresponds to the same set size. 
% The goal here is, however, to identify the correct threshold that achieves our original coverage target (90\%).
Note that while a smaller prediction set is generally preferred for efficiency, the primary objective here is to accurately meet the targeted coverage level of $0.9$.
Further results on the resilience of \name against different portions of malicious clients (30\%, 20\%, and 10\%) are provided in \Cref{table:appendix:known} in \Cref{app:res}.
In \Cref{tab:mimick} of \Cref{app:res}, we empirically demonstrate the robustness of \name against mimick attacks \cite{karimireddy2022byzantinerobust}, which operate under a more restricted threat model that relies on knowing the score statistics of agents.
\vspace{-0.7em}
\paragraph{\name with unknown numbers of malicious clients} 
% In \Cref{sec:mal_ubk}, we consider a more challenging Byzantine setting where the number of malicious clients is unknown to the defender 
In \Cref{sec:mal_ubk}, we explore a complex Byzantine scenario where the exact count of malicious clients is not known to the defender. To address this challenge, we introduce an estimator designed to predict the number of malicious participants accurately, with theoretical guarantee as \Cref{thm2}.
Our evaluation focuses on assessing the precision of this malicious client number estimator and examining the conformal prediction performance of \name within this uncertain environment.
The results in \Cref{fig:exp_num_est} reveal that our estimator ($\hat{K}_m$) closely approximates the actual number of malicious clients ($K_m$), leading to a marginal coverage and average set size close to the benign level.
Further results across all five datasets, under a variety of Byzantine attacks, are detailed in \Cref{table:appendix:unknown} in \Cref{app:res}, confirming the effectiveness of the malicious client number estimator.

% \newpage

\begin{table*}[t]
    \centering
    \caption{
    \reb{Marginal coverage / average set size across varying levels of data heterogeneity, controlled by different Dirichlet parameter $\beta$. The evaluation is done under coverage attack with $40\%~(K_m/K=40\%)$ malicious clients. The desired coverage level is $0.9$. Results that more closely align with those observed in an all-benign-client scenario (provided in \Cref{tab:benign}) are highlighted in bold. }}
    \label{tab:different_beta}
    \vspace{-0.2em}
     \resizebox{1.0\textwidth}{!}{\begin{tabular}{cc|ccccc}
    \toprule
     Dataset & Method & $\beta=0.1$ & $\beta=0.3$ & $\beta=0.5$ & $\beta=0.7$ & $\beta=0.9$ \\
     \midrule
      \multirow{2}{*}{MNIST} &    FCP & 0.780 / 1.173 &  0.817 / 1.318 & 0.833 / 1.384 & 0.805 / 1.265 &  0.828 / 1.363 \\
     & \name & \textbf{0.899} / \textbf{1.806} & \textbf{0.905} / \textbf{1.809} & \textbf{0.903} / \textbf{1.827} &  \textbf{0.898} / \textbf{1.781} & \textbf{0.893} / \textbf{1.768} \\
     \hline
     \multirow{2}{*}{CIFAR-10} & FCP & 0.806 / 1.641 & 0.821 / 1.717 & 0.836 / 1.791 & 0.823 / 1.744 & 0.824 / 1.723 \\
    & \name & \textbf{0.899} / \textbf{2.260} &  \textbf{0.907} / \textbf{2.405} & \textbf{0.892} / \textbf{2.243} & \textbf{0.904} / \textbf{2.396} & \textbf{0.910} / \textbf{2.416} \\
    \hline
    \multirow{2}{*}{Tiny-ImageNet} & FCP & 0.840 / 28.625 & 0.830 / 28.192 &  0.833 / 28.340 &  0.821 / 27.140 & 0.831 / 28.751 \\
    & \name  & \textbf{0.913} / \textbf{45.872} & \textbf{0.910} / \textbf{44.972} & \textbf{0.898} / \textbf{42.571} & \textbf{0.887} / \textbf{41.219} & \textbf{0.898} / \textbf{43.298} \\
    \hline
    \multirow{2}{*}{PathMNIST} & FCP  & 0.850 / 1.106 & 0.839 / 1.065 & 0.837 / 1.055 & 0.839 / 1.065	 & 0.832 / 1.043 \\
    & \name & \textbf{0.895} / \textbf{1.311} & \textbf{0.900} / \textbf{1.355} & \textbf{0.900} / \textbf{1.355} & \textbf{0.899} / \textbf{1.354} & \textbf{0.901} / \textbf{1.363} \\
    \bottomrule
    \end{tabular}}
    \vspace{-0.7em}
\end{table*}

\begin{figure}[t]
    \centering
    %\vspace{-0.3em}
\includegraphics[width=1.0\linewidth]{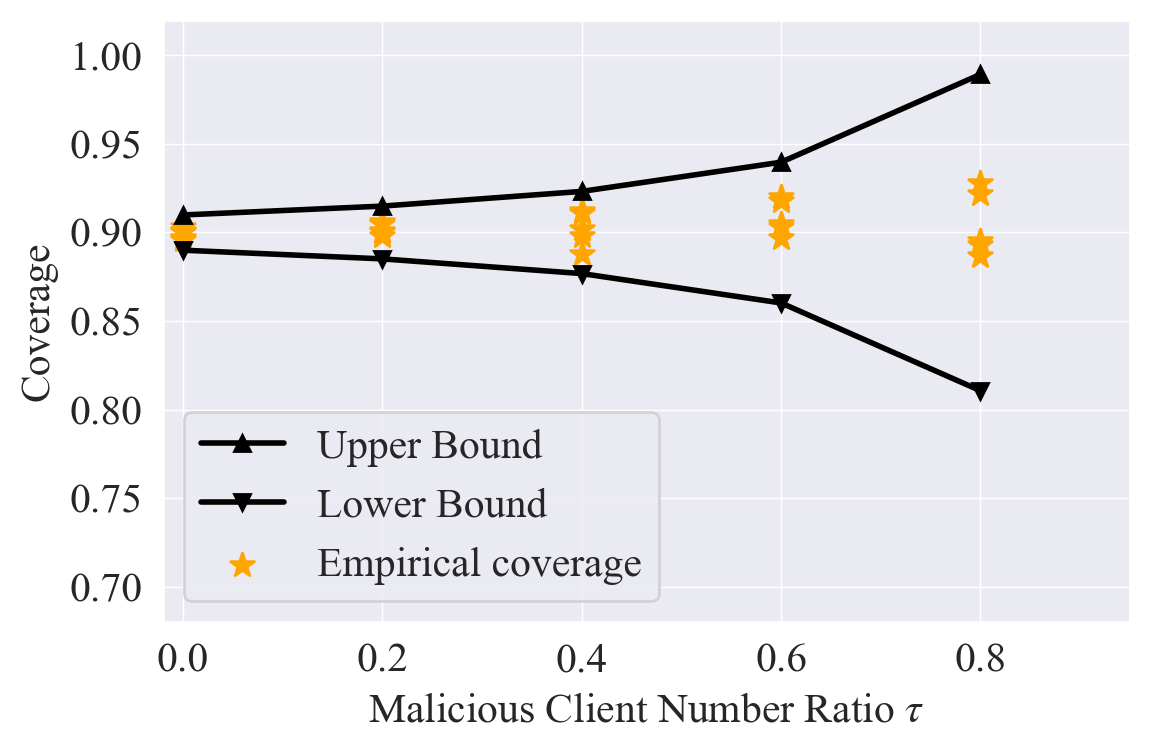}
\vspace{-1.6em}
    \caption{Upper and lower bounds of prediction coverage of \name by \Cref{thm1:improve} on Tiny-ImageNet.}
    \vspace{-1.0em}
    \label{fig:bound}
\end{figure}
\vspace{-0.95em}
\paragraph{Validation of coverage bounds of \name} In \Cref{thm1:improve}, we provide both the lower and upper bound of the coverage rate of \name, considering the ratio of malicious clients ($\tau=K_m/K_b$) and the sample sizes across clients.
% as a function regarding the malicious client number ratio $\tau=K_m/K_b$ and sample sizes of clients. 
In \Cref{fig:bound}, we compare these theoretical bounds of coverage rate against the observed empirical marginal coverage under Gaussian attacks on Tiny-ImageNet. 
The results demonstrate the validity and tightness of the certified coverage bounds in \Cref{thm1:improve}.
% The discrepancy between the theoretical bounds and empirical results increases with larger values of $\tau$, attributed to finite-sample errors exacerbated by a high $\tau$, as shown in \Cref{eq:coro_improve}. Nonetheless, with increasing sample sizes, these bounds gradually converge towards the target coverage rate of $0.9$. 

\subsection{Ablation study}
\label{sec:abla}
\paragraph{Robustness of \name across varying levels of data heterogeneity} Data heterogeneity among clients poses significant challenges to achieving precise federated conformal prediction.
To assess the resilience of \name to this issue, we conducted evaluations using various values of the Dirichlet parameter $\beta$, which modulates the degree of data heterogeneity among clients.
The results in \Cref{tab:different_beta} show that \name reliably maintains marginal coverage and average set size at levels close to the benign levels, underscoring its robustness in the face of data heterogeneity.
Furthermore, we investigate additional approaches to create heterogeneous data that mirror demographic differences. This involves dividing the SHHS dataset according to five specific attributes (wake time, N1, N2, N3, REM) and allocating instances to clients based on varying intervals of these attributes. The results in \Cref{tab:shhs_noniid} in \Cref{app:res} highlight \name's capability to effectively handle diverse forms of data heterogeneity.
\vspace{-0.7em}
\paragraph{Ablation study on conformity score distribution characterization methods} 
% One key step in \name is to characterize the conformity score distribution based on empirical observations. We adopt the histogram statistics approach as \Cref{eq:score2vec}. One can also sketch the score samples with cluster centers by clustering algorithms such as KMeans. Another alternative is to use a parametric approach such as fitting a Gaussian distribution to the score samples and characterizing them with the Gaussian mean and variances. We empirically compare different approaches in \Cref{fig:abl_method_charact} and \Cref{fig:abl_method_noniid} in \Cref{app:res} show that the histogram statistics approach achieves the best performance. We also provide ablation studies of different distance measurements in \Cref{fig:abl_dis} in \Cref{app:res}.
A pivotal aspect of \name involves the characterization of the conformity score distribution through empirical data. Our primary method utilizes histogram statistics as outlined in \Cref{eq:score2vec}. Alternatively, one could represent score samples using cluster centers derived from clustering algorithms like KMeans, or employ a parametric method such as fitting the score samples to a Gaussian distribution and characterizing them by the mean and variances of the Gaussian. Our empirical comparison of these methods, presented in \Cref{fig:abl_method_charact} and \Cref{fig:abl_method_noniid} within \Cref{app:res}, reveals that the histogram statistics approach yields superior performance. Additional ablation studies focusing on various distance measurement techniques are provided in \Cref{fig:abl_dis} in \Cref{app:res}.
\vspace{-0.7em}
\paragraph{Robustness of \name with various conformity scores} Besides applying LAC nonconformity scores, we also evaluate \name with APS conformity scores \citep{romano2020classification}.
The results in \Cref{fig:abl_cov_iid,fig:abl_cov_noniid,fig:abl_eff_iid,fig:abl_eff_noniid,fig:abl_gauss_iid,fig:abl_gauss_noniid} in \Cref{app:res} demonstrate the resilience of \name to the selection of conformity scores.
We also evaluate the runtime and show the efficiency of \name in \Cref{tab:runtime} in \Cref{app:res}.

% \textbf{Ablation study of the distance measurement.} 
% In \name, we need to compute the distance between characterization vectors with measurement $d(\cdot,\cdot)$. We evaluate \name with $\ell_1$, $\ell_2$, $\ell_\infty$-norm based vector distance as \Cref{eq:dis} and an alternative Cosine similarity in \Cref{fig:abl_dis} in \Cref{app:abl}. The results show that the effectiveness of \name is agnostic to these commonly used distance measurements. We adopt $\ell_2$-norm vector distance for consistency across evaluations.

%\vspace{-1em}

\section{Related work}
%\vspace{-0.5em}
\textbf{Conformal prediction} is a statistical tool to construct the prediction set with guaranteed prediction coverage \citep{jin2023sensitivity,solari2022multi,yang2021finite,romano2020classification,barber2021predictive,kang2024c,kang2024colep}, assuming exchangeable data.
% Due to the increasing amount of large-scale datasets and privacy concerns of sharing local data in domains such as healthcare and finance, federated learning serves as a workaround to collaboratively develop a global model with distributed data samples. 
\reb{Recently, \textbf{federated conformal prediction} (FCP) \citep{lu2021distribution,lu2023federated} adapts the conformal prediction to the federated learning and provides a rigorous guarantee of the distributed uncertainty quantification framework.} 
DP-FCP \citep{plassier2023conformal} proposes federated CP with differential privacy guarantees and provides valid coverage bounds under label shifting among clients.
\reb{
\citeauthor{humbert2023one} propose a quantile-of-quantiles estimator for federated conformal prediction with a one-round communication and provide a locally differentially private version.
}
WFCP \citep{zhu2023federated} applies FCP to wireless communication.
However, no prior works explore the robustness of FCP against Byzantine agents which can report malicious statistics to downgrade the conformal prediction performance. We are the first to propose a robust FCP method with valid and tight coverage guarantees.

\textbf{Byzantine learning} \citep{driscoll2003byzantine,awerbuch2002demand,lamport2019byzantine} refers to methods that can robustly aggregate updates from potentially malicious or faulty worker nodes in the distributed setting. Specifically, a line of works \citep{guerraoui2018hidden,pillutla2022robust,data2021byzantine,karimireddy2020byzantine,yi2022robust} studies the resilience to Byzantine failures of distributed implementations of Stochastic Gradient Descent (SGD) and proposes different metrics to identify malicious gradients such as gradient norm \citep{blanchard2017machine} and coordinate-wise trimmed mean \citep{yin2018byzantine}. However, the metrics are designed for the stability and convergence of distributed optimization and cannot be applied to the Byzantine FCP setting to provide rigorous coverage guarantees. In contrast, we propose \name to perform Byzantine-robust distributed uncertainty quantification and provide valid and tight coverage bounds theoretically.

\begin{figure}[t]
    \centering
    \includegraphics[width=\linewidth]{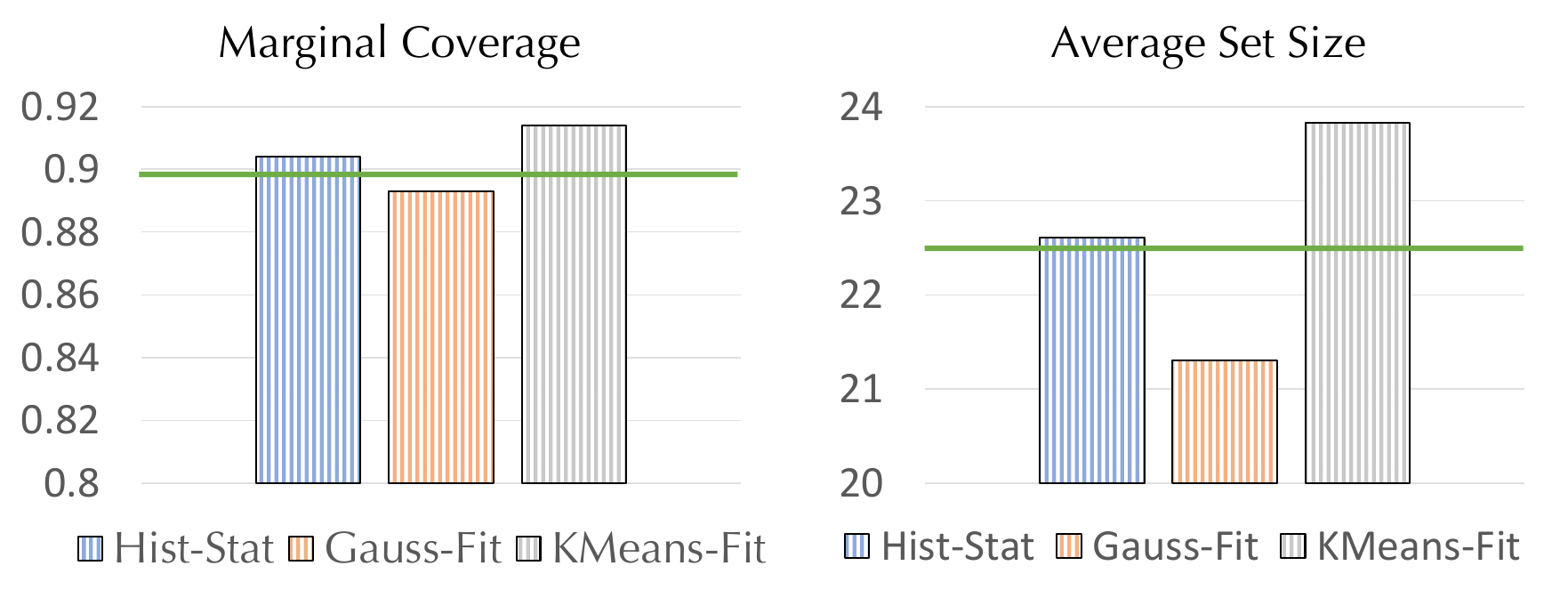}
    \vspace{-1.1em}
    \caption{Marginal coverage / average set size under coverage attack with 40\% malicious clients on Tiny-ImageNet. The green horizontal line denotes the benign marginal coverage and average set size without any malicious clients.}
    \label{fig:abl_method_charact}
    \vspace{-1.3em}
\end{figure}

%\vspace{-1em}
\section{Conclusion}
%\vspace{-0.5em}
In this paper, we propose \name, a certifiably Byzantine-robust federated conformal prediction algorithm with rigorous coverage guarantees.
\name sketches the local samples of conformity scores with characterization vectors and detects the malicious clients in the vector space. We empirically show the robustness of \name against Byzantine failures on five datasets and validate the theoretical coverage bounds.

\section*{Acknowledgements}

This work is supported by the National Science Foundation under grant No. 1910100, No. 2046726, No. 2229876, DARPA GARD, the National Aeronautics and Space Administration (NASA) under grant No. 80NSSC20M0229, the Alfred P. Sloan Fellowship, the Amazon research award, and the eBay research award.
This work is supported by NSF award SCH-2205289, SCH-2014438, IIS-1838042, NIH award R01 1R01NS107291-01.

\section*{Impact Statement}

We do not see potential ethical or societal issues about \name. In contrast, \name is a robust framework against malicious clients in the federated conformal prediction settings and can safeguard the applications of FCP in safety-critical scenarios such as healthcare and medical diagnosis.

% \textbf{Reproducibility statement.} The reproducibility of \name span theoretical and experimental perspectives. We provide complete proofs of all the theoretical results in \Cref{app:proof} and include insightful proof sketches in the main texts. We include implementation details in \Cref{sec:app_exp_set} and upload the source codes for implementing \name and reproducing experiment results as supplementary materials.

\bibliography{iclr2024_conference}
\bibliographystyle{icml2024}

\newpage
\appendix
\onecolumn

\DoToC

\newpage
\section{Limitations and future works}

One possible limitation of \name may lie in the restriction of the targeted Byzantine threat model. We mainly consider the Byzantine setting where a certain ratio of malicious clients reports arbitrary conformity score statistics. In such a Byzantine case, the break point is $\lceil K/2 \rceil$, indicating that any algorithm cannot tolerate $\lceil K/2 \rceil$ or more malicious clients. However, in practice, malicious clients have the flexibility of only manipulating partial conformity scores. In this case, the potential break point is a function of the maximal ratio of manipulated scores for each client and can be larger than $\lceil K/2 \rceil$. Therefore, it is interesting for future work to analyze the break point of robust FCP algorithms with respect to the total manipulation sizes and budgets of manipulation sizes for each client.
Another threat model worthy of exploration in future work is the adversarial setting in FCP. In the adversarial setting, malicious clients can only manipulate the data samples instead of the conformity scores to downgrade the FCP performance. Therefore, potential defenses can consider adversarial conformal training procedures to collaboratively train a robust FCP model against perturbations in the data space.

\reb{
To provide differential privacy guarantees of Rob-FCP, one practical approach is to add privacy-preserving noises to the characterization vectors before uploading them to the server. Essentially, we can view the characterization vector as the gradient in the setting of FL with differential privacy (DP) and add Gaussian noises to the characterization vector with differential privacy guarantees as a function of the scale of noises, which can be achieved by drawing analogy from the FL with DP setting \citep{zheng2021federated,andrew2021differentially,zhang2022understanding}. Therefore, practically implementing the differential-private version of Rob-FCP is possible and straightforward.
}

% \section{Overview of \name}
% \label{app:figure}

% We provide an overview of \name in \Cref{fig1}.
% \begin{figure}[t]
%     \centering
%     \includegraphics[width=0.7\linewidth]{figs/fig_rob_fcp.pdf}
%     %\vspace{-1em}
%     \caption{Overview of \name.
%     In this example, we have a non-REM stage 1 (N1) EEG recording, which is a hard and rare class often confused with non-REM stage 2 (N2).}
%     \label{fig1}
%     %\vspace{-1.5em}
% \end{figure}

\section{Additional related work}
\label{app:related}
\textbf{Byzantine learning} \citep{driscoll2003byzantine,awerbuch2002demand,lamport2019byzantine} refers to methods that can robustly aggregate updates from potentially malicious or faulty worker nodes in the distributed setting. Specifically, a line of works \citep{guerraoui2018hidden,pillutla2022robust,data2021byzantine,karimireddy2020byzantine,yi2022robust} studies the resilience to Byzantine failures of distributed implementations of Stochastic Gradient Descent (SGD) and proposes different metrics to identify malicious gradients such as gradient norm \citep{blanchard2017machine} and coordinate-wise trimmed mean \citep{yin2018byzantine}. However, the metrics are designed for the stability and convergence of distributed optimization and cannot be applied to the Byzantine FCP setting to provide rigorous coverage guarantees. In contrast, we propose \name to perform Byzantine-robust distributed uncertainty quantification and provide valid and tight coverage bounds theoretically.

\section{Omitted proofs}
\label{app:proof}
\subsection{Proof of \Cref{thm1:improve}}
\label{app:proof_thm1}

Before proving \Cref{thm1:improve}, we first prove the following lemma.
\begin{lemma}
\label{thm1}
    For $K$ clients including $K_b$ benign clients and $K_m:=K-K_b$ malicious clients, each client reports a characterization vector $\rvv^{(k)} \in \Delta^H$ $(k \in [K])$ and a quantity $n_k \in \mathbb{Z}^+$ $(k \in [K])$ to the server. 
    Suppose that the reported characterization vectors of benign clients are sampled from the same underlying multinomial distribution $\gD$, while those of malicious clients can be arbitrary. 
    Let $\epsilon$ be the estimation error of the data sketching by characterization vectors as illustrated in \Cref{eq:marginal_eps}.
    Under the assumption that $K_m < K_b$, the following holds with probability $1-\beta$:
   \begin{equation}
    \begin{aligned}
        & \sP\left[ Y_{\text{test}} \in \hat{C}_\alpha(X_{\text{test}})\right] \ge 1-\alpha -\dfrac{\epsilon n_b + 1}{n_b+K_b} - \dfrac{H\Phi^{-1}({1-\beta/2HK_b})}{2 \sqrt{n_b}} \left( 1+\dfrac{N_m}{n_b}\dfrac{2}{1-\tau} \right), \\
        & \sP\left[ Y_{\text{test}} \in \hat{C}_\alpha(X_{\text{test}})\right] \le 1-\alpha + \epsilon + \dfrac{K_b}{n_b+K_b} + \dfrac{H\Phi^{-1}({1-\beta/2HK_b})}{2 \sqrt{n_b}} \left( 1+\dfrac{N_m}{n_b}\dfrac{2}{1-\tau} \right).
    \end{aligned}
    \end{equation}
    where $\tau = K_m / K_b$ is the ratio of the number of malicious clients and the number of benign clients, $N_m:=\sum_{k \in [K]\backslash[K_b]} n_k$ is the total sample size of malicious clients, $n_b:=\min_{k'\in[K_b]}n_{k'}$ is the minimal sample size of benign clients, and $\Phi^{-1}(\cdot)$ denotes the inverse of the cumulative distribution function (CDF) of standard normal distribution.
\end{lemma}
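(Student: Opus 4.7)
The plan is to reduce the Byzantine coverage bound to three ingredients: (i) concentration of the benign characterization vectors around the shared multinomial mean, (ii) control of which characterization vectors end up in the selected set $\gB_{\text{\name}}$ and how far they can deviate from that mean, and (iii) translation of the resulting error on the aggregated characterization vector into an error on the empirical quantile, after which the standard FCP guarantee in \Cref{eq:marginal_eps} closes the argument.

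For step (i), I would observe that each coordinate $\rvv^{(k)}_h$ of a benign client ($k\in[K_b]$) is the average of $n_k$ i.i.d.\ Bernoulli indicators whose mean is $\overline{\rvv}_h$ (the common event probability under $\gD$). The binomial proportion confidence interval gives
\begin{equation*}
\bigl|\rvv^{(k)}_h - \overline{\rvv}_h\bigr| \;\le\; \frac{\Phi^{-1}(1-\beta/(2HK_b))}{2\sqrt{n_k}}
\end{equation*}
with probability $1-\beta/(HK_b)$. A union bound over $h\in[H]$ and $k\in[K_b]$ shows that, with probability $\ge 1-\beta$, every benign vector satisfies $\|\rvv^{(k)}-\overline{\rvv}\|_1 \le \delta := H\,\Phi^{-1}(1-\beta/(2HK_b))/(2\sqrt{n_b})$. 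Condition on this event throughout.

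For step (ii), use that all benign vectors lie in an $\ell_1$-ball of radius $\delta$ around $\overline{\rvv}$, so the pairwise distance between any two benign clients is at most $2\delta$, and hence the maliciousness score of each benign client is at most $2\delta$. Since $K_m<K_b$, at most $K_m$ benign clients can be displaced from $\gB_{\text{\name}}$ by malicious clients. For any malicious client $k'$ that is selected, its maliciousness score (average $\ell_p$ distance to its $K_b-1$ nearest neighbors) is no larger than that of the worst selected benign client, and a short triangle-inequality argument then bounds $\|\rvv^{(k')}-\overline{\rvv}\|_1$ by a constant multiple of $\delta$ (this is where the factor $2/(1-\tau)$ enters, because the number of benign neighbors available to a selected malicious client is $K_b-K_m=(1-\tau)K_b$).

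Step (iii) aggregates: let $\hat{\rvv} := \sum_{k\in\gB_{\text{\name}}} n_k \rvv^{(k)} / \sum_{k\in\gB_{\text{\name}}} n_k$ be the sample-size-weighted aggregate and $\overline{\rvv}$ its benign counterpart. Split $\gB_{\text{\name}}$ into its benign part (contributing at most $\delta$ each, weighted by $n_k\le n_b$-scale) and its malicious part (contributing at most the bound from step (ii), with total weight $\le N_m$); the resulting $\ell_1$ error on the aggregated histogram is at most $\delta(1 + \tfrac{N_m}{n_b}\cdot\tfrac{2}{1-\tau})$. Because the CDF difference between two histograms is bounded by their $\ell_1$ distance, the empirical $(1-\alpha)$-quantile computed from $\hat{\rvv}$ differs from the true benign quantile by at most this amount, i.e.\ the rank of $\hat{q}_\alpha$ lies within this band of $(1-\alpha)(n_b+K_b)$. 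Plugging this quantile error into the FCP bound \Cref{eq:marginal_eps} (with $N$ replaced by the benign total and $K$ by $K_b$) yields the two inequalities in the lemma.

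The main obstacle is step (ii): quantifying the ``worst'' malicious vector that can survive the nearest-neighbor maliciousness-score filter. The delicate point is that the averaging in $M(k)$ involves $K_b-1$ neighbors, only $(1-\tau)K_b-1$ of which are guaranteed to be benign in the worst case, so a careful book-keeping between benign-to-benign and benign-to-malicious distances is needed to recover the $2/(1-\tau)$ factor tightly. Once that factor is in hand, steps (i) and (iii) are essentially bookkeeping with union bounds and triangle inequalities.
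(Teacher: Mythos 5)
Your proposal follows essentially the same route as the paper's proof: coordinate-wise binomial proportion confidence intervals plus a union bound to get an $\ell_1$-radius $\delta$ around the benign mean, a case analysis of the selected set in which a selected malicious vector is bounded via the maliciousness-score comparison and triangle inequalities (producing the $\tfrac{2}{1-\tau}$ factor weighted by $N_m/n_b$), and finally bounding the CDF discrepancy by the $\ell_1$ distance of the aggregated histogram and invoking the FCP sketching guarantee with the benign counts. The only small correction: in step (ii) the selected malicious client's score should be compared against an \emph{excluded} benign client (which must exist when a malicious client is selected), not the ``worst selected benign client''; since every benign client's score is at most $2\delta$, your numerical bound is unaffected and matches the paper's.
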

\begin{proof}
    The proof consists of 3 parts: (a) concentration analysis of the characterization vectors $\rvv^{(k)}$ for benign clients ($1\le k \le K_b$), (b) analysis of the algorithm of the identification of malicious clients, and (c) analysis of the error of the coverage bound.

    \textbf{Part (a)}: concentration analysis of the characterization vectors $\rvv^{(k)}$ for benign clients ($1\le k \le K_b$).

    Let $\rvv_h^{(k)}$ be the $h$-th element of vector $\rvv^{(k)}$. 
    By definition, since $\rvv^{(k)}$ is sampled from a multinomial distribution, $\rvv_h^{(k)}$ denotes the success rate estimate of a Bernoulli distribution.
    We denote the event probabilities of the multinomial distribution $\gD$ as $\overline{\rvv}$.
    Therefore, the true success rate of the Bernoulli distribution at the $h$-th position is $\overline{\rvv}_h$.
    According to \reb{the binomial proportion confidence interval \cite{wallis2013binomial}}, we have:
    \begin{equation}
    \label{eq:bpci}
        \sP\left[ \left| \rvv^{(k)}_h - \overline{\rvv}_h \right| > \Phi^{-1}({1-\beta/2HK_b}) \dfrac{\sqrt{n_{ks} n_{kf}}}{n_k \sqrt{n_k}} \right] \le \beta/HK_b,
    \end{equation}
    where $\beta/HK_b$ is the probability confidence, $\Phi^{-1}(\cdot)$ denotes the inverse of the CDF of the standard normal distribution, and $n_{ks}$ and $n_{kf}:=n_k-n_{ks}$ are the number of success and failures in $n_k$ Bernoulli trials, respectively.
    Applying the inequality $n_{ks}n_{kf} \le n_k^2/4$ in \Cref{eq:bpci}, the following holds:
    \begin{equation}
        \sP\left[ \left| \rvv^{(k)}_h - \overline{\rvv}_h \right| > \dfrac{\Phi^{-1}({1-\beta/2HK_b})}{2 \sqrt{n_k}} \right] \le \beta/HK_b.
    \end{equation}
    Applying the union bound for $H$ elements in vector $\rvv^{(k)}$ and $K_b$ characterization vectors of benign clients, the following holds with probability $1-\beta$:
    \begin{equation}
        \left| \rvv^{(k)}_h - \overline{\rvv}_h \right| \le \dfrac{\Phi^{-1}({1-\beta/2HK_b})}{2 \sqrt{\min_{k'\in[K_b]}n_{k'}}},~\forall k \in [K_b],~\forall h\in [H],
    \end{equation}
    from which we can derive the bound of difference for $\ell_1$ norm distance as:
    \begin{equation}
        \left\| \rvv^{(k)} - \overline{\rvv} \right\|_1 \le r(\beta):= \dfrac{H\Phi^{-1}({1-\beta/2HK_b})}{2 \sqrt{\min_{k'\in[K_b]}n_{k'}}},~\forall k \in [K_b],
    \end{equation}
    where $r(\beta)$ is the perturbation radius of random vector $\rvv$ given confidence level $1-\beta$.
    $\forall k_1,k_2 \in [K_b]$, the following holds with probability $1-\beta$ due to the triangular inequality:
    \begin{align}
        &\left\| \rvv^{(k_1)} - \rvv^{(k_2)} \right\|_1 \le \left\| \rvv^{(k_1)} - \overline{\rvv} \right\|_1 + \left\| \rvv^{(k_2)} - \overline{\rvv} \right\|_1 \le 2r(\beta).
    \end{align}
    Furthermore, due to the fact that $\|\rvv\|_p \le \|\rvv\|_1$ for any integer $p\ge1$, the following holds with probability $1-\beta$:
    \begin{align}
        &\left\| \rvv^{(k)} - \overline{\rvv} \right\|_p \le \left\| \rvv^{(k)} - \overline{\rvv} \right\|_1 \le r(\beta), \label{eq:benign_ineq} \\
        &\left\| \rvv^{(k_1)} - \rvv^{(k_2)} \right\|_p \le \left\| \rvv^{(k_1)} - \rvv^{(k_2)} \right\|_1 \le 2r(\beta). \label{eq:benign_ineq_2}
    \end{align}
% \bo{is there a potential relationship between $alpha$ and $\beta$?}
% \mintong{$\alpha$ denotes the predefined coverage level, while $\beta$ characterizes the confidence level of the concentration of the conformity scores of IID samples. There may not exist direct connections between the two quantities.}

    \textbf{Part (b)}: analysis of the algorithm of the identification of malicious clients.

    Let $N(k,n)$ be the set of the index of $n$ nearest clients to the $k$-th client based on the metrics of $\ell_p$ norm distance in the space of characterization vectors. Then the maliciousness scores $M(k)$ for the $k$-th client $(k \in [K])$ can be defined as:
    \begin{equation}
        M(k) := \dfrac{1}{K_b-1} \sum_{k' \in N(k,K_b-1)} \left\| \rvv^{(k)} - \rvv^{(k')} \right\|_p.
    \end{equation}
    Let $\gB$ be the set of the index of benign clients identified by \Cref{alg:rob_fl_confinf} by selecting the clients associated with the lowest $K_b$ maliciousness scores.
    We will consider the following cases separately: (1) $\gB$ contains exactly $K_b$ benign clients, and (2) $\gB$ contains at least one malicious client indexed by $m$.

    \textit{Case (1)}: $\gB$ $(|\gB|=K_b)$ contains exactly $K_b$ benign clients. We can derive as follows:
    \begin{align}
        \left\| \sum_{k=1}^{K_b} \dfrac{n_k}{N_b} \rvv^{(k)} - \overline{\rvv} \right\|_p &\le  \sum_{k=1}^{K_b} \dfrac{n_k}{N_b} \left\| \rvv^{(k)} - \overline{\rvv} \right\|_p && \text{[triangular inequality]} \\
        &\le \sum_{k=1}^{K_b} \dfrac{n_k}{N_b} r(\beta) && \text{[by \Cref{eq:benign_ineq}]} \\
        &= r(\beta),
    \end{align}
    where $N_b:=\sum_{k \in [K_b]} n_k$ is the total sample size of benign clients.

    \textit{Case (2)}: $\gB$ $(|\gB|=K_b)$ contains at least one malicious client indexed by $m$. Since we assume $K_m < K_b$, there are at most $K_b-1$ malicious clients in $\gB$. Therefore, there is at least $1$ benign client in $[K] \backslash \gB$ indexed by $b$.
    We can derive the lower bound of the maliciousness score for the $m$-th client $M(m)$ as:
    \begin{align}
        M(m) &= \dfrac{1}{K_b-1} \sum_{k' \in N(m,K_b-1)} \left\| \rvv^{(m)} - \rvv^{(k')} \right\|_p \\
        &\ge \dfrac{1}{K_b-1} \sum_{k' \in N(m,K_b-1), k' \in [K_b]} \left\| \rvv^{(m)} - \rvv^{(k')} \right\|_p.\label{ineq:relax}
    \end{align}
    % The inequality in \Cref{ineq:relax} holds since the lower bound of $\ell_p$ norm distance between malicious clients is $0$. 
    Since there are at least $K_b-K_m$ benign clients in $\gB$ (there are at most $K_m$ malicious clients in $\gB$), there exists one client indexed by $b_b~(b_b \in \gB)$ such that:
    \begin{equation}
        \left\| \rvv^{(m)} - \rvv^{(b_b)} \right\|_p \le \dfrac{(K_b-1)M(m)}{K_b-K_m} \label{eq:fac1}
    \end{equation}
    
    We can derive the upper bound of the maliciousness score for the $b$-th benign client $M(b)$ as:
    \begin{align}
        M(b) &= \dfrac{1}{K_b-1} \sum_{k' \in N(b,K_b-1)} \left\| \rvv^{(b)} - \rvv^{(k')} \right\|_p \\
        &\le 2r(\beta) && \text{[by \Cref{eq:benign_ineq_2}]} \label{eq:fac2}
    \end{align}

    Since the $m$-th client is included in $\gB$ and identified as a benign client, while the $b$-th client is not in $\gB$, the following holds according to the procedure in \Cref{alg:rob_fl_confinf}:
    \begin{equation}
        M(b) \ge M(m),
    \end{equation}
    from which we can derive the following by combining \Cref{eq:fac1} and \Cref{eq:fac2}:
    \begin{align}
        \left\| \rvv^{(m)} - \rvv^{(b_b)} \right\|_p \le \dfrac{(K_b-1)2r(\beta)}{K_b-K_m}
    \end{align}
    Then, we can derive the upper bound of $\left\| \rvv^{(m)} - \overline{\rvv} \right\|_p,~ \forall m \in \gB ~ \text{and} ~ K_b < m \le K$ as follows:
    \begin{align}
        \left\| \rvv^{(m)} - \overline{\rvv} \right\|_p &\le \left\| \rvv^{(m)} - \rvv^{(b_b)} \right\|_p + \left\| \rvv^{(b_b)} - \overline{\rvv} \right\|_p \\
        &\le \dfrac{2(K_b-1)r(\beta)}{K_b-K_m} + r(\beta)
    \end{align}

    Finally, we can derive as follows:
    \begin{align}
         \left\| \sum_{k \in \gB} \dfrac{n_k}{N_\gB} \rvv^{(k)} - \overline{\rvv} \right\|_p &\le  \sum_{k \in \gB} \dfrac{n_k}{N_\gB} \left\| \rvv^{(k)} - \overline{\rvv} \right\|_p \\
         &\le \sum_{k \in \gB, k \in [K_b]} \dfrac{n_k}{N_\gB} \left\| \rvv^{(k)} - \overline{\rvv} \right\|_p + \sum_{k \in \gB, k \in [K]\backslash[K_b]} \dfrac{n_k}{N_\gB} \left\| \rvv^{(k)} - \overline{\rvv} \right\|_p \\
         &\le \sum_{k \in \gB, k \in [K_b]} \dfrac{n_k}{N_\gB} r(\beta) + \sum_{k \in \gB, k \in [K]\backslash[K_b]} \dfrac{n_k}{N_\gB} \left[  \dfrac{2(K_b-1)r(\beta)}{K_b-K_m} + r(\beta) \right] \\
         &\le r(\beta) + \sum_{k \in \gB, k \in [K]\backslash[K_b]} \dfrac{n_k}{N_\gB} \dfrac{2(K_b-1)r(\beta)}{K_b-K_m} \\
         &\le r(\beta) \left( 1 + \dfrac{N_m}{\min_{k' \in [K_b]}n_{k'}}\dfrac{2}{1-\tau} \right),
    \end{align}
    where $N_m:=\sum_{k\in[K]\backslash[K_b]}n_k$ is the total sample size of malicious clients, $N_\gB$ is the total sample size of clients in $\gB$, and $\tau:=\dfrac{K_m}{K_b}$ is the ratio of the number of malicious clients to the number of benign clients.

    Combining \textit{case (1)} and \textit{case (2)}, we can conclude that:
    \begin{align}
        \left\| \sum_{k \in \gB} \dfrac{n_k}{N_\gB} \rvv^{(k)} - \overline{\rvv} \right\|_p &\le \max \left\{1,  1 + \dfrac{N_m}{\min_{k' \in [K_b]}n_{k'}}\dfrac{2}{1-\tau} \right\} r(\beta) \\
        &= \left(  1 + \dfrac{N_m}{\min_{k' \in [K_b]}n_{k'}}\dfrac{2}{1-\tau} \right) r(\beta)
    \end{align}

    \textbf{Part (c)}: analysis of the error of the coverage bound. In this part, we attempt to translate the error of aggregated vectors induced by malicious clients to the error of the bound of marginal coverage.
    Let $F_1(q,\rvv):=\sum_{j=1}^H \mathbb{I}\left[ a_j < q \right] \rvv_j$, where $q \in [0,1]$ and $a_j$ is the $j$-th partition point used to construct the characterization vector $\rvv \in \Delta^H$.
    Let $F_2(q,\rvv):=\sum_{j=1}^H \mathbb{I}\left[ a_{j-1} < q \right] \rvv_j$.
    Then by definition, we know that $F_1(q_\alpha,\overline{\rvv}) \le \sP\left[ Y_{\text{test}} \in \hat{C}_\alpha(X_{\text{test}})\right]  \le F_2(q_\alpha,\overline{\rvv})$, where $q_\alpha$ is the true $(1-\alpha)$ quantile value of the non-conformity scores,  $\overline{\rvv}$ is the event probability of the multinormial distribution $\gD$, and $\hat{C}_\alpha(X_{\text{test}})$ is the conformal prediction set of input $X_{\text{test}}$ using the true benign calibrated conformity score $q_\alpha$ and statistics of score distribution $\overline{\rvv}$.

    Let $\hat{q}_\alpha$ be the quantile estimate during calibration.
    FCP \citep{lu2023federated} proves that if the rank of quantile estimate $\hat{q}_\alpha$ is between $(1-\alpha-\epsilon)(N+K)$ and $(1-\alpha+\epsilon)(N+K)$, then we have:
    \begin{equation}
        F_1(\hat{q}_\alpha,\overline{\rvv}) \ge 1-\alpha -\dfrac{\epsilon N_\gB + 1}{N_\gB+K_b},~~ F_2(\hat{q}_\alpha,\overline{\rvv})\le 1-\alpha + \epsilon +\dfrac{K_b}{N_\gB+K_b}.
        \label{eq:term1}
    \end{equation}

    Now we start deriving the error of $F_1(\cdot,\cdot)$ induced by the malicious clients. 
    Let $\hat{\rvv}:=\sum_{k \in \gB} \dfrac{n_k}{N} \rvv^{(k)}$ be the estimated mean of characterization vector.
    Based on the results in part (b), we can derive as follows:
    \begin{align}
        \left| F_1(\hat{q}_\alpha,\overline{\rvv}) - F_1(\hat{q}_\alpha,\hat{\rvv}) \right| &= \left| \sum_{j=1}^H \mathbb{I}\left[ a_j < \hat{q}_\alpha \right] \overline{\rvv}_j - \sum_{j=1}^H \mathbb{I}\left[ a_j < \hat{q}_\alpha \right] \hat{\rvv}_j  \right| \\
        &\le \sum_{j=1}^H \mathbb{I}\left[ a_j < \hat{q}_\alpha \right] \left| \overline{\rvv}_j -  \hat{\rvv}_j \right| \\
        &\le \left\| \overline{\rvv} - \hat{\rvv} \right\|_1 \\
        &\le \left( 1+\dfrac{N_m}{\min_{k' \in [K_b] n_{k'}}}\dfrac{2}{1-\tau} \right) r(\beta) \label{eq:term2}
    \end{align}
    From triangular inequalities, we have:
    \begin{equation}
         F_1(\hat{q}_\alpha,\overline{\rvv}) - \left| F_1(\hat{q}_\alpha,\overline{\rvv}) - F_1(\hat{q}_\alpha,\hat{\rvv}) \right| \le F_1(\hat{q}_\alpha,\hat{\rvv}) \le \sP\left[ Y_{\text{test}} \in \hat{C}_\alpha(X_{\text{test}})\right].
    \end{equation}
    Similarly, we can derive that $\left| F_2(\hat{q}_\alpha,\overline{\rvv}) - F_2(\hat{q}_\alpha,\hat{\rvv}) \right| \le \left( 1+\dfrac{N_m}{\min_{k' \in [K_b] n_{k'}}}\dfrac{2}{1-\tau} \right) r(\beta)$ and have:
    \begin{equation}
         F_2(\hat{q}_\alpha,\overline{\rvv}) + \left| F_2(\hat{q}_\alpha,\overline{\rvv}) - F_2(\hat{q}_\alpha,\hat{\rvv}) \right| \ge F_2(\hat{q}_\alpha,\hat{\rvv}) \ge \sP\left[ Y_{\text{test}} \in \hat{C}_\alpha(X_{\text{test}})\right].
    \end{equation}
    Plugging in the terms in \Cref{eq:term1,eq:term2} and leveraging the fact $N_\gB \ge n_b$, we finally conclude that the following holds with probability $1-\beta$:
    \begin{equation}
    \begin{aligned}
        & \sP\left[ Y_{\text{test}} \in \hat{C}_\alpha(X_{\text{test}})\right] \ge 1-\alpha -\dfrac{\epsilon n_b + 1}{n_b+K_b} - \dfrac{H\Phi^{-1}({1-\beta/2HK_b})}{2 \sqrt{n_b}} \left( 1+\dfrac{N_m}{n_b}\dfrac{2}{1-\tau} \right), \\
        & \sP\left[ Y_{\text{test}} \in \hat{C}_\alpha(X_{\text{test}})\right] \le 1-\alpha + \epsilon + \dfrac{K_b}{n_b+K_b} + \dfrac{H\Phi^{-1}({1-\beta/2HK_b})}{2 \sqrt{n_b}} \left( 1+\dfrac{N_m}{n_b}\dfrac{2}{1-\tau} \right).
    \end{aligned}
    \end{equation}
    where $\tau = K_m / K_b$ is the ratio of the number of malicious clients and the number of benign clients, $N_m:=\sum_{k \in [K]\backslash[K_b]} n_k$ is the total sample size of malicious clients, $n_b:=\min_{k'\in[K_b]}n_{k'}$ is the minimal sample size of benign clients, and $\Phi^{-1}(\cdot)$ denotes the inverse of the cumulative distribution function (CDF) of standard normal distribution.

\end{proof}

Next, we start proving \Cref{thm1:improve}.
\begin{theorem}[Restatement of \Cref{thm1:improve}]
Consider FCP setting with $K_b$ benign clients and $K_m$ malicious clients.
    % For $K$ clients including $K_b$ benign clients and $K_m:=K-K_b$ malicious clients, 
    The $k$-th client reports the characterization vector $\rvv^{(k)}$ and local sample size $n_k$ to the server. 
    % Suppose that the reported characterization vectors of benign clients are sampled from the same underlying multinomial distribution $\gD$, while those of malicious clients can be arbitrary. 
    Assume that the benign characterization vector $\rvv^{(k)}$ is sampled from multinomial distribution $\gD_k$ with the event probability $\overline{\rvv}^{(k)}$ for the $k$-th client ($k \in [K_b]$).
    We use $\sigma$ to quantify the heterogeneity of benign vectors as  $\sigma = \max_{k_1,k_2 \in [K_b]}\| \overline{\rvv}^{(k_1)} - \overline{\rvv}^{(k_2)} \|_1$.
    Let $\epsilon$ be the data sketching error as \Cref{eq:marginal_eps}.
    Under the assumption that $K_m < K_b$, the following holds for test instance $(X_{\text{t}},Y_{\text{t}})$ with probability $1-\beta$:
    \begin{equation}
    \small
    \begin{aligned}
        & \sP\hspace{-0.2em}\left[ Y_{\text{t}} \hspace{-0.2em} \in \hspace{-0.2em} \hat{C}_\alpha(X_{\text{t}})\right] \hspace{-0.1em} \ge \hspace{-0.1em}  1 - \hspace{-0.1em} \alpha \hspace{-0.1em} - \hspace{-0.1em} P_{\text{byz}} \hspace{-0.1em} -  \hspace{-0.1em} \dfrac{N_m\sigma}{n_b(1-\tau)} \hspace{-0.1em} - \hspace{-0.1em} \dfrac{\epsilon n_b + 1}{n_b+K_b} \\
        & 
       \sP\hspace{-0.2em}\left[ Y_{\text{t}} \hspace{-0.2em} \in \hspace{-0.2em} \hat{C}_\alpha(X_{\text{t}})\right] \hspace{-0.1em} \le \hspace{-0.1em} 1 \hspace{-0.1em} - \hspace{-0.1em} \alpha \hspace{-0.1em} + \hspace{-0.1em} P_{\text{byz}} \hspace{-0.1em} + \hspace{-0.1em} \dfrac{N_m\sigma}{n_b(1-\tau)} \hspace{-0.1em} + \hspace{-0.1em} \dfrac{\epsilon n_b + (\epsilon+1) K_b}{n_b+K_b} \\
        & \text{where} \quad P_{\text{byz}} = \dfrac{H\Phi^{-1}({1-\beta/2HK_b})}{2 \sqrt{n_b}} \left( 1+\dfrac{N_m}{n_b}\dfrac{2}{1-\tau} \right)
    \end{aligned}
    \end{equation}
    where $\tau = K_m / K_b$ is the ratio of the number of malicious clients and the number of benign clients, \reb{$N_m:=\sum_{k \in [K]\backslash[K_b]} n_k$ is the total sample size of malicious clients}, $n_b:=\min_{k'\in[K_b]}n_{k'}$ is the minimal sample size of benign clients, and $\Phi^{-1}(\cdot)$ denotes the inverse of the cumulative distribution function (CDF) of standard normal distribution.
\end{theorem}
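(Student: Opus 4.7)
The plan is to adapt the three-part argument of the preceding Lemma (concentration, identification, coverage translation) to the heterogeneous benign setting in which each benign client $k\in[K_b]$ samples $\rvv^{(k)}$ from its own multinomial $\gD_k$ with event probability $\overline{\rvv}^{(k)}$. Define the weighted reference vector $\overline{\rvv}:=\sum_{k\in[K_b]}(n_k/N_b)\overline{\rvv}^{(k)}$, and note that by the triangle inequality $\|\overline{\rvv}^{(k)}-\overline{\rvv}\|_1\le\sigma$ for every $k\in[K_b]$. First, I would redo Part (a) of the Lemma client-by-client: applying the binomial proportion confidence interval to each coordinate of $\rvv^{(k)}$ around its own $\overline{\rvv}^{(k)}$ and taking a union bound over $H$ coordinates and $K_b$ benign clients yields $\|\rvv^{(k)}-\overline{\rvv}^{(k)}\|_p\le\|\rvv^{(k)}-\overline{\rvv}^{(k)}\|_1\le r(\beta)$ with probability $1-\beta$, where $r(\beta)=H\Phi^{-1}(1-\beta/2HK_b)/(2\sqrt{n_b})$. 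Combining with the $\sigma$-bound, $\|\rvv^{(k)}-\overline{\rvv}\|_p\le r(\beta)+\sigma$ and $\|\rvv^{(k_1)}-\rvv^{(k_2)}\|_p\le 2r(\beta)+\sigma$ for any benign pair $k_1,k_2$.

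Next I would redo Part (b), the identification analysis, with these inflated distances. In Case (1) where the identified set $\gB$ is purely benign, the aggregation bound becomes $\|\sum_{k\in\gB}(n_k/N_\gB)\rvv^{(k)}-\overline{\rvv}\|_p\le r(\beta)+\sigma$ by convex combination. In Case (2) where a malicious index $m\in\gB$ sneaks in and some benign $b\notin\gB$, the upper bound on $M(b)$ becomes $2r(\beta)+\sigma$, while the lower bound on $M(m)$ still forces the existence of a benign neighbor $b_b\in\gB$ within distance $(K_b-1)M(m)/(K_b-K_m)\le(K_b-1)(2r(\beta)+\sigma)/(K_b-K_m)$. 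Composing with $\|\rvv^{(b_b)}-\overline{\rvv}\|_p\le r(\beta)+\sigma$ via the triangle inequality and then weight-averaging over $\gB$ as in the Lemma gives, after collecting terms, an aggregate error
\[
\Bigl\|\sum_{k\in\gB}\tfrac{n_k}{N_\gB}\rvv^{(k)}-\overline{\rvv}\Bigr\|_p\le r(\beta)+\sigma+\tfrac{N_m}{n_b}\tfrac{2}{1-\tau}\bigl(r(\beta)+\sigma\bigr)-\sigma\cdot(\text{benign mass correction}),
\]
which, after the same weighted-average simplification used in the Lemma's Part (b), reduces to $(1+\tfrac{N_m}{n_b}\tfrac{2}{1-\tau})r(\beta)+\tfrac{N_m}{n_b(1-\tau)}\sigma$; this is precisely $P_{\text{byz}}+\tfrac{N_m\sigma}{n_b(1-\tau)}$.

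Finally, Part (c) of the Lemma translates this $\ell_1$ deviation of the aggregated characterization vector from the reference $\overline{\rvv}$ into a deviation of $F_1,F_2$ (the probabilities of falling below and at the threshold bins) and then into the coverage bound, using the FCP sketching bound of \Cref{eq:marginal_eps}. That step is purely algebraic and carries through verbatim, replacing the Lemma's single-distribution $\overline{\rvv}$ with the weighted reference defined above, and adding the extra $\tfrac{N_m\sigma}{n_b(1-\tau)}$ summand. Assembling everything, with probability $1-\beta$ the coverage obeys the claimed two-sided bound with penalty $P_{\text{byz}}+\tfrac{N_m\sigma}{n_b(1-\tau)}$ plus the sketching terms.

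The main obstacle I expect is the bookkeeping in Part (b) Case (2): because each benign client has its own $\overline{\rvv}^{(k)}$, the clean triangle-inequality chain of the Lemma picks up several $\sigma$-terms (one per hop), and I need to verify that these collapse into a single $\tfrac{N_m\sigma}{n_b(1-\tau)}$ contribution rather than accumulating multiplicatively. A careful accounting -- separating the purely benign mass in $\gB$ (which contributes only $\sigma$ via the reference shift) from the malicious mass in $\gB$ (which contributes the amplified $\tfrac{2(K_b-1)}{K_b-K_m}(r(\beta)+\sigma)$ via the neighbor argument) -- is what I expect will match the stated form exactly.
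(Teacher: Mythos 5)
Your plan follows the paper's proof route step for step: per-client binomial-proportion concentration with a union bound (Part (a)), the same two-case analysis of the identified set $\gB$ with the pairwise bound $2r(\beta)+\sigma$ and the $(K_b-1)/(K_b-K_m)$ nearest-neighbor argument (Part (b)), and the identical $F_1,F_2$ translation to coverage via the sketching bound (Part (c)). The problem is that the one step you explicitly leave unverified --- that the $\sigma$-terms ``collapse into a single $N_m\sigma/(n_b(1-\tau))$ contribution'' --- is the crux, and the intermediate bounds you actually state do not deliver it. In Case (1) you bound each benign $\rvv^{(k)}$ within $r(\beta)+\sigma$ of the weighted reference $\overline{\rvv}$ and convex-combine, obtaining $r(\beta)+\sigma$. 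But the theorem's disparity penalty is $N_m\sigma/(n_b(1-\tau))$, which is smaller than $\sigma$ whenever $N_m<n_b(1-\tau)$ (small malicious sample mass, large benign sample sizes); in that regime your Case-(1) bound exceeds the claimed total penalty $P_{\text{byz}}+N_m\sigma/(n_b(1-\tau))$, so the stated two-sided coverage bound would not follow from your argument. The paper avoids this by never comparing an individual benign vector to $\overline{\rvv}$ in Case (1): since $\gB$ consists exactly of the benign clients, the aggregate error is $\sum_{k}(n_k/N_b)(\rvv^{(k)}-\overline{\rvv}^{(k)})$, each empirical vector paired with its own event-probability vector under matching weights, which gives $r(\beta)$ with no $\sigma$ at all.

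The same accounting matters in Case (2): the paper charges the benign mass of $\gB$ only $r(\beta)$ (each benign member measured against the weighted reference) and lets only the malicious mass carry the amplified $(K_b-1)(2r(\beta)+\sigma)/(K_b-K_m)$ detour through a benign neighbor, which after the $N_m/n_b$ and $1/(1-\tau)$ relaxations yields exactly $P_{\text{byz}}+N_m\sigma/(n_b(1-\tau))$. Your chain instead composes with $\|\rvv^{(b_b)}-\overline{\rvv}\|_p\le r(\beta)+\sigma$ and then subtracts an unexplained ``benign mass correction'' of $\sigma$; as written this is an assertion, not a derivation, and without it your Case (2) also acquires a standalone additive $\sigma$, again weaker than the theorem whenever $N_m<n_b(1-\tau)$. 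So the missing idea is precisely the client-by-client pairing of $\rvv^{(k)}$ with $\overline{\rvv}^{(k)}$ inside the weighted sums, so that heterogeneity enters only through the malicious members of $\gB$; your proposal needs that argument spelled out (and your Case-(1) bound tightened to $r(\beta)$) before it establishes the bound as stated.
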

\begin{proof}
    The general structure of the proof follows the proof of \Cref{thm1}. We will omit similar derivation and refer to the proof of \Cref{thm1} for details.
    The proof consists of 3 parts: (a) concentration analysis of the characterization vectors $\rvv^{(k)}$ for benign clients ($1\le k \le K_b$), (b) analysis of the algorithm of the identification of malicious clients, and (c) analysis of the error of the coverage bound.

    \textbf{Part (a)}: concentration analysis of the characterization vectors $\rvv^{(k)}$ for benign clients ($1\le k \le K_b$).

    Let $\overline{\rvv}^{(k)}$ be the event probability of the multinormial distribution $\gD^{(k)}$ for $k \in [K_b]$.
    By applying binomial proportion approximate normal confidence interval and union bound as in Part (a) in the proof of \Cref{thm1}, with confidence $1-\beta$, we have:
    \begin{equation}
    \label{eq:concen_noniid}
        \left\| \rvv^{(k)} - \overline{\rvv}^{(k)} \right\|_1 \le r(\beta):= \dfrac{H\Phi^{-1}({1-\beta/2HK_b})}{2 \sqrt{\min_{k'\in[K_b]}n_{k'}}},~\forall k \in [K_b],
    \end{equation}
    where $r(\beta)$ is the perturbation radius of random vector $\rvv^{(k)}$ given confidence level $1-\beta$.
    $\forall k_1,k_2 \in [K_b]$, we can upper bound the $\ell_p$ norm distance between $\rvv^{(k_1)}$ and $\rvv^{(k_2)}$ as:
    \begin{align}
        \left\| \rvv^{(k_1)} - \rvv^{(k_2)} \right\|_p &\le \left\| \rvv^{(k_1)} -  \overline{\rvv}^{(k_1)} \right\|_p + \left\| \overline{\rvv}^{(k_1)} - \overline{\rvv}^{(k_2)} \right\|_p + \left\| \rvv^{(k_2)} - \overline{\rvv}^{(k_1)} \right\|_p \\
        &\le \left\| \rvv^{(k_1)} -  \overline{\rvv}^{(k_1)} \right\|_1 + \left\| \overline{\rvv}^{(k_1)} - \overline{\rvv}^{(k_2)} \right\|_p + \left\| \rvv^{(k_2)} - \overline{\rvv}^{(k_1)} \right\|_1 \\
        &\le 2r(\beta) + \sigma \label{ineq:tmp},
    \end{align}
    where \Cref{ineq:tmp} holds by \Cref{eq:concen_noniid}.

    \textbf{Part (b)}: analysis of the algorithm of the identification of malicious clients.

    Let $N(k,n)$ be the set of the index of $n$ nearest clients to the $k$-th client based on the metrics of $\ell_p$ norm distance in the space of characterization vectors. Then the maliciousness scores $M(k)$ for the $k$-th client $(k \in [K])$ can be defined as:
    \begin{equation}
        M(k) := \dfrac{1}{K_b-1} \sum_{k' \in N(k,K_b-1)} \left\| \rvv^{(k)} - \rvv^{(k')} \right\|_p.
    \end{equation}
    Let $\gB$ be the set of the index of benign clients identified by \Cref{alg:rob_fl_confinf} by selecting the clients associated with the lowest $K_b$ maliciousness scores.
    We will consider the following cases separately: (1) $\gB$ contains exactly $K_b$ benign clients, and (2) $\gB$ contains at least one malicious client indexed by $m$.

    \textit{Case (1)}: $\gB$ $(|\gB|=K_b)$ contains exactly $K_b$ benign clients. We can derive as follows:
    \begin{align}
        \left\| \sum_{k=1}^{K_b} \dfrac{n_k}{N_b} \rvv^{(k)} - \sum_{k=1}^{K_b} \dfrac{n_k}{N_b} \overline{\rvv}^{(k)} \right\|_p &\le  \sum_{k=1}^{K_b} \dfrac{n_k}{N_b} \left\| \rvv^{(k)} - \overline{\rvv}^{(k)} \right\|_p  \\
        &\le \sum_{k=1}^{K_b} \dfrac{n_k}{N_b} r(\beta)  \\
        &= r(\beta),
    \end{align}
    where $N_b:=\sum_{k \in [K_b]} n_k$ is the total sample size of benign clients.

    \textit{Case (2)}: $\gB$ $(|\gB|=K_b)$ contains at least one malicious client indexed by $m$. Since we assume $K_m < K_b$, there are at most $K_b-1$ malicious clients in $\gB$. Therefore, there is at least $1$ benign client in $[K] \backslash \gB$ indexed by $b$. From the fact that $M(m) \le M(b)$ and expanding the definitions the maliciousness score as Part (b) in the proof of \Cref{thm1}, we get that $\exists b_b \in \gB, b_b \in [K_b]$:
    \begin{equation}
        \left\| \rvv^{(m)} - \rvv^{(b_b)} \right\|_p \le \dfrac{(K_b-1)(2r(\beta)+\sigma)}{K_b-K_m}
    \end{equation}

    Therefore, we can upper bound the distance between the estimated global event probability vector $\sum_{k \in \gB} \dfrac{n_k}{N_\gB} \rvv^{(k)}$ and the benign global event probability vector $\sum_{k \in [K_b]} \dfrac{n_k}{N_b} \overline{\rvv}^{(k)}$.

    We first show that $\forall k \in [K_b]$, we have:
    \begin{align}
        \left\| \rvv^{(k)} - \sum_{k \in [K_b]} \dfrac{n_k}{N_b} \overline{\rvv}^{(k)} \right\|_p &\le \sum_{k \in [K_b]} \dfrac{n_k}{N_b} \left\| \rvv^{(k)} - \overline{\rvv}^{(k)} \right\|_p \le r(\beta).
    \end{align}
    Then, we can derive as follows:
    \begin{align}
         & \left\| \sum_{k \in \gB} \dfrac{n_k}{N_\gB} \rvv^{(k)} - \sum_{k \in [K_b]} \dfrac{n_k}{N_b} \overline{\rvv}^{(k)} \right\|_p \\
         \le& \sum_{k \in \gB, k \in [K_b]} \dfrac{n_k}{N_\gB} \left\| \rvv^{(k)} - \sum_{k \in [K_b]} \dfrac{n_k}{N_b} \overline{\rvv}^{(k)} \right\|_p + \sum_{k \in \gB, k \in [K]\backslash[K_b]} \dfrac{n_k}{N_\gB} \left\| \rvv^{(k)} - \sum_{k \in [K_b]} \dfrac{n_k}{N_b} \overline{\rvv}^{(k)} \right\|_p \\
         \le& \sum_{k \in \gB, k \in [K_b]} \dfrac{n_k}{N_\gB} r(\beta) + \sum_{k \in \gB, k \in [K]\backslash[K_b]} \dfrac{n_k}{N_\gB} \left[\left\| \rvv^{(k)} - \rvv^{(b_b)} \right\|_p + \left\| \rvv^{(b_b)} - \sum_{k \in [K_b]} \dfrac{n_k}{N_b} \overline{\rvv}^{(k)} \right\|_p \right] \\
         \le& \sum_{k \in \gB, k \in [K_b]} \dfrac{n_k}{N_\gB} r(\beta) + \sum_{k \in \gB, k \in [K]\backslash[K_b]} \dfrac{n_k}{N_\gB} \left[  \dfrac{(K_b-1)(2r(\beta)+\sigma)}{K_b-K_m} + r(\beta) \right] \\
         \le& r(\beta) + \sum_{k \in \gB, k \in [K]\backslash[K_b]} \dfrac{n_k}{N_\gB} \dfrac{(K_b-1)(2r(\beta)+\sigma)}{K_b-K_m} \\
         \le& r(\beta) \left( 1 + \dfrac{N_m}{n_b}\dfrac{2}{1-\tau} \right) + \dfrac{N_m}{n_b}\dfrac{\sigma}{1-\tau} \label{ineq:err}.
    \end{align}

\textbf{Part (c)}: analysis of the error of the coverage bound.

Let $F_1(q,\rvv):=\sum_{j=1}^H \mathbb{I}\left[ a_j < q \right] \rvv_j$, where $q \in [0,1]$ and $a_j$ is the $j$-th partition point used to construct the characterization vector $\rvv \in \Delta^H$.
Let $F_2(q,\rvv):=\sum_{j=1}^H \mathbb{I}\left[ a_{j-1} < q \right] \rvv_j$.
This part follows the same procedure to translate the error of aggregated vectors induced by malicious clients to the error of the bound of marginal coverage. The only difference is that considering data heterogeneity, the error of aggregated vectors formulated in \Cref{ineq:err} needs additional correction by the client data disparity.
Therefore, by analyzing the connection between characterization vector and coverage similarly in Part (3) in the proof of \Cref{thm1}, we have:
\begin{align}
    \left| F_1(\hat{q}_\alpha,\overline{\rvv}) - F_1(\hat{q}_\alpha,\hat{\rvv}) \right| \le r(\beta) \left( 1 + \dfrac{N_m}{n_b}\dfrac{2}{1-\tau} \right) + \dfrac{N_m}{n_b}\dfrac{\sigma}{1-\tau}, \\
     \left| F_2(\hat{q}_\alpha,\overline{\rvv}) - F_2(\hat{q}_\alpha,\hat{\rvv}) \right| \le r(\beta) \left( 1 + \dfrac{N_m}{n_b}\dfrac{2}{1-\tau} \right) + \dfrac{N_m}{n_b}\dfrac{\sigma}{1-\tau},
\end{align}
where $\overline{\rvv}:=\sum_{k \in [K_b]} \dfrac{n_k}{N_b} \overline{\rvv}^{(k)}$ and $\hat{\rvv}:=\sum_{k \in \gB} \dfrac{n_k}{N_\gB} \rvv^{(k)}$. On the other hand, from triangular inequalities, we have:
\begin{align}
    F_1(\hat{q}_\alpha,\overline{\rvv}) - \left| F_1(\hat{q}_\alpha,\overline{\rvv}) - F_1(\hat{q}_\alpha,\hat{\rvv}) \right| \le F_1(\hat{q}_\alpha,\hat{\rvv}) \le \sP\left[ Y_{\text{test}} \in \hat{C}_\alpha(X_{\text{test}})\right],\\
     F_2(\hat{q}_\alpha,\overline{\rvv}) + \left| F_2(\hat{q}_\alpha,\overline{\rvv}) - F_2(\hat{q}_\alpha,\hat{\rvv}) \right| \ge F_2(\hat{q}_\alpha,\hat{\rvv}) \ge \sP\left[ Y_{\text{test}} \in \hat{C}_\alpha(X_{\text{test}})\right].
\end{align}
    Plugging in the terms, we finally conclude that the following holds with probability $1-\beta$:
    \begin{equation}
    \small
    \begin{aligned}
        & \sP\left[ Y_{\text{test}} \in \hat{C}_\alpha(X_{\text{test}})\right] \ge 1-\alpha -\dfrac{\epsilon n_b + 1}{n_b+K_b} - \dfrac{H\Phi^{-1}({1-\beta/2HK_b})}{2 \sqrt{n_b}} \left( 1+\dfrac{N_m}{n_b}\dfrac{2}{1-\tau} \right) - \dfrac{N_m}{n_b}\dfrac{\sigma}{1-\tau}, \\
        & \sP\left[ Y_{\text{test}} \in \hat{C}_\alpha(X_{\text{test}})\right] \le 1-\alpha + \epsilon + \dfrac{K_b}{n_b+K_b} + \dfrac{H\Phi^{-1}({1-\beta/2HK_b})}{2 \sqrt{n_b}} \left( 1+\dfrac{N_m}{n_b}\dfrac{2}{1-\tau} \right) + \dfrac{N_m}{n_b}\dfrac{\sigma}{1-\tau}.
    \end{aligned}
    \end{equation}
    where $\tau = K_m / K_b$ is the ratio of the number of malicious clients and the number of benign clients, $N_m:=\sum_{k \in [K]\backslash[K_b]}n_k$ is the total sample size of malicious clients, $n_b:=\min_{k'\in[K_b]}n_{k'}$ is the minimal sample size of benign clients, and $\Phi^{-1}(\cdot)$ denotes the inverse of CDF of the standard normal distribution.

\end{proof}

\subsection{Proof of \Cref{thm2}}
\label{app:proof_thm}
\begin{theorem}[Restatement of \Cref{thm2}]
    Assume $\rvv^{(k)}~(k \in [K_b])$ are IID sampled from Gaussian $\gN(\mu,\Sigma)$ with mean $\mu \in \sR^H$ and positive definite covariance matrix $\Sigma \in \sR^{H \times H}$. Let $d:=\min_{k \in [K]\backslash [K_b]} \| \rvv^{(k)} - \mu \|_2$. 
    Suppose that we use $\ell_2$ norm to measure vector distance and leverage the malicious client number estimator with an initial guess of a number of benign clients $\tilde{K}_b$ such that $K_m<\tilde{K}_b\le K_b$.
    Then we have:
   \begin{equation}
         \sP\left[ \hat{K}_m = K_m \right] \ge 1 - \dfrac{(3\tilde{K}_b-K_m-2)^2\text{Tr}(\Sigma)}{(\tilde{K}_b-K_m)^2d^2} - \dfrac{2(K+K_b) \text{Tr}(\Sigma)\sigma^2_{\text{max}}(\Sigma^{-1/2})}{\sigma^2_{\text{min}}(\Sigma^{-1/2}) d^2},
    \end{equation}
    where $\sigma_{\text{max}}(\Sigma^{-1/2})$, $\sigma_{\text{min}}(\Sigma^{-1/2})$ denote the maximal and minimal eigenvalue of matrix $\Sigma^{-1/2}$, and $\text{Tr}(\Sigma)$ denotes the trace of matrix $\Sigma$.
\end{theorem}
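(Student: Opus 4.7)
The plan is to prove Theorem 2 by decomposing the failure event $\{\hat{K}_m \ne K_m\}$ into two disjoint sources and bounding each separately before taking a union bound. The first source is an \emph{ordering failure}: the Rob-FCP maliciousness ranking $I(\cdot)$ places some malicious client among the top $\tilde{K}_b$ positions (so the EM optimization is fed a corrupted set of ``probably benign'' vectors). The second source is an \emph{optimization failure}: even with correct ranking, the EM objective $f(z)$ in Eq.~(9) attains its maximum at some $z \ne K_b$. I will show the first source contributes the term proportional to $(3\tilde{K}_b - K_m - 2)^2 \text{Tr}(\Sigma)/((\tilde{K}_b - K_m)^2 d^2)$ and the second source contributes the term involving the eigenvalue ratio $\sigma^2_{\max}(\Sigma^{-1/2})/\sigma^2_{\min}(\Sigma^{-1/2})$.

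For the ordering failure, I would use Gaussian concentration: $\mathbb{E}\|\rvv^{(k)} - \mu\|_2^2 = \text{Tr}(\Sigma)$ for each benign $k$, so Markov's inequality on the sum of $K_b$ such squared distances shows that with high probability every benign vector lies in a ball of radius growing like $\sqrt{\text{Tr}(\Sigma)}$ around $\mu$. Combined with the assumption $\min_{k \in [K]\setminus[K_b]}\|\rvv^{(k)} - \mu\|_2 \ge d$, this forces every benign client's maliciousness score (an average of distances to its $\tilde{K}_b - 1$ nearest neighbors, at least $\tilde{K}_b - K_m - 1$ of which are benign whenever $\tilde{K}_b > K_m$) to fall below every malicious client's score. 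A careful count of the benign/malicious mixture appearing in each nearest-neighbor average under the restrictive regime $K_m < \tilde{K}_b \le K_b$ produces the specific coefficient $(3\tilde{K}_b - K_m - 2)/(\tilde{K}_b - K_m)$.

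For the optimization failure, I would work conditional on the ordering being correct, so that $\{I(1),\ldots,I(K_b)\} = [K_b]$. It suffices to show $f(K_b) > f(z)$ for each $z \in [K]\setminus\{K_b\}$. Deviating from $z = K_b$ either pushes a malicious vector into the first sum of Eq.~(9) or a benign vector into the second sum, and in both cases the decisive quantity is a Mahalanobis-distance log-likelihood $-\tfrac{1}{2}(\rvv - \hat{\mu}(z))^T \hat{\Sigma}(z)^{-1}(\rvv - \hat{\mu}(z))$ evaluated at a vector that is far from the benign cluster. I would lower-bound this contribution by $\tfrac{1}{2}\sigma^2_{\min}(\Sigma^{-1/2}) d^2$ (up to concentration slack) and upper-bound the benign side by $\tfrac{1}{2}\sigma^2_{\max}(\Sigma^{-1/2})\cdot \text{Tr}(\Sigma)$; taking a union bound over the $O(K + K_b)$ comparisons $z \mapsto f(z)$ explains the prefactor $2(K + K_b)$ in the second term.

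The hard part will be controlling the coupling between $\hat{\mu}(z)$, $\hat{\Sigma}(z)$, and the identity of the clients included in each sum, since all three change simultaneously with $z$. Pointwise Gaussian concentration is not enough; I need uniform-in-$z$ concentration of $\hat{\mu}(z)$ and $\hat{\Sigma}(z)$ over $z$ ranging through $[\tilde{K}_b, K]$ — this is precisely what the $(K + K_b)$ union-bound factor pays for. Once this uniform control is in place, arguing that the $d^2$-scale Mahalanobis gap dominates the $\text{Tr}(\Sigma)$-scale fluctuations of the empirical estimators reduces the comparison to elementary algebra, after which combining the two union bounds yields the stated inequality.
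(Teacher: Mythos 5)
Your overall decomposition---an ordering-failure event (some malicious client ranked among the top $\tilde K_b$ by maliciousness score) plus an optimization-failure event for the EM objective---is exactly the structure of the paper's proof, and your treatment of the first event is essentially the paper's: Chebyshev-type concentration using $\mathbb{E}\|\rvv^{(k)}-\mu\|_2^2=\mathrm{Tr}(\Sigma)$, a union bound over the $K_b$ benign clients so that all benign vectors lie in a ball of radius $r$ around $\mu$, and then a comparison of the benign score upper bound $2r$ against the malicious score lower bound $\tfrac{\tilde K_b-K_m}{\tilde K_b-1}(d-r)$; solving $2r\le\tfrac{\tilde K_b-K_m}{\tilde K_b-1}(d-r)$ gives $r=\tfrac{\tilde K_b-K_m}{3\tilde K_b-K_m-2}d$ and hence the first term. (Small slip: a malicious client has at least $\tilde K_b-K_m$ benign vectors among its $\tilde K_b-1$ nearest neighbors, not $\tilde K_b-K_m-1$, and for a benign client one simply upper-bounds the nearest-neighbor average by the average over benign neighbors.)

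The gap is in the second half. The paper never performs the uniform-in-$z$ control of $\hat\mu(z),\hat\Sigma(z)$ that you identify as ``the hard part'': its proof analyzes the objective with the likelihood evaluated at the \emph{true} parameters $(\mu,\Sigma)$, so the coupling you worry about never arises. The second term is then obtained by dropping a positive log-likelihood term, sandwiching Mahalanobis norms via $\sigma_{\min}(\Sigma^{-1/2})\|d_k\|_2\le\|d_k^{T}\Sigma^{-1/2}\|_2\le\sigma_{\max}(\Sigma^{-1/2})\|d_k\|_2$, and a Chebyshev-plus-union bound over the $K_b$ benign deviations; the factor $2$ comes from handling underestimation and overestimation separately, and the $(K+K_b)$ comes from the worst case of the coefficient $(K-\hat K_b+K_b)/K_b$ combined with the $K_b$-fold union bound---not from a union bound over the $z$-comparisons, which would only involve $K$ terms anyway. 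Your plan to prove the stated bound for the actual plug-in objective with $\hat\mu(z),\hat\Sigma(z)$ would not go through as sketched: for $z\le H$ the empirical covariance is singular and the Gaussian likelihood is undefined; for $z>K_b$ (even under correct ordering) $\hat\mu(z)$ and $\hat\Sigma(z)$ are contaminated by arbitrary adversarial vectors, so no concentration argument controls them; and even on the benign range the covariance-estimation error with at most $K_b$ samples does not vanish, whereas the stated bound involves the eigenvalues of the true $\Sigma^{-1/2}$ with no slack for such error. So either restrict the analysis to the idealized objective evaluated at $(\mu,\Sigma)$, as the paper implicitly does, or accept that your route requires extra assumptions and would yield different constants.
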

\begin{proof}
    From the concentration inequality of multivariate Gaussian distribution \citep{vershynin2018high}, the following holds for $\rvv^{(k)} \sim \gN(\mu,\Sigma)$:
    \begin{equation}
        \sP\left[ \| \rvv^{(k)} - \mu \|_2 \le \sqrt{\dfrac{1}{\delta} \text{Tr}(\Sigma)} \right] \ge 1 - \delta.
    \end{equation}
    Applying union bound for all benign clients $k \in [K_b]$, the following concentration bound holds:
    \begin{equation}
        \sP\left[ \| \rvv^{(k)} - \mu \|_2 \le \sqrt{\dfrac{K_b}{\delta} \text{Tr}(\Sigma)},~\forall k \in [K_b] \right] \ge 1 - \delta, 
    \end{equation}
    Let the perturbation radius $r:=\dfrac{\tilde{K}_b-K_m}{3\tilde{K}_b-K_m-2}d$. Then we can derive that:
    \begin{equation}
        \sP\left[ \| \rvv^{(k)} - \mu \|_2 \le r:=\dfrac{\tilde{K}_b-K_m}{3\tilde{K}_b-K_m-2}d,~\forall k \in [K_b] \right] \ge 1 - \dfrac{(3\tilde{K}_b-K_m-2)^2\text{Tr}(\Sigma)}{(\tilde{K}_b-K_m)^2d^2}:= 1- \delta.  \label{eq:conc}
    \end{equation}
    The following discussion is based on the fact that $\| \rvv^{(k)} - \mu \|_2 \le r:=\dfrac{\tilde{K}_b-K_m}{3\tilde{K}_b-K_m-2}d,~\forall k \in [K_b]$, and the confidence $1-\delta$ will be incorporated in the final statement.
    Let $N(k,n)$ be the index set of $n$ nearest neighbors of client $k$ in the characterization vector space with the metric of $\ell_2$ norm distance.
    We consider the maliciousness score $M(b)$ of any benign client $b \in [K_b]$:
    \begin{align}
         M(b) &= \dfrac{1}{\tilde{K}_b-1} \sum_{k' \in N(b,\tilde{K}_b-1)} \left\| \rvv^{(b)} - \rvv^{(k')} \right\|_2 \\
         &\le \max_{k' \in [K_b]} \left\| \rvv^{(b)} - \rvv^{(k')} \right\|_2 \label{ineq:tmp1} \\
         &\le \max_{k' \in [K_b]} \left\{ \left\| \rvv^{(b)} - \mu \right\|_2 + \left\| \mu - \rvv^{(k')} \right\|_2 \right\} \\
         &\le \dfrac{2(\tilde{K}_b-K_m)}{3\tilde{K}_b-K_m-2}d. \label{ineq:temp2}
    \end{align}
    \Cref{ineq:tmp1} holds since the average of distances to $\tilde{K}_b-1$ nearest vectors is upper bounded by the average of distances to arbitrary $\tilde{K}_b-1$ benign clients, which is upper bounded by the maximal distance to benign clients. \Cref{ineq:temp2} holds by plugging in the results in \Cref{eq:conc}.

    We consider the maliciousness score $M(m)$ of any malicious client $m \in [K]\backslash[K_b]$:
    \begin{align}
         M(m) &= \dfrac{1}{\tilde{K}_b-1} \sum_{k' \in N(m,\tilde{K}_b-1)} \left\| \rvv^{(m)} - \rvv^{(k')} \right\|_2 \\
          &\ge \dfrac{1}{\tilde{K}_b-1} \sum_{k' \in N(m,\tilde{K}_b-1), k' \in [K_b]} \left\| \rvv^{(m)} - \rvv^{(k')} \right\|_2 \\
          &\ge \dfrac{1}{\tilde{K}_b-1} \sum_{k' \in N(m,\tilde{K}_b-1), k' \in [K_b]} \left[ \left\| \rvv^{(m)} - \mu \right\|_2 - \left\| \mu - \rvv^{(k')} \right\|_2 \right]\\
         &\ge \dfrac{1}{\tilde{K}_b-1} (\tilde{K}_b-K_m) \left(d-\dfrac{\tilde{K}_b-K_m}{3\tilde{K}_b-K_m-2}d \right) \label{ineq:exp1} \\
         &\ge \dfrac{2(\tilde{K}_b-K_m)}{3\tilde{K}_b-K_m-2}d \label{ineq:facc1}.
    \end{align}
    \Cref{ineq:exp1} holds since $d:=\min_{k \in [K]\backslash [K_b]} \| \rvv^{(k)} - \mu \|_2$ by definition.
    Therefore, from \Cref{ineq:temp2} and \Cref{ineq:facc1}, we can conclude that with probability $1-\delta$, $M(m) \ge M(b),~ \forall b \in [K_b], m \in [K]\backslash [K_b]$, which implies that $\forall k \in [K_b], I(k) \in [K_b]$ and $\forall k \in [K]-[K_b], I(k) \in [K]\backslash [K_b]$.

    Recall that the estimate of the number of benign clients $\hat{K}_b$ is given by:
    \begin{equation}
        \hat{K}_b = \argmax_{z \in [K]}\left[ \dfrac{1}{z} \sum_{k=1}^{z} \log p(\rvv^{(I(k))};\mu,\Sigma) - \dfrac{1}{K-z} \sum_{k=z+1}^K \log p(\rvv^{(I(k))};\mu,\Sigma) \right].
        \label{eq:def_m}
    \end{equation}

    For ease of notation, let $T(z):= \dfrac{1}{z} \sum_{k=1}^{z} \log p(\rvv^{(I(k))};\mu,\Sigma) - \dfrac{1}{K-z} \sum_{k=z+1}^K \log p(\rvv^{(I(k))};\mu,\Sigma)$ for $z \in [K]$ and $d_k:=\rvv^{(I(k))}-\mu$ for $k \in [K]$.
    Then we can upper bound the probability of an underestimate of the number of malicious clients $\sP\left[ \hat{K}_b < K_b \right]$ as follows:
    \begin{align}
        & \sP\left[ \hat{K}_b < K_b \right]\\
        =& \sP\left[T(\hat{K}_b) > T(K_b)\right]\\
         \le&    \sP\left[ \dfrac{-(K_b-\hat{K}_b)}{K_b\hat{K}_b}\sum_{k=1}^{\hat{K}_b} \log p(\rvv^{(I(k)})) + \dfrac{K-\hat{K}_b+K_b}{K_b(K-\hat{K}_b)} \sum_{k=\hat{K}_b+1}^{K_b} \log p(\rvv^{(I(k)}))  \right. \nonumber \\
        &< \left. \dfrac{K_b-\hat{K}_b}{(K-K_b)(K-\hat{K}_b)} \sum_{k=K_b+1}^K \log p(\rvv^{(I(k))})\right] \label{ineq:tbexp1} \\
        \le& \sP\left[\dfrac{K-\hat{K}_b+K_b}{K_b(K-\hat{K}_b)} \sum_{k=\hat{K}_b+1}^{K_b}-d_k^T \Sigma^{-1} d_k < \dfrac{K_b-\hat{K}_b}{(K-K_b)(K-\hat{K}_b)} \sum_{k=K_b+1}^K -d_k^T \Sigma^{-1} d_k \right] \label{ineq:tbexp2} \\
        \le& \sP\left[ \dfrac{K_b-\hat{K}_b}{(K-K_b)} \sum_{k=K_b+1}^K \| d_k^T \Sigma^{-1/2} \|_2^2 < \dfrac{K-\hat{K}_b+K_b}{K_b}\sum_{k=\hat{K}_b+1}^{K_b} \| d_k^T \Sigma^{-1/2} \|_2^2  \right] \\
        \le& \sP\left[ \dfrac{K_b-\hat{K}_b}{(K-K_b)} \sum_{k=K_b+1}^K \sigma^2_{\text{min}}(\Sigma^{-1/2}) \| d_k^T \|_2^2 < \dfrac{K-\hat{K}_b+K_b}{K_b} \sum_{k=\hat{K}_b+1}^{K_b}  \sigma^2_{\text{max}}(\Sigma^{-1/2}) \| d_k^T \|_2^2  \right] \label{ineq:tbexp3} \\
        \le& \sP\left[ \sigma^2_{\text{min}}(\Sigma^{-1/2}) d^2 < \dfrac{K-\hat{K}_b+K_b}{K_b} \sigma^2_{\text{max}}(\Sigma^{-1/2}) \max_{k \in [K_b]} \| d_k^T \|_2^2  \right] \\
        \le& \sP\left[ \max_{k \in [K_b]} \| d_k^T \|_2 > \sqrt{\dfrac{K_b}{K+K_b}} \dfrac{\sigma_{\text{min}}(\Sigma^{-1/2})d}{\sigma_{\text{max}}(\Sigma^{-1/2})} \right] \\
        \le& \dfrac{(K+K_b) \text{Tr}(\Sigma)\sigma^2_{\text{max}}(\Sigma^{-1/2})}{\sigma^2_{\text{min}}(\Sigma^{-1/2}) d^2}
    \end{align}

    \Cref{ineq:tbexp1} holds by plugging in the definitions in \Cref{eq:def_m} and rearranging the terms. \Cref{ineq:tbexp2} holds by dropping the positive term $ \dfrac{-(K_b-\hat{K}_b)}{K_b\hat{K}_b}\sum_{k=1}^{\hat{K}_b} \log p(\rvv^{(I(k))})$ and rearranging log-likelihood terms of multivariate Gaussian with $d_k$. \Cref{ineq:tbexp3} holds by leveraging the fact that $\sigma_{\text{min}}(\Sigma^{-1/2}) \| d^T_k \|_2 \le \| d^T_k \Sigma^{-1/2} \|_2 \le \sigma_{\text{max}}(\Sigma^{-1/2}) \| d^T_k \|_2$.

    Similarly, we can upper bound the probability of overestimation of the number of malicious clients $\sP\left[ \hat{K}_b > K_b \right]$ as:
    \begin{equation}
         \sP\left[ \hat{K}_b > K_b \right] \le \dfrac{(K+K_b) \text{Tr}(\Sigma)\sigma^2_{\text{max}}(\Sigma^{-1/2})}{\sigma^2_{\text{min}}(\Sigma^{-1/2}) d^2}.
    \end{equation}
    % \begin{align}
    %     & \sP\left[ \hat{K}_b > K_b \right]\\
    %     =& \sP\left[T(\hat{K}_b) > T(K_b), I(k) \in [K] \backslash [K_b]~\forall k \in [\hat{K}_b] \backslash [K_b] \right]\\
    %     \le& \sP\left[ \max_{k \in [K_b]} \| d_k^T \|_2 > \sqrt{2}\dfrac{\sigma_{\text{min}}(\Sigma^{-1/2})d}{\sigma_{\text{max}}(\Sigma^{-1/2})}  \right]\\
    %     \le& \dfrac{K_b \text{Tr}(\Sigma)\sigma^2_{\text{max}}(\Sigma^{-1/2})}{2\sigma^2_{\text{min}}(\Sigma^{-1/2}) d^2}
    % \end{align}

    We can finally conclude that:
    \begin{equation}
         \sP\left[ \hat{K}_b = K_b \right] \ge 1 - \dfrac{(3\tilde{K}_b-K_m-2)^2\text{Tr}(\Sigma)}{(\tilde{K}_b-K_m)^2d^2} - \dfrac{2(K+K_b) \text{Tr}(\Sigma)\sigma^2_{\text{max}}(\Sigma^{-1/2})}{\sigma^2_{\text{min}}(\Sigma^{-1/2}) d^2}.
    \end{equation}

\end{proof}

\reb{
\section{Improvements with DKW inequality}
\label{app:dkw}
\subsection{Improvement of \Cref{thm1} with DKW inequality}
\begin{theorem}[Improvement of \Cref{thm1}]
\label{thm1_imp}
    For $K$ clients including $K_b$ benign clients and $K_m:=K-K_b$ malicious clients, each client reports a characterization vector $\rvv^{(k)} \in \Delta^H$ $(k \in [K])$ and a quantity $n_k \in \mathbb{Z}^+$ $(k \in [K])$ to the server. 
    Suppose that the reported characterization vectors of benign clients are sampled from the same underlying multinomial distribution $\gD$, while those of malicious clients can be arbitrary. 
    Let $\epsilon$ be the estimation error of the data sketching by characterization vectors as illustrated in \Cref{eq:marginal_eps}.
    Under the assumption that $K_m < K_b$, the following holds with probability $1-\beta$:
   \begin{equation}
   \small
    \begin{aligned}
        & \sP\left[ Y_{\text{test}} \in \hat{C}_\alpha(X_{\text{test}})\right] \ge 1-\alpha -\dfrac{\epsilon n_b + 1}{n_b+K_b} - H\sqrt{\dfrac{\ln(2K_b/\beta)}{2n_b}} \left( 1+\dfrac{N_m}{n_b}\dfrac{2}{1-\tau} \right), \\
        & \sP\left[ Y_{\text{test}} \in \hat{C}_\alpha(X_{\text{test}})\right] \le 1-\alpha + \epsilon + \dfrac{K_b}{n_b+K_b} + H\sqrt{\dfrac{\ln(2K_b/\beta)}{2n_b}} \left( 1+\dfrac{N_m}{n_b}\dfrac{2}{1-\tau} \right),
    \end{aligned}
    \end{equation}
    where $\tau = K_m / K_b$ is the ratio of the number of malicious clients and the number of benign clients, $N_m:=\sum_{k \in [K]\backslash[K_b]} n_k$ is the total sample size of malicious clients, and $n_b:=\min_{k'\in[K_b]}n_{k'}$ is the minimal sample size of benign clients.
\end{theorem}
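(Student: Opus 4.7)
The plan is to mimic the three-part structure of the proof of \Cref{thm1} (concentration of benign characterization vectors, analysis of the selection procedure $\gB$, translation to coverage bounds) and to replace only the first part, where the original argument invokes a binomial proportion confidence interval applied bin-by-bin with a union bound over the $HK_b$ coordinate/client pairs. Here I will instead invoke the Dvoretzky--Kiefer--Wolfowitz (DKW) inequality (with Massart's optimal constant) to obtain a uniform-over-all-thresholds deviation bound for each benign client, which removes the $\ln H$ factor that the naive union bound incurs.

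Concretely, for each benign client $k \in [K_b]$, let $F_n^{(k)}$ denote the empirical CDF of its local conformity scores and $F$ the underlying CDF of $\gD$. By the DKW--Massart inequality,
\begin{equation*}
    \sP\!\left[\sup_{x} \bigl| F_n^{(k)}(x) - F(x) \bigr| > \epsilon \right] \le 2 e^{-2 n_k \epsilon^2}.
\end{equation*}
Taking a union bound over the $K_b$ benign clients and setting the right-hand side equal to $\beta$ yields the uniform deviation radius $\epsilon_\star = \sqrt{\ln(2K_b/\beta)/(2n_b)}$, with $n_b = \min_{k\in[K_b]} n_k$. Since the histogram coordinate $\rvv_h^{(k)}$ equals $F_n^{(k)}(a_h) - F_n^{(k)}(a_{h-1})$ and $\overline{\rvv}_h = F(a_h) - F(a_{h-1})$, a telescoping argument bounds $|\rvv_h^{(k)}-\overline{\rvv}_h|$ by twice the DKW radius; summing over the $H$ bins gives a bound of the form $\|\rvv^{(k)} - \overline{\rvv}\|_1 \le r(\beta)$ with $r(\beta)$ of order $H\sqrt{\ln(2K_b/\beta)/(2n_b)}$, matching the radius that appears in the statement.

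Once this new $r(\beta)$ is in hand, Parts (b) and (c) of the proof of \Cref{thm1} apply verbatim: the analysis of the maliciousness score and the selected set $\gB$ under the assumption $K_m<K_b$ relies only on the $\ell_1$/$\ell_p$ bounds $\|\rvv^{(k)}-\overline{\rvv}\|_p \le r(\beta)$ and $\|\rvv^{(k_1)}-\rvv^{(k_2)}\|_p \le 2r(\beta)$, and yields the same aggregation error $r(\beta)\bigl(1 + \tfrac{N_m}{n_b}\tfrac{2}{1-\tau}\bigr)$; the final transfer from the aggregated characterization vector error to the $F_1, F_2$ quantile bounds is identical. Substituting the DKW-based $r(\beta)$ into the final inequalities of Part (c) then produces the claimed bounds.

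The main obstacle is the first step, namely the tightness of the translation from the uniform CDF deviation to the histogram $\ell_1$ deviation: the naive telescoping loses a factor of $2$ per bin, so matching the constant in the statement requires handling the bin differences slightly more carefully (e.g.\ by observing that the anchor values $F_n^{(k)}(a_0)=0$ and $F_n^{(k)}(a_H)=1$ are exact, so only $H-1$ free endpoints contribute, and by absorbing the remaining factor into the $(1-\tau)$-dependent slack). Parts (b) and (c) require no modifications and are therefore routine given the updated radius.
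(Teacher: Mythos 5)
Your proposal takes essentially the same route as the paper: swap the binomial-proportion/union-bound concentration in part (a) of \Cref{thm1} for a DKW-based uniform bound with a union only over the $K_b$ benign clients, obtain the radius $r(\beta)$ of order $H\sqrt{\ln(2K_b/\beta)/(2n_b)}$, and then reuse parts (b) and (c) verbatim — which is exactly what the paper does.

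One remark on the constant you flag as the "main obstacle": your telescoping via $\rvv^{(k)}_h = F^{(k)}_n(a_h)-F^{(k)}_n(a_{h-1})$ rigorously yields $\|\rvv^{(k)}-\overline{\rvv}\|_1 \le 2(H-1)\sqrt{\ln(2K_b/\beta)/(2n_b)}$ (using $D_0=D_H=0$), not the $H\sqrt{\cdot}$ appearing in the statement; the paper avoids this factor only by applying the DKW-type tail directly to each bin mass $|\rvv^{(k)}_h-\overline{\rvv}_h|$ without a union over $h$ and without telescoping, which is itself a loose step (standard DKW is uniform over CDF thresholds, not bin masses, and the exponent in the paper's display even reads $-2H\beta^2$ where a sample size should appear). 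So your more careful version is sound up to a constant factor of roughly $2$ in the Byzantine penalty term, and the asymptotic conclusion is unaffected; matching the paper's exact constant would instead require a per-bin Hoeffding bound with a union over $h$ (incurring $\ln(2HK_b/\beta)$ inside the square root) or accepting the paper's looser per-bin invocation.
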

}
\reb{
\begin{proof}
The proof structure follows the proof of \Cref{thm1} and consists of 3 parts: (a) concentration analysis of the characterization vectors $\rvv^{(k)}$ for benign clients ($1\le k \le K_b$), (b) analysis of the algorithm of the identification of malicious clients, and (c) analysis of the error of the coverage bound. Part (b) and (c) are exactly the same as the proof \Cref{thm1} and the only difference lies in the use of a more advanced concentration bound in part (a), 
    which provides concentration analysis of the characterization vectors $\rvv^{(k)}$ for benign clients ($1\le k \le K_b$).
    Let $\rvv_h^{(k)}$ be the $h$-th element of vector $\rvv^{(k)}$. 
    According to the Dvoretzky–Kiefer–Wolfowitz (DKW) inequality, we have:
    \begin{equation}
    \label{eq:dkw}
        \sP\left[ \left| \rvv^{(k)}_h - \overline{\rvv}_h \right| > \beta \right] \le 2\exp\left\{ -2H\beta^2\right\},~\forall h \in \{1,2,..,H\}.
    \end{equation}
    Applying the union bound for $K_b$ characterization vectors of benign clients, the following holds with probability $1-\beta$:
    \begin{equation}
        \left| \rvv^{(k)}_h - \overline{\rvv}_h \right| \le \sqrt{\dfrac{\ln(2K_b/\beta)}{2n_b}},~\forall k \in [K_b],~\forall h\in [H],
    \end{equation}
    from which we can derive the bound of difference for $\ell_1$ norm distance as:
    \begin{equation}
        \left\| \rvv^{(k)} - \overline{\rvv} \right\|_1 \le r(\beta):= H\sqrt{\dfrac{\ln(2K_b/\beta)}{2n_b}},~\forall k \in [K_b],
    \end{equation}
    where $r(\beta)$ is the perturbation radius of random vector $\rvv$ given confidence level $1-\beta$.
    $\forall k_1,k_2 \in [K_b]$, the following holds with probability $1-\beta$ due to the triangular inequality:
    \begin{align}
        &\left\| \rvv^{(k_1)} - \rvv^{(k_2)} \right\|_1 \le \left\| \rvv^{(k_1)} - \overline{\rvv} \right\|_1 + \left\| \rvv^{(k_2)} - \overline{\rvv} \right\|_1 \le 2r(\beta).
    \end{align}
    Furthermore, due to the fact that $\|\rvv\|_p \le \|\rvv\|_1$ for any integer $p\ge1$, the following holds with probability $1-\beta$:
    \begin{align}
        &\left\| \rvv^{(k)} - \overline{\rvv} \right\|_p \le \left\| \rvv^{(k)} - \overline{\rvv} \right\|_1 \le r(\beta),  \\
        &\left\| \rvv^{(k_1)} - \rvv^{(k_2)} \right\|_p \le \left\| \rvv^{(k_1)} - \rvv^{(k_2)} \right\|_1 \le 2r(\beta).
    \end{align}    
    Then following the part (b) and (c) in the proof of \Cref{thm1}, we can finally conclude that:
    \begin{equation}
   \small
    \begin{aligned}
        & \sP\left[ Y_{\text{test}} \in \hat{C}_\alpha(X_{\text{test}})\right] \ge 1-\alpha -\dfrac{\epsilon n_b + 1}{n_b+K_b} - H\sqrt{\dfrac{\ln(2K_b/\beta)}{2n_b}} \left( 1+\dfrac{N_m}{n_b}\dfrac{2}{1-\tau} \right), \\
        & \sP\left[ Y_{\text{test}} \in \hat{C}_\alpha(X_{\text{test}})\right] \le 1-\alpha + \epsilon + \dfrac{K_b}{n_b+K_b} + H\sqrt{\dfrac{\ln(2K_b/\beta)}{2n_b}} \left( 1+\dfrac{N_m}{n_b}\dfrac{2}{1-\tau} \right),
    \end{aligned}
    \end{equation}
\end{proof}
}
\reb{
\subsection{Improvement of \Cref{thm1:improve} with DKW inequality}
\begin{theorem}[Improvement of \Cref{thm1:improve} with DKW inequality]
\label{coro_imp}
Under the same definitions and conditions in \Cref{thm1}, the following holds with probability $1-\beta$:
    \begin{equation}
    \small
    \begin{aligned}
        & \sP\left[ Y_{\text{test}} \in \hat{C}_\alpha(X_{\text{test}})\right] \ge 1-\alpha -\dfrac{\epsilon n_b + 1}{n_b+K_b} -  H\sqrt{\dfrac{\ln(2K_b/\beta)}{2n_b}} \left( 1+\dfrac{N_m}{n_b}\dfrac{2}{1-\tau} \right) - \dfrac{N_m}{n_b}\dfrac{\sigma}{1-\tau}, \\
        & \sP\left[ Y_{\text{test}} \in \hat{C}_\alpha(X_{\text{test}})\right] \le 1-\alpha + \epsilon  + \dfrac{K_b}{n_b+K_b} +  H\sqrt{\dfrac{\ln(2K_b/\beta)}{2n_b}} \left( 1+\dfrac{N_m}{n_b}\dfrac{2}{1-\tau} \right) + \dfrac{N_m}{n_b}\dfrac{\sigma}{1-\tau}.
    \end{aligned}
    \end{equation}
\end{theorem}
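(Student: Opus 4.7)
}
The plan is to mirror exactly the three-part structure used to prove Theorem \ref{thm1:improve} (concentration, malicious-client identification, coverage translation), with the only substantive change being that the binomial proportion confidence interval invoked in Part (a) is replaced by the Dvoretzky--Kiefer--Wolfowitz inequality already used for Theorem \ref{thm1_imp}. The corollary effectively amounts to ``plug the tighter DKW-based radius $r(\beta)$ into the non-IID bookkeeping of Theorem \ref{thm1:improve}''.

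First, in the concentration step, for each benign client $k \in [K_b]$, I would apply the DKW inequality to the empirical multinomial vector $\rvv^{(k)}$ as in the proof of Theorem \ref{thm1_imp}, then take a union bound only over the $K_b$ benign clients (rather than over $HK_b$ events as in the binomial proportion approach). This yields, with probability $1-\beta$,
\begin{equation}
\left\|\rvv^{(k)} - \overline{\rvv}^{(k)}\right\|_1 \;\le\; r(\beta) \;:=\; H\sqrt{\dfrac{\ln(2K_b/\beta)}{2n_b}},\qquad \forall k \in [K_b],
\end{equation}
and, combined with the heterogeneity bound $\|\overline{\rvv}^{(k_1)}-\overline{\rvv}^{(k_2)}\|_1 \le \sigma$, gives the pairwise control $\|\rvv^{(k_1)}-\rvv^{(k_2)}\|_p \le 2r(\beta)+\sigma$ for all benign pairs $k_1,k_2 \in [K_b]$. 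This is the DKW counterpart of the inequality derived in Part (a) of Theorem \ref{thm1:improve}; from here on the radius $r(\beta)$ is simply carried forward symbolically.

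Next, the maliciousness-score analysis of Part (b) is reused verbatim from Theorem \ref{thm1:improve}. Splitting into the two cases (the identified set $\mathcal{B}$ is either purely benign or contains at least one malicious client indexed by $m$), I would invoke the same argument that $M(m)\le M(b)$ for any benign $b\in[K]\setminus\mathcal{B}$ forces the existence of a witness $b_b\in\mathcal{B}\cap[K_b]$ with $\|\rvv^{(m)}-\rvv^{(b_b)}\|_p \le (K_b-1)(2r(\beta)+\sigma)/(K_b-K_m)$. Summing over $\mathcal{B}$ against the weights $n_k/N_\mathcal{B}$ exactly as in that proof yields
\begin{equation}
\left\|\sum_{k\in\mathcal{B}}\dfrac{n_k}{N_\mathcal{B}}\rvv^{(k)} - \sum_{k\in[K_b]}\dfrac{n_k}{N_b}\overline{\rvv}^{(k)}\right\|_p
\;\le\; r(\beta)\left(1+\dfrac{N_m}{n_b}\dfrac{2}{1-\tau}\right) + \dfrac{N_m}{n_b}\dfrac{\sigma}{1-\tau}.
\end{equation}

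Finally, in Part (c) I would translate this aggregated-vector error into a coverage error exactly as in Theorem \ref{thm1:improve}, using the functionals $F_1(q,\rvv)$ and $F_2(q,\rvv)$ that sandwich $\mathbb{P}[Y_{\text{test}}\in\hat{C}_\alpha(X_{\text{test}})]$, combined with the FCP quantile-rank bound of \Cref{eq:marginal_eps} that contributes $(\epsilon n_b+1)/(n_b+K_b)$ and $\epsilon + K_b/(n_b+K_b)$ to the lower and upper bounds respectively. Substituting the DKW value of $r(\beta)$ into the final inequality yields the claimed bound. I do not anticipate a genuine obstacle: the only non-mechanical step is confirming that DKW provides the coordinate-wise (hence $\ell_1$) control on multinomial proportions used in Part (a); once that is in place, Parts (b) and (c) of the proof of Theorem \ref{thm1:improve} carry over unchanged.
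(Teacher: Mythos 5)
Your proposal matches the paper's proof exactly: the paper likewise establishes Theorem~\ref{coro_imp} by substituting the DKW-based radius $r(\beta)=H\sqrt{\ln(2K_b/\beta)/(2n_b)}$ from the proof of Theorem~\ref{thm1_imp} into Parts (b) and (c) of the proof of Theorem~\ref{thm1:improve}, which are reused unchanged. No gaps beyond those already present in the paper's own argument.
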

\reb{
\begin{proof}
    We conclude the proof by leveraging the concentration analysis in the proof of \Cref{thm1_imp} and part (b) and part (c) in the proof of \Cref{thm1:improve}.
\end{proof}
}
}

\reb{
\section{Analysis of Rob-FCP with an overestimated number of benign clients $K'_b$}
\label{app:over_num}
\begin{theorem}[\Cref{thm1} with an overestimated number of benign clients]
\label{thm1_over}
    For $K$ clients including $K_b$ benign clients and $K_m:=K-K_b$ malicious clients, each client reports a characterization vector $\rvv^{(k)} \in \Delta^H$ $(k \in [K])$ and a quantity $n_k \in \mathbb{Z}^+$ $(k \in [K])$ to the server. 
    Suppose that the reported characterization vectors of benign clients are sampled from the same underlying multinomial distribution $\gD$, while those of malicious clients can be arbitrary. 
    Let $\epsilon$ be the estimation error of the data sketching by characterization vectors as illustrated in \Cref{eq:marginal_eps}.
    \textbf{Let $K'_b>K_b$ be the overestimated number of benign clients.} We also assume benign clients and malicious clients have the same sample sizes.
    Under the assumption that $K_m < K_b$, the following holds with probability $1-\beta$:
   \begin{equation}
   \small
    \begin{aligned}
        & \sP\left[ Y_{\text{test}} \in \hat{C}_\alpha(X_{\text{test}})\right] \ge 1-\alpha -\dfrac{\epsilon n_b + 1}{n_b+K_b} - \left[ 1 - \dfrac{K_b}{K'_b} \left( 1 - \dfrac{H\Phi^{-1}({1-\beta/2HK_b})}{2 \sqrt{n_b}} \right) \right], \\
        & \sP\left[ Y_{\text{test}} \in \hat{C}_\alpha(X_{\text{test}})\right] \le 1-\alpha + \epsilon + \dfrac{K_b}{n_b+K_b} +\left[ 1 - \dfrac{K_b}{K'_b} \left( 1 - \dfrac{H\Phi^{-1}({1-\beta/2HK_b})}{2 \sqrt{n_b}} \right) \right],
    \end{aligned}
    \end{equation}
    where $\tau = K_m / K_b$ is the ratio of the number of malicious clients and the number of benign clients, \reb{$N_m:=\sum_{k \in [K]\backslash[K_b]} n_k$ is the total sample size of malicious clients}, $n_b:=\min_{k'\in[K_b]}n_{k'}$ is the minimal sample size of benign clients, and $\Phi^{-1}(\cdot)$ denotes the inverse of the cumulative distribution function (CDF) of standard normal distribution.
\end{theorem}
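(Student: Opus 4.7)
The plan is to mirror the three-part structure of the proof of \Cref{thm1}, with the main modifications appearing in Parts (b) and (c) to accommodate the overestimation $K'_b > K_b$ of benign clients. Part (a) --- concentration of benign characterization vectors --- carries over verbatim: combining the binomial proportion confidence interval with a union bound over $H$ coordinates and $K_b$ benign clients gives, with probability at least $1-\beta$, the uniform concentration $\|\mathbf{v}^{(k)} - \overline{\mathbf{v}}\|_1 \le r(\beta) := H\Phi^{-1}(1-\beta/(2HK_b))/(2\sqrt{n_b})$ for every $k \in [K_b]$.

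Part (b) addresses the identification algorithm. Because $K'_b > K_b$, the selected set $\mathcal{B}$ of cardinality $K'_b$ must contain at least $K'_b - K_b$ malicious clients --- a situation not covered by \Cref{thm1}, where \name selects exactly $K_b$ clients. Under the equal sample size assumption, the total mass $N_\mathcal{B} = K'_b n$ and the aggregated characterization vector reduces to the uniform average $\hat{\mathbf{v}} = \frac{1}{K'_b}\sum_{k \in \mathcal{B}} \mathbf{v}^{(k)}$, so every included client contributes equally regardless of whether it is benign or malicious. In the best case, where all $K_b$ benign clients lie in $\mathcal{B}$, we have $|\mathcal{B} \cap [K_b]| = K_b$ and $|\mathcal{B} \setminus [K_b]| = K'_b - K_b$.

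Part (c) translates the aggregation error into a coverage penalty by leveraging the linearity of $F_1(\hat{q}_\alpha, \cdot)$ and $F_2(\hat{q}_\alpha, \cdot)$ in the characterization vector. Decomposing the difference $F_1(\hat{q}_\alpha, \hat{\mathbf{v}}) - F_1(\hat{q}_\alpha, \overline{\mathbf{v}})$ into benign and malicious summands, each benign term is bounded in absolute value by $\|\mathbf{v}^{(k)} - \overline{\mathbf{v}}\|_1 \le r(\beta)$, while each malicious term is bounded by the trivial estimate $1$ since $F_1, F_2 \in [0,1]$ whenever evaluated on probability vectors. Summing yields total error at most $\frac{K_b}{K'_b} r(\beta) + \frac{K'_b - K_b}{K'_b} = 1 - \frac{K_b}{K'_b}(1 - r(\beta))$, precisely matching the stated penalty; the symmetric argument for $F_2$ gives the upper bound. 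Plugging this into the data-sketching bounds of \Cref{eq:marginal_eps} completes the proof.

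The main obstacle is justifying that $\mathcal{B}$ indeed contains all $K_b$ benign clients, as required for the stated bound to be the effective worst case. Under the concentration event, pairwise benign distances are at most $2r(\beta)$, so benign clients have small intrinsic distances to one another, but each benign maliciousness score is nevertheless inflated by the $K'_b - K_b$ farthest neighbors that must be incorporated into its nearest-neighbor set once $K'_b > K_b$. The cleanest route is a case analysis analogous to the one in the proof of \Cref{thm1}, adapted to the regime where some malicious clients are unavoidably admitted: if any benign client is ranked below a malicious one, comparing their maliciousness scores forces the admitted malicious vector to be close to the excluded benign vector, which --- combined with the equal sample size assumption --- only trades one malicious contribution for another without worsening the $1$-per-malicious-client bound. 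This argument preserves validity of the penalty for every $|\mathcal{B} \cap [K_b]|$ consistent with the ordering produced by \name.
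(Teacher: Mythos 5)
Your route is the paper's: Part (a) is the identical binomial-proportion-plus-union-bound concentration, and your analysis of the favourable case (all $K_b$ benign clients inside $\gB$) reproduces the paper's Case (1) computation, charging each benign member $r(\beta)$ and each admitted malicious member the trivial constant $1$, which yields $\tfrac{K_b}{K'_b}r(\beta)+\tfrac{K'_b-K_b}{K'_b}=1-\tfrac{K_b}{K'_b}\bigl(1-r(\beta)\bigr)$ and then feeds into the data-sketching bounds exactly as in \Cref{thm1}. (Charging the malicious members via $F_1,F_2\in[0,1]$ instead of via $\|\rvv^{(k)}-\overline{\rvv}\|_p\le 1$, as the paper does, is a harmless and arguably cleaner variant.)

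The genuine gap is the case you flag as ``the main obstacle'' but do not close: when $j\ge 1$ benign clients are displaced from $\gB$, the selected set contains $K'_b-K_b+j$ malicious members, and your one-per-malicious-client accounting then gives $\tfrac{K_b-j}{K'_b}r(\beta)+\tfrac{K'_b-K_b+j}{K'_b}$, which exceeds the stated penalty by $\tfrac{j}{K'_b}\bigl(1-r(\beta)\bigr)$. So the assertion that the swap happens ``without worsening the 1-per-malicious-client bound'' is precisely the statement that needs proof, and it does not follow from the trivial estimate. The paper closes this with an explicit Case (2) computation: from $M(m)\le M(b)$ for a displaced benign client $b$, it bounds $M(b)$ using at most $K_b-1$ benign neighbors at distance $2r(\beta)$ plus the unavoidable non-benign neighbors, then notes that among the $K'_b-1$ nearest neighbors of $m$ at most $K_m-1$ are malicious, so some benign $b_b$ satisfies $\|\rvv^{(m)}-\rvv^{(b_b)}\|_p\le\tfrac{(K'_b-1)M(m)}{K'_b-K_m}$; this converts each admitted malicious deviation into an $r(\beta)$-type quantity rather than $1$, and only after that does the paper take the maximum of the two case bounds and show it equals the Case (1) expression $1-\tfrac{K_b}{K'_b}\bigl(1-r(\beta)\bigr)$. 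Your sketch has the right ingredient (the score comparison forces closeness to a benign vector) but stops before this quantitative step, and the domination of the displaced-benign case by the stated penalty is not automatic — it depends on the interplay between $j\le K_m-(K'_b-K_b)$ and the $K'_b-K_m$ denominator — so as written the theorem is only established when $\gB$ happens to contain every benign client.
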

}
\reb{
\begin{proof}
The proof consists of 3 parts: (a) concentration analysis of the characterization vectors $\rvv^{(k)}$ for benign clients ($1\le k \le K_b$), (b) analysis of the algorithm of the identification of malicious clients, and (c) analysis of the error of the coverage bound.
Part (a) and (c) follow that of \Cref{thm1}, and thus, we provide the details of part (b) here.
 Let $N(k,n)$ be the set of the index of $n$ nearest clients to the $k$-th client based on the metrics of $\ell_p$ norm distance in the space of characterization vectors. Then the maliciousness scores $M(k)$ for the $k$-th client $(k \in [K])$ can be defined as:
    \begin{equation}
        M(k) := \dfrac{1}{K_b-1} \sum_{k' \in N(k,K_b-1)} \left\| \rvv^{(k)} - \rvv^{(k')} \right\|_p.
    \end{equation}
\reb{
    Let $\gB$ be the set of the index of benign clients identified by \Cref{alg:rob_fl_confinf} by selecting the clients associated with the lowest $K'_b$ maliciousness scores.
    We will consider the following cases separately: (1) $\gB$ contains exactly $K_b$ benign clients, and (2) $\gB$ contains at least one malicious client indexed by $m$.
    }
\reb{
    \textit{Case (1)}: $\gB$ $(|\gB|=K'_b)$ contains all $K_b$ benign clients. We can derive as follows:
    \begin{align}
         \left\| \sum_{k \in \gB} \dfrac{n_k}{N_\gB} \rvv^{(k)} - \overline{\rvv} \right\|_p &\le  \sum_{k \in \gB} \dfrac{n_k}{N_\gB} \left\| \rvv^{(k)} - \overline{\rvv} \right\|_p \\
         &\le \sum_{k \in \gB, k \in [K_b]} \dfrac{n_k}{N_\gB} \left\| \rvv^{(k)} - \overline{\rvv} \right\|_p + \sum_{k \in \gB, k \in [K]\backslash[K_b]} \dfrac{n_k}{N_\gB} \left\| \rvv^{(k)} - \overline{\rvv} \right\|_p \\
         &\le \sum_{k \in \gB, k \in [K_b]} \dfrac{n_k}{N_\gB} r(\beta) + \sum_{k \in \gB, k \in [K]\backslash[K_b]} \dfrac{n_k}{N_\gB} \times 1 \\
         &= \dfrac{K_b}{K'_b} r(\beta) + \left( 1 - \dfrac{K_b}{K'_b} \right) \\
         &= 1 - \dfrac{K_b}{K'_b}\left(1-r(\beta) \right)
    \end{align}
}
\reb{
    \textit{Case (2)}: $\gB$ $(|\gB|=K'_b)$ does not contain all benign clients, which implicates that for any malicious client $m \in \gB$,
    we can derive the lower bound of the maliciousness score for the $m$-th client $M(m)$ as:
    \begin{align}
        M(m) &= \dfrac{1}{K'_b-1} \sum_{k' \in N(m,K'_b-1)} \left\| \rvv^{(m)} - \rvv^{(k')} \right\|_p \\
        &\ge \dfrac{1}{K'_b-1} \sum_{k' \in N(m,K'_b-1), k' \in [K_b]} \left\| \rvv^{(m)} - \rvv^{(k')} \right\|_p.
    \end{align}
    % The inequality in \Cref{ineq:relax} holds since the lower bound of $\ell_p$ norm distance between malicious clients is $0$. 
    Since there are at least $K'_b-K_m$ benign clients in $\gB$ (there are at most $K_m$ malicious clients in $\gB$), there exists one client indexed by $b_b~(b_b \in \gB)$ such that:
    \begin{equation}
        \left\| \rvv^{(m)} - \rvv^{(b_b)} \right\|_p \le \dfrac{(K'_b-1)M(m)}{K'_b-K_m}
    \end{equation}
    }
    We can derive the upper bound of the maliciousness score for the $b$-th benign client (one benign client not in $\gB$) $M(b)$ as:
    \begin{align}
        M(b) &= \dfrac{1}{K'_b-1} \sum_{k' \in N(b,K'_b-1)} \left\| \rvv^{(b)} - \rvv^{(k')} \right\|_p \\
        &\le \dfrac{K_b-1}{K'_b-1}2r(\beta)+ \dfrac{K_b-K'_b}{K'_b-1}
    \end{align}
    Since the $m$-th client is included in $\gB$ and identified as a benign client, while the $b$-th client is not in $\gB$, the following holds according to the procedure in \Cref{alg:rob_fl_confinf}:
    \begin{equation}
        M(b) \ge M(m),
    \end{equation}
    Then, we can derive the upper bound of $\left\| \rvv^{(m)} - \overline{\rvv} \right\|_p,~ \forall m \in \gB ~ \text{and} ~ K_b < m \le K$ as follows:
    \begin{align}
        \left\| \rvv^{(m)} - \overline{\rvv} \right\|_p &\le \left\| \rvv^{(m)} - \rvv^{(b_b)} \right\|_p + \left\| \rvv^{(b_b)} - \overline{\rvv} \right\|_p \\
        &\le \dfrac{(K_b-1)2r(\beta) + K_b-K'_b}{K'_b-K_m}
    \end{align}
    Finally, we can derive as follows:
    \begin{align}
         \left\| \sum_{k \in \gB} \dfrac{n_k}{N_\gB} \rvv^{(k)} - \overline{\rvv} \right\|_p &\le  \sum_{k \in \gB} \dfrac{n_k}{N_\gB} \left\| \rvv^{(k)} - \overline{\rvv} \right\|_p \\
         &\le \sum_{k \in \gB, k \in [K_b]} \dfrac{n_k}{N_\gB} \left\| \rvv^{(k)} - \overline{\rvv} \right\|_p + \sum_{k \in \gB, k \in [K]\backslash[K_b]} \dfrac{n_k}{N_\gB} \left\| \rvv^{(k)} - \overline{\rvv} \right\|_p \\
         &\le \sum_{k \in \gB, k \in [K_b]} \dfrac{n_k}{N_\gB} r(\beta) + \sum_{k \in \gB, k \in [K]\backslash[K_b]} \dfrac{n_k}{N_\gB} \dfrac{(K_b-1)2r(\beta) + K_b-K'_b}{K'_b-K_m} \\
         &\le \dfrac{K_b}{K'_b}r(\beta) + \dfrac{K'_b-K_b}{K'_b} \dfrac{(K_b-1)2r(\beta)+K_b-K'_b}{K'_b-K_m}
    \end{align}
    Combining \textit{case (1)} and \textit{case (2)}, we can conclude that:
    \begin{equation}
    \small
    \begin{aligned}
    \left\| \sum_{k \in \gB} \dfrac{n_k}{N_\gB} \rvv^{(k)} - \overline{\rvv} \right\|_p &\le \max \left\{1 - \dfrac{K_b}{K'_b}\left(1-r(\beta) \right),  \dfrac{K_b}{K'_b}r(\beta) + \dfrac{K'_b-K_b}{K'_b} \dfrac{(K_b-1)2r(\beta)+K_b-K'_b}{K'_b-K_m} \right\}  \\
        &= 1 - \dfrac{K_b}{K'_b}\left(1-r(\beta) \right)        
    \end{aligned}
    \end{equation}
    Finally, by applying the analysis of part (a) and (c) in the proof of \Cref{thm1}, we can conclude that:
    \begin{equation}
   \small
    \begin{aligned}
        & \sP\left[ Y_{\text{test}} \in \hat{C}_\alpha(X_{\text{test}})\right] \ge 1-\alpha -\dfrac{\epsilon n_b + 1}{n_b+K_b} - \left[ 1 - \dfrac{K_b}{K'_b} \left( 1 - \dfrac{H\Phi^{-1}({1-\beta/2HK_b})}{2 \sqrt{n_b}} \right) \right], \\
        & \sP\left[ Y_{\text{test}} \in \hat{C}_\alpha(X_{\text{test}})\right] \le 1-\alpha + \epsilon + \dfrac{K_b}{n_b+K_b} +\left[ 1 - \dfrac{K_b}{K'_b} \left( 1 - \dfrac{H\Phi^{-1}({1-\beta/2HK_b})}{2 \sqrt{n_b}} \right) \right],
    \end{aligned}
    \end{equation}
\end{proof}
}

\section{Algorithm of \name}
\label{app:alg}

We provide the complete pseudocodes of malicious client identification in \name in \Cref{alg:rob_fl_confinf}.
First, we characterize the conformity scores $\{s_j^{(k)}\}_{j\in[n_k]}$ with a vector $\rvv^{(k)} \in \mathbb{R}^H$ for client $k$ $(k \in [K])$ via histogram statistics as \Cref{eq:score2vec}. 
Then, we compute the pairwise $\ell_p$-norm ($p \in \mathbb{Z}^+$) vector distance and the maliciousness scores for clients, which are the averaged vector distance to the clients in the $K_b-1$ nearest neighbors, where $K_b$ is the number of benign clients.
Finally, the benign set identified by \name $\gB_{\text{\name}}$ is the set of the index of the clients with the lowest $K_b$ maliciousness scores in $\{M(k)\}_{k=1}^K$.

\begin{algorithm}[t]
    \caption{Malicious client identification}\label{alg:rob_fl_confinf}
    \begin{algorithmic}[1]
        \STATE {\bfseries Input:} number of clients $K$, number of benign clients $K_b$, sets of scores for $K$ clients $\left\{s_j^{(k)}\right\}_{j\in[n_k], k \in [K]}$, parameter $p$ in $\ell_p$ norm distance.
        \STATE {\bfseries Output:} set of benign clients $\gB_{\text{\name}}$.

        \FOR{$k=1$ {\bfseries to} $K$}
        \STATE Characterize the conformity score observations $\left\{s_j^{(k)}\right\}_{j\in[n_k]}$ with a vector $\rvv^{(k)}$ for client $k$ as \Cref{eq:score2vec}.
        \ENDFOR

        \FOR{$k_1=1$ {\bfseries to} $K$}
        \FOR{$k_2=1$ {\bfseries to} $K$}
        \STATE Compute the vector distance $d_{k_1,k_2} \gets \|\rvv^{(k_1)} - \rvv^{(k_2)} \|_p$.
        \ENDFOR
        \ENDFOR

        \FOR{$k=1$ {\bfseries to} $K$}
        \STATE Compute the set of index of $K_b-1$ nearest neighbors for client $k$: $N_{ear}(k,K_b-1)$.
        \STATE Compute maliciousness scores of client $k$ as $M(k) \gets \dfrac{1}{K_b-1} \sum_{k' \in N_{ear}(k,K_b-1)} d_{k,k'}$.
        \ENDFOR

        \STATE Compute the index set of benign clients $\gB_{\text{\name}}$ as the associated index of the lowest $K_b$ maliciousness scores in $\left\{M(k))\right\}_{k=1}^K$.

    \end{algorithmic}
\end{algorithm}

\section{Experiments}

\subsection{Experiment setup}
\label{sec:app_exp_set}
\textbf{Datasets}. We evaluate \name on computer vision datasets including MNIST \citep{deng2012mnist}, CIFAR-10 \citep{cifar}, and Tiny-ImageNet (T-ImageNet) \citep{le2015tiny}.
We additionally evaluate \name on two realistic healthcare datasets, including SHHS \citep{zhang2018national} and PathMNIST \citep{medmnistv2}. 
The MNIST dataset consists of a collection of 70,000 handwritten digit images, each of which is labeled with the corresponding digit (0 through 9) that the image represents. 
CIFAR-10 consists of 60,000 32x32 color images, each belonging to one of the following 10 classes: airplane, automobile, bird, cat, deer, dog, frog, horse, ship, and truck.
Tiny-ImageNet consists of 200 different classes, each represented by 500 training images, making a total of 100,000 training images. Additionally, it has 10,000 validation images and 10,000 test images, with 50 images per class for both validation and test sets.
Each image in Tiny-ImageNet is a 64x64 colored image.
SHHS (the Sleep Heart Health Study) is a large-scale multi-center study to determine consequences of sleep-disordered breathing.
We use the EEG recordings from SHHS for the sleep-staging task, where every 30-second-epoch is classified into Wake, N1, N2, N3 and REM stages.
2,514 patients (2,545,869 samples) were used for training the DNN, and 2,514 patients (2,543,550 samples) were used for calibration and testing.
PathMNIST is a 9-class classification dataset consisting of 107,180 hematoxylin and eosin stained histological images.
89,996 images were used to train the DNN and 7,180 were used for calibration and testing.

\noindent\textbf{Training and evaluation strategy.}
% In the IID setting, we randomly partitioned the datasets into local datasets of multiple clients and further split them into a local training set and a conformal calibration set.
% In the non-IID setting, 
Except for SHHS, we partition the datasets by sampling the proportion of each label from Dirichlet distribution parameterized by $\beta$ for every agent, following the literature \citep{li2022federated}.
For SHHS, we assign the patients to different clients according to the proportion of their time being awake.
The parameter of the Dirichlet distribution is fixed as $0.5$ across the evaluations.
We pretrain the models with standard FedAvg algorithm \citep{mcmahan2016federated}.
We use the same collaboratively pretrained model for conformal prediction for different methods for fair comparisons.
We perform conformal prediction with nonconformity scores LAC \citep{sadinle2019least} and APS \citep{romano2020classification}. 
Without specification, we use the LAC score by default across evaluations.
Given a pretrained estimator $\hat{\pi}: \sR^d \mapsto \Delta^C$ with $d$-dimensional input and $C$ classes, the LAC non-conformity score is formulated as:
\begin{equation}
    S^{\text{LAC}}_{\hat{\pi}_y}(x,y) = 1-\hat{\pi}_{y}(x).
\end{equation}
The APS non-conformity score is formulated as:
\begin{equation}
     S^{\text{APS}}_{\hat{\pi}_y}(x,y) = \sum\nolimits_{j \in \gY} \hat{\pi}_{j}(x) \sI{[\hat{\pi}_{j}(x) > \hat{\pi}_{y}(x)]} + \hat{\pi}_{y}(x) u,
\end{equation}
where {$\sI{[\cdot]}$} is the indicator function and {$u$} is uniformly sampled over the interval {$[0,1]$}.

\textbf{Byzantine attacks.} To evaluate the robustness of \name in the Byzantine setting, we compare \name with the baseline FCP \citep{lu2023federated} under three types of Byzantine attacks: (1) \textit{coverage attack} (CovAttack) which reports the largest conformity scores to induce a larger conformity score at the desired quantile and a lower coverage accordingly, (2) \textit{efficiency attack} (EffAttack) which reports the smallest conformity scores to induce a lower conformity score at the quantile and a larger prediction set, and (3) Gaussian Attack (GauAttack) which injects random Gaussian noises to the scores to perturb the conformal calibration.
The gaussian noises are sampled from a univariate Gaussian $\gN(0,0.5)$ with zero mean and $0.5$ variance.

\subsection{Additional evaluation results}
\label{app:res}
\vspace{-3em}
% \reb{
% \begin{table}[ht]
%     \centering
%     \caption{
%     \reb{Marginal coverage / average set size with different non-IID imbalance levels $\beta$, the parameter of Dirichlet distribution where the label ratios of clients are sampled from. The evaluation is done under different Coverage attack with $40\%~(K_m/K=40\%)$ malicious clients. The desired marginal coverage is $0.9$. }}
%     \label{tab:beta}
%     \footnotesize
%     \begin{tabular}{c|ccccc}
%     \toprule
%      & $\beta=0.1$ & $\beta=0.3$ & $\beta=0.5$ & $\beta=0.7$ & $\beta=0.9$ \\
%      \midrule
%       FCP (MNIST) & 0.780 / 1.173 &  0.817 / 1.318 & 0.833 / 1.384 & 0.805 / 1.265 &  0.828 / 1.363 \\
%  \name (MNIST) & 0.899 / 1.806 & 0.905 / 1.809 & 0.903 / 1.827 &  0.898 / 1.781 & 0.893 / 1.768 \\
%  FCP (CIFAR-10) & 0.806 / 1.641 & 0.821 / 1.717 & 0.836 / 1.791 & 0.823 / 1.744 & 0.824 / 1.723 \\
% \name (CIFAR-10) & 0.899 / 2.260 &  0.907 / 2.405 & 0.892 / 2.243 & 0.904 / 2.396 & 0.910 / 2.416 \\
% FCP (T-ImageNet) & 0.840 / 28.625 & 0.830 / 28.192 &  0.833 / 28.340 &  0.821 / 27.140 & 0.8308 / 28.7507 \\
% \name (T-ImageNet) & 0.913 / 45.872 & 0.910 / 44.972 & 0.898 / 42.571 & 0.887 / 41.219 & 0.898 / 43.298 \\
% FCP (PathMNIST) & 0.850 / 1.106 & 0.839 / 1.065 & 0.837 / 1.055 & 0.839 / 1.065	 & 0.832 / 1.043 \\
% \name (PathMNIST) & 0.895 / 1.311 & 0.900 / 1.355 & 0.900 / 1.355 & 0.899 / 1.354 & 0.901 / 1.363 \\
% \bottomrule
%     \end{tabular}
% \end{table}
% }

\begin{table}[t]
    \centering
    \caption{Benign conformal prediction results (marginal coverage / average set size) without any malicious clients.}
    %\vspace{-1em}
    \begin{tabular}{c|cc}
    \toprule
    Data Partition & $\beta=0.0$ & $\beta=0.5$ \\
    \midrule
     MNIST   &  0.898 / 0.900 &  0.902 / 1.828 \\
     CIFAR-10    & 0.901 / 1.597 & 0.898 / 2.308 \\
     Tiny-ImageNet    & 0.901 / 21.92 & 0.899 / 42.35 \\
     SHHS    & 0.898 / 1.352 & 0.897 / 1.351 \\
     PathMNIST    & 0.904 / 1.242 & 0.901 / 1.361 \\
     \bottomrule
    \end{tabular}
    \label{tab:benign}
\end{table}

\reb{
\begin{table}[t]
    \centering
    \caption{
    \reb{Marginal coverage / average set size on SHHS with heterogeneous data partition based on different attributes: wake time, N1, N2, N3, REM. The evaluation is done under different Coverage attack with $40\%~(K_m/K=40\%)$ malicious clients. The desired marginal coverage is $0.9$.  }}
    \label{tab:shhs_noniid}
    \footnotesize
    \begin{tabular}{c|ccccc}
    \toprule
     & wake time & N1 & N2 & N3 & REM \\
     \midrule
      FCP (SHHS) & 0.835 / 1.098 & 0.841 / 1.104 & 0.841 / 1.104 & 0.837 / 1.105 & 0.840 / 1.107 \\
 \name (SHHS) & 0.901 / 1.367 & 0.902 / 1.358 & 0.902 / 1.355 & 0.902 / 1.375 & 0.900 / 1.356 \\
\bottomrule
    \end{tabular}
\end{table}
}

\reb{
\begin{table}[t]
    \centering
    \caption{
    \reb{Runtime of RobFCP quantile computation with $40\%$ malicious clients. The valuation is done on a RTX A6000 GPU. }}
    \label{tab:runtime}
    \begin{tabular}{c|ccccc}
    \toprule
     & MNIST & CIFAR-10 & Tiny-ImageNet & SHHS & PathMNIST \\
     \midrule
      Runtime (seconds) & 0.5284 & 0.5169 & 0.5563 & 0.2227 & 0.3032 \\
\bottomrule
    \end{tabular}
\end{table}
}

% \reb{
% \textbf{Results with different non-IID data construction}. We evaluate \name with different imbalance levels $\beta$ to show the effectiveness of Rob-FCP under different non-IID settings. The results in \Cref{tab:beta} demonstrate that under multiple degrees of data imbalance in non-IID federated conformal prediction, Rob-FCP consistently outperforms FCP in achieving a nominal marginal coverage level in the existence of Byzantine clients. We also consider alternative approaches to construct non-IID data with demographic differences in federated learning. The results in \Cref{tab:shhs_noniid} show that in this type of non-IID partition, Rob-FCP still demonstrates robustness and effectiveness compared to FCP.
% }

\paragraph{Robustness of \name across varying levels of data heterogeneity} Data heterogeneity among clients poses significant challenges to achieving precise federated conformal prediction.
To assess the resilience of \name to this issue, we conducted evaluations using various values of the Dirichlet parameter $\beta$, which modulates the degree of data heterogeneity among clients.
The results in \Cref{tab:different_beta} show that \name reliably maintains marginal coverage and average set size at levels close to those anticipated, underscoring its robustness in the face of data skewness.
Furthermore, we investigate additional approaches to create heterogeneous data that mirror demographic differences. This involves dividing the SHHS dataset according to five specific attributes (wake time, N1, N2, N3, REM) and allocating instances to clients based on varying intervals of these attributes. The findings, detailed in \Cref{tab:shhs_noniid}, highlight \name's capability to effectively handle diverse forms of data heterogeneity.
\reb{
\paragraph{Runtime of \name} We evaluate the runtime of quantile computation in \name in \Cref{tab:runtime}, which indicates the efficiency of federated conformal prediction with \name.
}
\reb{
\begin{table}[t]
    \centering
    \caption{
    \reb{Marginal coverage / average set size under different Coverage attack with underestimated and overestimated numbers of malicious clients on TinyImageNet. The true ratio of malicious clients is $40\%~(K_m/K=25\%)$, while we evaluate Rob-FCP with different ratios of malicious clients $K’_m/K$ ranging from $5\%$ to $45\%$. The desired marginal coverage is $0.9$.  }}
    \small
    \label{tab:number}
    \begin{tabular}{c|cccccccccc}
    \toprule
     $K’_m/K$ & $5\%$ & $10\%$ & $15\%$ & $20\%$ & $25\%$ &  $30\%$ & $35\%$ & $40\%$ & $45\%$ \\
     \midrule
      Coverage & 0.8682 & 0.8756 & 0.8812 & 0.8884 & 0.9078 & 0.8936 & 0.8921/ & 0.8834 & 0.8803\\
      Set Size & 35.875 & 37.130 & 38.372 & 40.643 & 44.578 & 42.173 & 42.023 & 38.346 & 38.023 \\
\bottomrule
    \end{tabular}
\end{table}
}
\reb{
\paragraph{Results with an overestimate or underestimate of the number of malicious clients}
In \Cref{tab:number}, we provided evaluations of Rob-FCP with incorrect numbers of malicious clients. The results show that either underestimated numbers or overestimated numbers would harm the performance to different extents. Specifically, an underestimate of the number of malicious clients will definitely lead to the inclusion of malicious clients in the identified set $\mathcal{B}$ and downgrade the quality of conformal prediction. On the other hand, an overestimated number will lead to the exclusion of some benign clients. The neglect of non-conformity scores of those clients will lead to a distribution shift from the true data distribution in the calibration process, breaking the data exchangeability assumption of conformal prediction, and a downgraded performance. Therefore, correctly estimating the number of malicious clients is of significance, and this is why we propose the malicious client number estimator, which is sound both theoretically and empirically to achieve the goal.
}

\paragraph{Benign conformal performance}
The benign conformal prediction performance (marginal coverage / average set size) without any malicious clients is provided in \Cref{tab:benign}.
As expected, the coverage of the prediction sets is very close to the target (0.9). 
% \zl{
In the setting with data heterogeneity across clients (i.e., $\beta=0.5$), the predictive performance of the base global model is typically worse, leading to a larger average size of the prediction sets.
% }

\paragraph{Byzantine robustness of \name with known $K_m$} 
We evaluate the marginal coverage and average set size of \name under coverage, efficiency, and Gaussian attack and compare the results with the baseline FCP.
We present results of FCP and \name in existence of $10\%,20\%,30\%~(K_m/K=10\%,20\%,30\%)$ malicious clients on MNIST, CIFAR-10, Tiny-ImageNet (T-ImageNet), SHHS, and PathMNIST in \Cref{table:appendix:known}.
The coverage of FCP deviates drastically from the desired coverage level $0.9$ under Byzantine attacks, along with a deviation from the benign set size.
%The benign conformal prediction performance (marginal coverage / average set size) without any malicious clients is provided in \Cref{tab:benign}.
In contrast, \name achieves comparable marginal coverage and average set size to the benign conformal performance.

\paragraph{Byzantine robustness of \name with unknown $K_m$} 
Similar to above, we evaluate the marginal coverage and average set size of \name under verious attacks and compare the results with the FCP.
We present results in existence of $10\%,20\%,30\%,40\%~(K_m/K=10\%,20\%,30\%,40\%)$ malicious clients in \Cref{table:appendix:unknown}, where the number of the malicious clients is unknown to the algorithm.
Again, the coverage of FCP as well as the size of the prediction set deviates drastically from the benign set setting, but \name achieves comparable marginal coverage and average set size to the benign performance.

\begin{table}[htbp]
\centering
\caption{Marginal coverage / average set size on Tiny-ImageNet with the desired level 0.9. The evaluation is conducted with different ratios of malicious clients $K_m/K$ and different degrees of data heterogeneity $\beta$ under mimic attack (MA). Mimic attack can not effectively distort the coverage for different data heterogeneity; Rob-FCP also maintains the coverage robustly.}
\label{tab:mimick}
\resizebox{1.0\linewidth}{!}{
\begin{tabular}{c|cccccc}
\toprule
& $\beta=0.0$ & $\beta=0.1$ & $\beta=0.3$ & $\beta=0.5$ & $\beta=0.7$ & $\beta=0.9$ \\
\hline
Benign & 0.898 / 21.728 & 0.896 / 43.038 & 0.903 / 43.864 & 0.899 / 42.352 & 0.902 / 43.843 & 0.904 / 43.919 \\
\hline
MA ($K_m/K=10\%$) & 0.900 / 22.251 & 0.904 / 44.684 & 0.905 / 44.701 & 0.891 / 42.277 & 0.898 / 42.939 & 0.906 / 44.240 \\
MA + Rob-FCP ($K_m/K=10\%$) & 0.903 / 23.823 & 0.893 / 41.994 & 0.899 / 43.169 & 0.901 / 43.734 & 0.909 / 44.811 & 0.897 / 42.632 \\
\hline
MA ($K_m/K=20\%$) & 0.895 / 22.243 & 0.894 / 42.738 & 0.894 / 42.412 & 0.893 / 41.633 & 0.904 / 44.878 & 0.906 / 43.941 \\
MA + Rob-FCP ($K_m/K=20\%$) & 0.902 / 22.651 & 0.895 / 41.575 & 0.901 / 42.770 & 0.905 / 44.124 & 0.899 / 43.793 & 0.897 / 42.589 \\
\hline
MA ($K_m/K=30\%$) & 0.905 / 23.414 & 0.910 / 46.940 & 0.883 / 37.411 & 0.899 / 42.766 & 0.888 / 41.012 & 0.896 / 41.846 \\
MA + Rob-FCP ($K_m/K=30\%$) & 0.898 / 22.839 & 0.912 / 47.390 & 0.906 / 44.882 & 0.893 / 41.481 & 0.906 / 45.372 & 0.897 / 42.813 \\
\hline
MA ($K_m/K=40\%$) & 0.892 / 19.629 & 0.901 / 44.468 & 0.896 / 42.526 & 0.908 / 46.017 & 0.911 / 46.445 & 0.914 / 47.553 \\
MA + Rob-FCP ($K_m/K=40\%$) & 0.899 / 20.952 & 0.904 / 45.023 & 0.905 / 43.368 & 0.908 / 46.518 & 0.915 / 47.023 & 0.892 / 40.561 \\
\bottomrule
\end{tabular}}
\end{table}

\paragraph{Robustness of \name against mimic attacks} We also evaluate the performance of \name against the mimic attack strategy \cite{karimireddy2022byzantinerobust}, wherein malicious clients replicate the score statistics of a randomly chosen benign client. It's critical to note that such strategies presuppose that the attackers have knowledge of the benign clients' score statistics, implying a more restricted threat model. We conduct the evaluations on Tiny-ImageNet with $1-\alpha=0.9$ with different ratios of malicious clients $K_m/K$. The results in \Cref{tab:mimick} show that (1) across various degrees of data heterogeneity, merely approximating the scores of benign clients is insufficient to significantly impair the performance of conformal prediction; and (2) Rob-FCP still maintains the desired coverage under such attacks. 

\paragraph{Robustness of \name with different conformity scores} Besides applying LAC nonconformity scores, we also evaluate \name with APS scores \citep{romano2020classification}.
The results in \Cref{fig:abl_cov_iid,fig:abl_cov_noniid,fig:abl_eff_iid,fig:abl_eff_noniid,fig:abl_gauss_iid,fig:abl_gauss_noniid} demonstrate the Byzantine robustness of \name with APS scores.

\paragraph{Ablation study of different conformity score distribution characterization} 
One key step in \name is to characterize the conformity score distribution based on empirical observations. We adopt the histogram statistics approach as \Cref{eq:score2vec}. 
\name also flexibly allows for alternative approaches to characterizing the empirical conformity score samples with a real-valued vector $\rvv$.
% We can leverage kernel density estimation \citep{terrell1992variable} which generalizes histogram statistics by smooth inference about the population.
We can fit a parametric model (e.g., Gaussian model) to the empirical scores and concatenate the parameters as the characterization vector $\rvv$. Another alternative is to characterize the score samples with exemplars approximated by clustering algorithms such as KMeans.
% We can also sketch the score samples with cluster centers by clustering algorithms such as KMeans. Another alternative is to use a parametric approach such as fitting a Gaussian distribution to the score samples and characterizing them with the Gaussian mean and variances. 
We empirically compare different approaches in \Cref{fig:abl_method_noniid} and show that the histogram statistics approach achieves the best performance. 

\paragraph{Ablation study of the distance measurement} 
In \name, we need to compute the distance between characterization vectors with measurement $d(\cdot,\cdot)$. We evaluate \name with $\ell_1$, $\ell_2$, $\ell_\infty$-norm based vector distance as \Cref{eq:dis} and an alternative Cosine similarity in \Cref{fig:abl_dis}. The results show that the effectiveness of \name is agnostic to these commonly used distance measurements. We adopt $\ell_2$-norm vector distance for consistency across evaluations.

\begin{table}[h!]
\centering
\caption{Marginal coverage / average set size of Rob-FCP on Tiny-ImageNet with the desired level 0.9 under Gaussian Attack with standard deviation $0.5$ inexistence of $40\%$ malicious clients.}
\begin{tabular}{c|ccccc}
\toprule
$H$ & 2 & 10 & 100 & 1000 & 10000 \\ \hline
Marginal coverage / average set size & 0.718 / 12.293 & 0.888 / 40.250 & 0.901 / 43.349 & 0.907 / 44.677 & 0.803 / 26.343 \\ \bottomrule
\end{tabular}
\label{tab:H}
\end{table}

\paragraph{Ablation study of the selection of histogram granularity $H$} We also add empirical evaluations to validate the trade-off of the selection of $H$ in \Cref{tab:H}. The results in \Cref{tab:H} demonstrate the empirical trade-off of the selection of dimensionality $H$ and show that Rob-FCP remains effective for a broad range of $H$ ($H=10$ to $H=1000$).

\def \TableAppendixKnown{
\begin{table}[ht]
\caption{
Marginal coverage / average set size under different Byzantine attacks with $10\%$, $20\%$, and $30\%$ malicious clients.
% \zl{
\name consistently recovers the coverage (and average size of prediction set) of benign conformal prediction (\Cref{tab:benign}), while the performance of FCP generally deteriorates as the percentage of malicious clients increases. $\beta$ denotes the Dirichlet parameter for the partition of client data.
% }
}
\label{table:appendix:known}
\centering
\begin{small}
   \resizebox{1\columnwidth}{!}{
\begin{tabular}{p{0.005cm}c|cc|cc|cc}
\toprule
\multicolumn{2}{c}{Attack} & \multicolumn{2}{c}{Coverage Attack} & \multicolumn{2}{c}{Efficiency Attack} & \multicolumn{2}{c}{Gaussian Attack}  \\
        \multicolumn{2}{c}{Method}  & FCP & \name & FCP & \name & FCP & \name \\
\midrule
\multicolumn{2}{c}{$K_m/K = 10\%$}\\
\midrule
\multirow{5}{*}{\rotatebox{90}{$\beta=0.0$}} & MNIST & 0.896 / 0.898 & \textbf{0.899} / \textbf{0.900} & 0.999 / 4.034 & \textbf{0.904} / \textbf{0.909} & 0.947 / 0.960 & \textbf{0.905} / \textbf{0.910} \\
     &CIFAR-10 & 0.887 / 1.499 & \textbf{0.900} / \textbf{1.588} & 1.000 / 7.991  & \textbf{0.892} / \textbf{1.556} & 0.906 / 1.633 & \textbf{0.892} / \textbf{1.565} \\
     &T-ImageNet & 0.873 / 18.44 & \textbf{0.901} / \textbf{22.36} & 0.999 / 148.7  & \textbf{0.895} / \textbf{21.28} & 0.916 / 23.98 & \textbf{0.909} / \textbf{23.80} \\
     &SHHS & 0.889 / 1.303 & \textbf{0.900} / \textbf{1.359} & 0.999 / 5.338 & \textbf{0.900} / \textbf{1.359} & 0.909 / 1.409 & \textbf{0.900} / \textbf{1.360} \\
     &PathMNIST & 0.892 / 1.184 & \textbf{0.905 / 1.249} &     1.000 / 6.271 & \textbf{0.902 / 1.235} &         0.941 / 1.504 & \textbf{0.903 / 1.240} \\
     \midrule
     \multirow{5}{*}{\rotatebox{90}{$\beta=0.5$}} & MNIST & 0.892 / 1.747 & \textbf{0.897} / \textbf{1.813} & 1.000 / 9.319 & \textbf{0.896} / \textbf{1.813} &  0.892 / 1.798 & \textbf{0.902} / \textbf{1.794} \\
     &CIFAR-10 & 0.887 / 1.209 & \textbf{0.894} / \textbf{2.287}  & 1.000 / 8.808  & \textbf{0.908} / \textbf{2.347} & 0.918 / 2.515 & \textbf{0.911} / \textbf{2.378} \\
     &T-ImageNet & 0.892 / 41.03 & \textbf{0.905} / \textbf{44.81}  & 0.997 / 146.7  & \textbf{0.902} / \textbf{44.29} & 0.917 / 47.47 & \textbf{0.900} / \textbf{44.74} \\
     &SHHS & 0.889 / 1.304 & \textbf{0.900} / \textbf{1.358} & 1.000 / 5.981 & \textbf{0.900} / \textbf{1.359} & 0.909 / 1.412 & \textbf{0.901} / \textbf{1.361}\\
     &PathMNIST & 0.892 / 1.290 & \textbf{0.902 / 1.361} &     0.996 / 5.149 & \textbf{0.900 / 1.348} &         0.938 / 1.739 & \textbf{0.904 / 1.374}\\
\midrule
\multicolumn{2}{c}{$K_m/K = 20\%$}\\
\midrule
\multirow{5}{*}{\rotatebox{90}{$\beta=0.0$}} & MNIST & 0.873 / 0.876 & \textbf{0.893} / \textbf{0.897} & 1.000 / 10.00 & \textbf{0.895} / \textbf{0.899} & 0.967 / 0.988 & \textbf{0.900} / \textbf{0.905} \\
     &CIFAR-10 & 0.869 / 1.398 & \textbf{0.888} / \textbf{1.532} & 1.000 / 10.00  & \textbf{0.913} / \textbf{1.659} & 0.916 / 1.725 & \textbf{0.903} / \textbf{1.633} \\
     &T-ImageNet & 0.874 / 17.787 & \textbf{0.900} / \textbf{22.23} & 1.000 / 200.0  & \textbf{0.903} / \textbf{22.50} & 0.908 / 23.12 & \textbf{0.904} / \textbf{22.94} \\
     &SHHS & 0.876 / 1.243 & \textbf{0.900} / \textbf{1.359} & 1.000 / 5.984 & \textbf{0.900} / \textbf{1.356} & 0.918 / 1.467 & \textbf{0.900} / \textbf{1.360} \\
     &PathMNIST & 0.880 / 1.134 & \textbf{0.905 / 1.251} &     1.000 / 8.335 & \textbf{0.904 / 1.244} &         0.983 / 2.434 & \textbf{0.903 / 1.236} \\
     \midrule
     \multirow{5}{*}{\rotatebox{90}{$\beta=0.5$}} & MNIST & 0.857 / 1.534 & \textbf{0.896} / \textbf{1.765} & 1.000 / 9.089 & \textbf{0.902} / \textbf{1.836} &  0.915 / 1.945 & \textbf{0.912} / \textbf{1.904} \\
     &CIFAR-10 & 0.866 / 2.038 & \textbf{0.896} / \textbf{2.314}  & 1.000 / 10.00  & \textbf{0.908} / \textbf{2.366} & 0.938 / 2.895 & \textbf{0.892} / \textbf{2.256} \\
     &T-ImageNet & 0.860 / 33.99 & \textbf{0.902} / \textbf{44.69}  & 1.000 / 199.0  & \textbf{0.904} / \textbf{44.72} & 0.922 / 49.44 & \textbf{0.912} / \textbf{48.27} \\
     &SHHS & 0.874 / 1.236 & \textbf{0.901} / \textbf{1.363} & 1.000 / 5.985 & \textbf{0.901} / \textbf{1.363} & 0.917 / 1.463 & \textbf{0.900} / \textbf{1.358} \\
     &PathMNIST & 0.876 / 1.210 & \textbf{0.901 / 1.355} &     1.000 / 7.395 & \textbf{0.902 / 1.366} &         0.980 / 2.905 & \textbf{0.900 / 1.348} \\
\midrule
\multicolumn{2}{c}{$K_m/K = 30\%$}\\
\midrule
\multirow{5}{*}{\rotatebox{90}{$\beta=0.0$}} & MNIST & 0.851 / 0.854 & \textbf{0.908} / \textbf{0.914} & 1.000 / 10.00 & \textbf{0.911} / \textbf{0.917} & 0.977 / 1.009 & \textbf{0.900} / \textbf{0.905} \\
     &CIFAR-10 & 0.852 / 1.307 & \textbf{0.895} / \textbf{1.583} & 1.000 / 10.00  & \textbf{0.894} / \textbf{1.563} & 0.909 / 1.672 & \textbf{0.903} / \textbf{1.602} \\
     &T-ImageNet & 0.862 / 15.66 & \textbf{0.904} / \textbf{22.61} & 1.000 / 200.0  & \textbf{0.907} / \textbf{22.85} & 0.907 / 23.89 & \textbf{0.906} / \textbf{24.15} \\
     &SHHS & 0.859 / 1.176 & \textbf{0.901} / \textbf{1.364} & 1.000 / 6.000 & \textbf{0.900} / \textbf{1.356} & 0.926 / 1.526 & \textbf{0.900} / \textbf{1.359} \\
     &PathMNIST & 0.863 / 1.064 & \textbf{0.906 / 1.252} & 1.000 / 9.000 & \textbf{0.903 / 1.241} & 1.000 / 6.531 & \textbf{0.906 / 1.255} \\
     \midrule
     \multirow{5}{*}{\rotatebox{90}{$\beta=0.5$}} & MNIST & 0.849 / 1.451 & \textbf{0.913} / \textbf{1.890} & 1.000 / 10.00 & \textbf{0.875} / \textbf{1.650} &  0.925 / 2.010 & \textbf{0.919} / \textbf{1.958} \\
     &CIFAR-10 & 0.844 / 1.870 & \textbf{0.900} / \textbf{2.294}  & 1.000 / 10.00  & \textbf{0.912} / \textbf{2.408} & 0.950 / 3.152 & \textbf{0.901} / \textbf{2.327} \\
     &T-ImageNet & 0.864 / 33.41 & \textbf{0.895} / \textbf{43.12}  & 1.000 / 200.0  & \textbf{0.906} / \textbf{43.46} & \textbf{0.923} / \textbf{52.23} & 0.932 / 55.78 \\
     &SHHS & 0.857 / 1.169 & \textbf{0.900} / \textbf{1.358} & 1.000 / 6.000 & \textbf{0.900} / \textbf{1.358} & 0.927 / 1.530 & \textbf{0.898} / \textbf{1.350} \\
     &PathMNIST & 0.860 / 1.141 & \textbf{0.900 / 1.344} & 1.000 / 9.000 & \textbf{0.903 / 1.368} & 1.000 / 6.287 & \textbf{0.903 / 1.373} \\
\bottomrule
\bottomrule
\end{tabular}
}
\end{small}
\end{table}
}
\TableAppendixKnown
\def \TableAppendixUnknown{
\begin{table}[ht]
\caption{
Marginal coverage / average set size under different Byzantine attacks with $10\%$, $20\%$, $30\%$ and $40\%$ malicious clients with unknown numbers of malicious clients.
% \zl{
\name consistently recovers the coverage (and average size of prediction set) of benign conformal prediction (\Cref{tab:benign}), while the performance of FCP generally deteriorates as the percentage of malicious clients increases. $\beta$ denotes the Dirichlet parameter for the partition of client data.
% }
}
\label{table:appendix:unknown}
\centering
\begin{small}
   \resizebox{1\columnwidth}{!}{
\begin{tabular}{p{0.005cm}c|cc|cc|cc}
\toprule
\multicolumn{2}{c}{Attack} & \multicolumn{2}{c}{Coverage Attack} & \multicolumn{2}{c}{Efficiency Attack} & \multicolumn{2}{c}{Gaussian Attack}  \\
        \multicolumn{2}{c}{Method}  & FCP & \name & FCP & \name & FCP & \name \\
\midrule
\multicolumn{2}{c}{$K_m/K = 10\%$}\\
\midrule
\multirow{5}{*}{\rotatebox{90}{$\beta=0.0$}} & MNIST & 0.896 / 0.898 & \textbf{0.901} / \textbf{0.905} & 0.999 / 4.034 & \textbf{0.890} / \textbf{0.895} & 0.947 / 0.960 & \textbf{0.895} / \textbf{0.900} \\
     &CIFAR-10 & 0.887 / 1.499 & \textbf{0.903} / \textbf{1.612} & 1.000 / 7.991  & \textbf{0.920} / \textbf{1.689} & 0.906 / 1.633 & \textbf{0.890} / \textbf{1.543} \\
     &T-ImageNet & 0.873 / 18.44 & \textbf{0.908} / \textbf{22.52} & 0.999 / 148.7  & \textbf{0.890} / \textbf{20.93} & 0.916 / 23.98 & \textbf{0.897} / \textbf{21.64} \\
     &SHHS & 0.889 / 1.303 & \textbf{0.902} / \textbf{1.365} & 0.999 / 5.338 & \textbf{0.903} / \textbf{1.368} & 0.909 / 1.409 & \textbf{0.902} / \textbf{1.367} \\
     &PathMNIST & 0.892 / 1.184 & \textbf{0.899} / \textbf{1.237} & 1.000 / 6.271 & \textbf{0.905} / \textbf{1.253} & 0.905 / 1.253 & \textbf{0.901} / \textbf{1.239} \\
     \midrule
     \multirow{5}{*}{\rotatebox{90}{$\beta=0.5$}} & MNIST & 0.892 / 1.747 & \textbf{0.895} / \textbf{1.798} & 1.000 / 9.319 & \textbf{0.900} / \textbf{1.780} &  0.892 / 1.798 & \textbf{0.896} / \textbf{1.800} \\
     &CIFAR-10 & 0.887 / 1.209 & \textbf{0.890} / \textbf{2.221}  & 1.000 / 8.808  & \textbf{0.900} / \textbf{2.304} & 0.918 / 2.515 & \textbf{0.905} / \textbf{2.418} \\
     &T-ImageNet & 0.892 / 41.03 & \textbf{0.903} / \textbf{43.94}  & 0.997 / 146.7  & \textbf{0.898} / \textbf{43.01} & 0.917 / 47.47 & \textbf{0.915} / \textbf{47.35} \\
     &SHHS & 0.889 / 1.304 & \textbf{0.902} / \textbf{1.367} & 1.000 / 5.981 & \textbf{0.902} / \textbf{1.364} & 0.909 / 1.412 & \textbf{0.900} / \textbf{1.357}\\
     &PathMNIST & \textbf{0.892} / 1.290 & 0.909 / \textbf{1.394} & 0.996 / 5.149 & \textbf{0.901} / \textbf{1.376} & \textbf{0.905} / 1.387 & 0.907 / \textbf{1.375}\\
\midrule
\multicolumn{2}{c}{$K_m/K = 20\%$}\\
\midrule
\multirow{5}{*}{\rotatebox{90}{$\beta=0.0$}} & MNIST & 0.873 / 0.876 & \textbf{0.898} / \textbf{0.903} & 1.000 / 10.00 & \textbf{0.906} / \textbf{0.912} & 0.967 / 0.988 & \textbf{0.904} / \textbf{0.908} \\
     &CIFAR-10 & 0.869 / 1.398 & \textbf{0.888} / \textbf{1.512} & 1.000 / 10.00  & \textbf{0.902} / \textbf{1.603} & 0.916 / 1.725 & \textbf{0.905} / \textbf{1.623} \\
     &T-ImageNet & 0.874 / 17.787 & \textbf{0.904} / \textbf{22.47} & 1.000 / 200.0  & \textbf{0.907} / \textbf{22.76} & 0.908 / 23.12 & \textbf{0.904} / \textbf{22.88} \\
     &SHHS & 0.876 / 1.243 & \textbf{0.902} / \textbf{1.365} & 1.000 / 5.984 & \textbf{0.902} / \textbf{1.366} & 0.918 / 1.467 & \textbf{0.902} / \textbf{1.363} \\
     &PathMNIST & 0.880 / 1.134 & \textbf{0.900} / \textbf{1.229} & 1.000 / 8.335 & \textbf{0.902} / \textbf{1.241} & 0.909 / 1.273 & \textbf{0.898} / \textbf{1.229} \\
     \midrule
     \multirow{5}{*}{\rotatebox{90}{$\beta=0.5$}} & MNIST & 0.857 / 1.534 & \textbf{0.901} / \textbf{1.832} & 1.000 / 9.089 & \textbf{0.881} / \textbf{1.713} &  0.915 / 1.945 & \textbf{0.908} / \textbf{1.889} \\
     &CIFAR-10 & 0.866 / 2.038 & \textbf{0.900} / \textbf{2.344}  & 1.000 / 10.00  & \textbf{0.897} / \textbf{2.312} & 0.938 / 2.895 & \textbf{0.929} / \textbf{2.702} \\
     &T-ImageNet & 0.860 / 33.99 & \textbf{0.905} / \textbf{44.38}  & 1.000 / 199.0  & \textbf{0.894} / \textbf{42.30} & 0.922 / 49.44 & \textbf{0.906} / \textbf{46.38} \\
     &SHHS & 0.874 / 1.236 & \textbf{0.901} / \textbf{1.362} & 1.000 / 5.985 & \textbf{0.903} / \textbf{1.369} & 0.917 / 1.463 & \textbf{0.902} / \textbf{1.365} \\
     &PathMNIST & 0.876 / 1.210 & \textbf{0.907} / \textbf{1.388} & 1.000 / 7.395 & \textbf{0.903} / \textbf{1.362} & 0.905 / 1.382 & \textbf{0.902} / \textbf{1.362} \\
\midrule
\multicolumn{2}{c}{$K_m/K = 30\%$}\\
\midrule
\multirow{5}{*}{\rotatebox{90}{$\beta=0.0$}} & MNIST & 0.851 / 0.854 & \textbf{0.905} / \textbf{0.912} & 1.000 / 10.00 & \textbf{0.907} / \textbf{0.913} & 0.977 / 1.009 & \textbf{0.903} / \textbf{0.908} \\
     &CIFAR-10 & 0.852 / 1.307 & \textbf{0.904} / \textbf{1.612} & 1.000 / 10.00  & \textbf{0.891} / \textbf{1.544} & 0.909 / 1.672 & \textbf{0.903} / \textbf{1.578} \\
     &T-ImageNet & 0.862 / 15.66 & \textbf{0.902} / \textbf{21.92} & 1.000 / 200.0  & \textbf{0.903} / \textbf{22.19} & 0.907 / 23.89 & \textbf{0.906} / \textbf{23.77} \\
     &SHHS & 0.859 / 1.176 & \textbf{0.903} / \textbf{1.372} & 1.000 / 6.000 & \textbf{0.902} / \textbf{1.366} & 0.926 / 1.526 & \textbf{0.903} / \textbf{1.368} \\
     &PathMNIST & 0.863 / 1.064 & \textbf{0.902} / \textbf{1.239} & 1.000 / 9.000 & \textbf{0.898} / \textbf{1.221} & 0.907 / 1.263 & \textbf{0.905} / \textbf{1.246} \\
     \midrule
     \multirow{5}{*}{\rotatebox{90}{$\beta=0.5$}} & MNIST & 0.849 / 1.451 & \textbf{0.920} / \textbf{1.947} & 1.000 / 10.00 & \textbf{0.900} / \textbf{1.779} &  0.925 / 2.010 & \textbf{0.911} / \textbf{1.943} \\
     &CIFAR-10 & 0.844 / 1.870 & \textbf{0.899} / \textbf{2.360}  & 1.000 / 10.00  & \textbf{0.891} / \textbf{2.264} & 0.950 / 3.152 & \textbf{0.896} / \textbf{2.300} \\
     &T-ImageNet & 0.864 / 33.41 & \textbf{0.895} / \textbf{42.79}  & 1.000 / 200.0  & \textbf{0.908} / \textbf{44.74} & 0.923 / 52.23 & \textbf{0.920} / \textbf{50.70} \\
     &SHHS & 0.857 / 1.169 & \textbf{0.902} / \textbf{1.368} & 1.000 / 6.000 & \textbf{0.904} / \textbf{1.374} & 0.927 / 1.530 & \textbf{0.903} / \textbf{1.370} \\
     &PathMNIST & 0.860 / 1.141 & \textbf{0.895} / \textbf{1.337} & 1.000 / 9.000 & \textbf{0.902} / \textbf{1.376} & 0.910 / 1.418 & \textbf{0.899} / \textbf{1.352} \\
\midrule
\multicolumn{2}{c}{$K_m/K = 40\%$}\\
\midrule
\multirow{5}{*}{\rotatebox{90}{$\beta=0.0$}} & MNIST & 0.832 / 0.834 & \textbf{0.891} / \textbf{0.892} & 1.000 / 10.00 & \textbf{0.895} / \textbf{0.901} & 0.979 / 1.025 & \textbf{0.899} / \textbf{0.904} \\
     &CIFAR-10 & 0.831 / 1.189 & \textbf{0.913} / \textbf{1.666} & 1.000 / 10.00  & \textbf{0.902} / \textbf{1.608} & 0.916 / 1.733 & \textbf{0.905} / \textbf{1.612} \\
     &T-ImageNet & 0.830 / 12.97 & \textbf{0.888} / \textbf{21.45} & 1.000 / 200.0  & \textbf{0.905} / \textbf{22.99} & 0.918 / 25.69 & \textbf{0.903} / \textbf{23.42} \\
     &SHHS & 0.834 / 1.093 & \textbf{0.902} / \textbf{1.363} & 1.000 / 6.000 & \textbf{0.903} / \textbf{1.369} & 0.937 / 1.611 & \textbf{0.902} / \textbf{1.368} \\
     &PathMNIST & 0.840 / 0.997 & \textbf{0.901} / \textbf{1.246} & 1.000 / 9.000 & \textbf{0.898} / \textbf{1.237} & 0.914 / 1.302 & \textbf{0.899} / \textbf{1.250} \\
     \midrule
     \multirow{5}{*}{\rotatebox{90}{$\beta=0.5$}} & MNIST & 0.805 / 1.284 & \textbf{0.911} / \textbf{1.929} & 1.000 / 10.00 & \textbf{0.910} / \textbf{1.906} &  0.941 / 2.227 & \textbf{0.929} / \textbf{2.084} \\
     &CIFAR-10 & 0.829 / 1.758 & \textbf{0.893} / \textbf{2.270}  & 1.000 / 10.00  & \textbf{0.888} / \textbf{2.203} & 0.970 / 3.863 & \textbf{0.923} / \textbf{2.635} \\
     &T-ImageNet & 0.825 / 27.84 & \textbf{0.906} / \textbf{45.18}  & 1.000 / 200.0  & \textbf{0.903} / \textbf{42.62} & 0.942 / 61.50 & \textbf{0.937} / \textbf{59.61} \\
     &SHHS & 0.835 / 1.095 & \textbf{0.902} / \textbf{1.364} & 1.000 / 6.000 & \textbf{0.904} / \textbf{1.375} & 0.937 / 1.609 & \textbf{0.903} / \textbf{1.371}\\
     &PathMNIST &0.837 / 1.055 & \textbf{0.903} / \textbf{1.378} & 1.000 / 9.000 & \textbf{0.909} / \textbf{1.398} & 0.915 / 1.464 & \textbf{0.914} / \textbf{1.488}\\
\bottomrule
\end{tabular}
}
\end{small}
\end{table}
}
\TableAppendixUnknown

% \subsection{Different conformity scores}
% \label{app:score}
\begin{figure}[t]
    \centering
    \includegraphics[width=\linewidth]{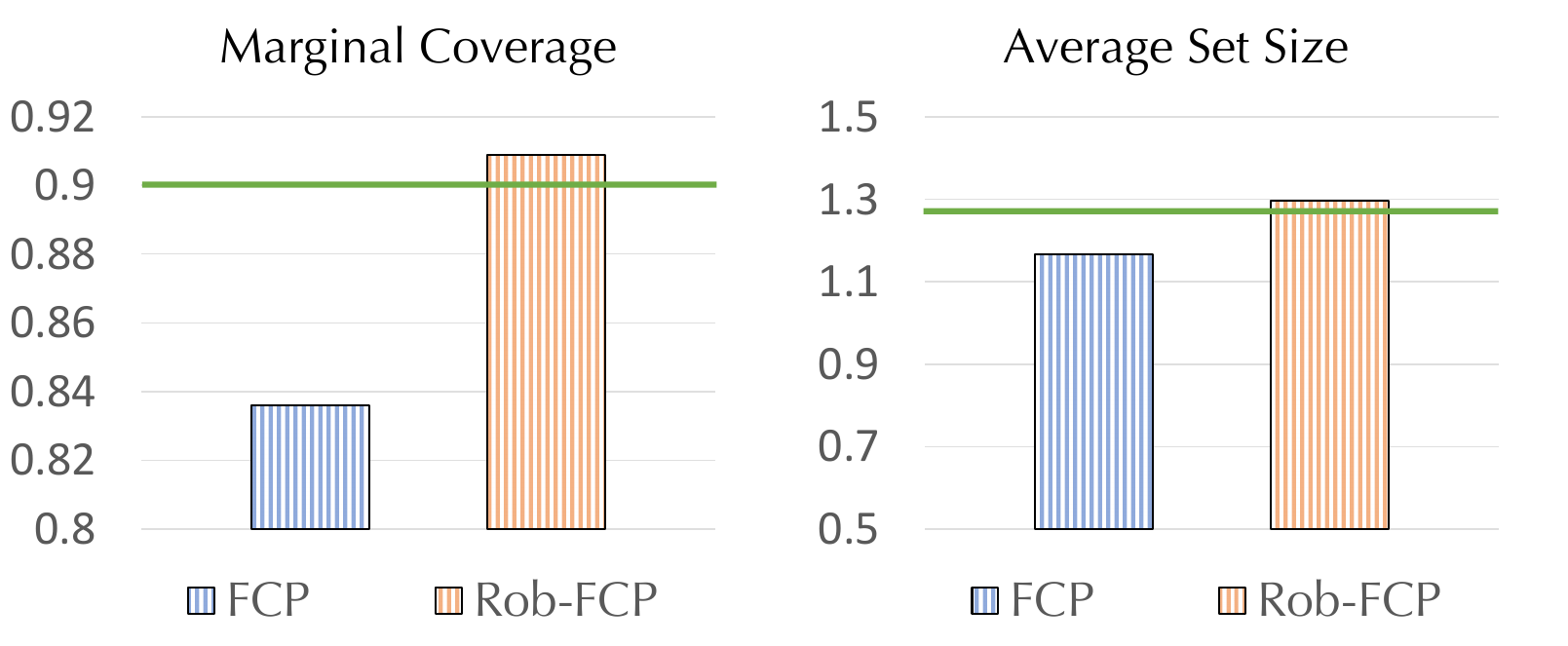}
    \caption{Marginal coverage / average set size under coverage attack with 40\% malicious clients with $\beta=0.0$ on CIFAR-10. The green horizontal line represents the benign marginal coverage and average set size without any malicious clients. }
    \label{fig:abl_cov_iid}
\end{figure}

\begin{figure}[t]
    \centering
    \includegraphics[width=\linewidth]{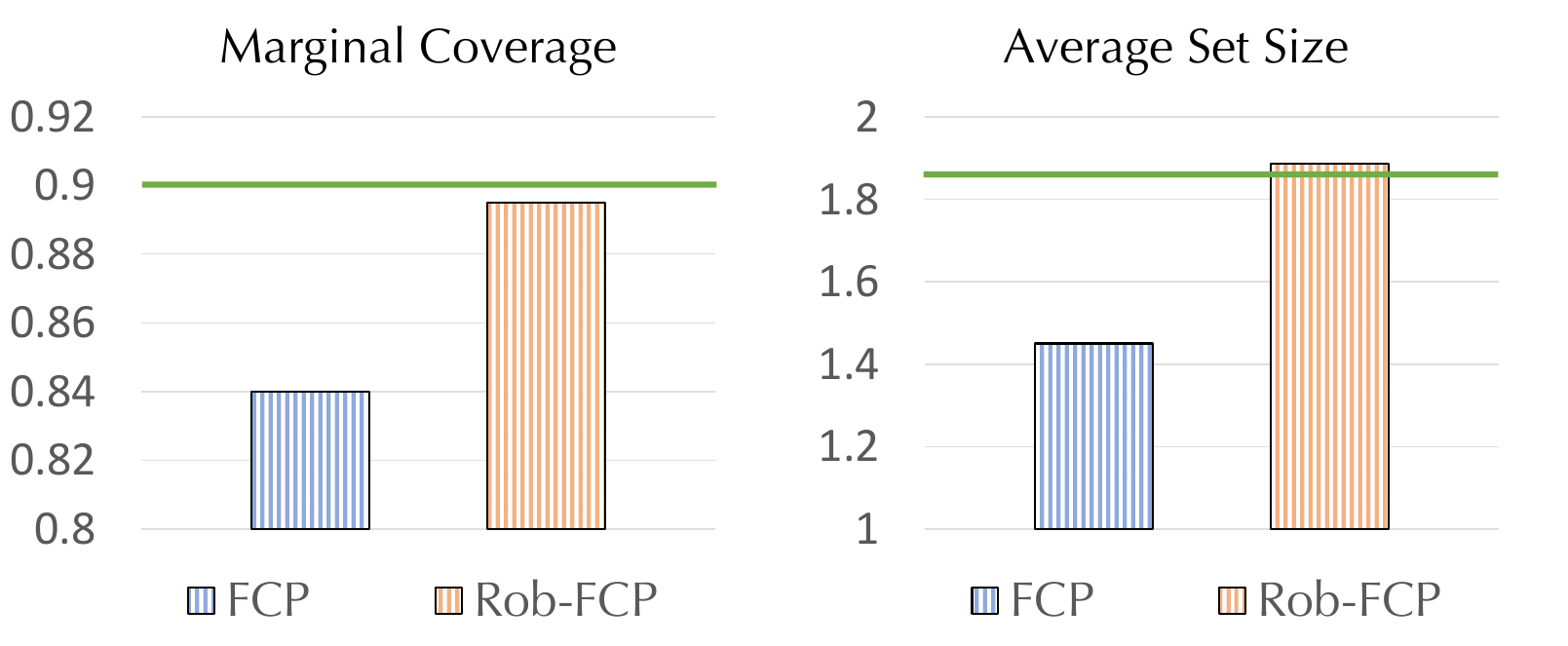}
    \caption{Marginal coverage / average set size under coverage attack with 40\% malicious clients with $\beta=0.5$ on CIFAR-10. The green horizontal line represents the benign marginal coverage and average set size without any malicious clients.}
    \label{fig:abl_cov_noniid}
\end{figure}

\begin{figure}[t]
    \centering
    \includegraphics[width=\linewidth]{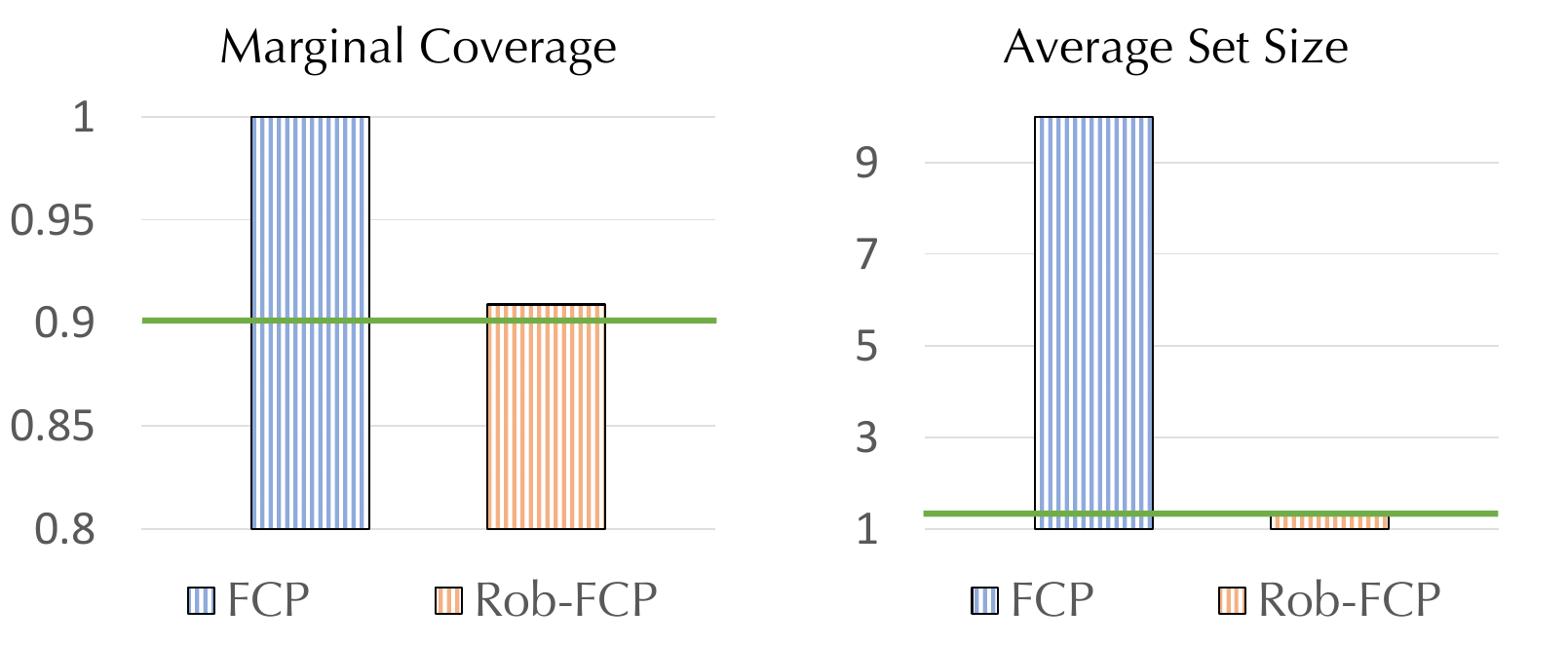}
    \caption{Marginal coverage / average set size under efficiency attack with 40\% malicious clients with $\beta=0.0$ on CIFAR-10. The green horizontal line represents the benign marginal coverage and average set size without any malicious clients.}
    \label{fig:abl_eff_iid}
\end{figure}

\begin{figure}[t]
    \centering
    \includegraphics[width=\linewidth]{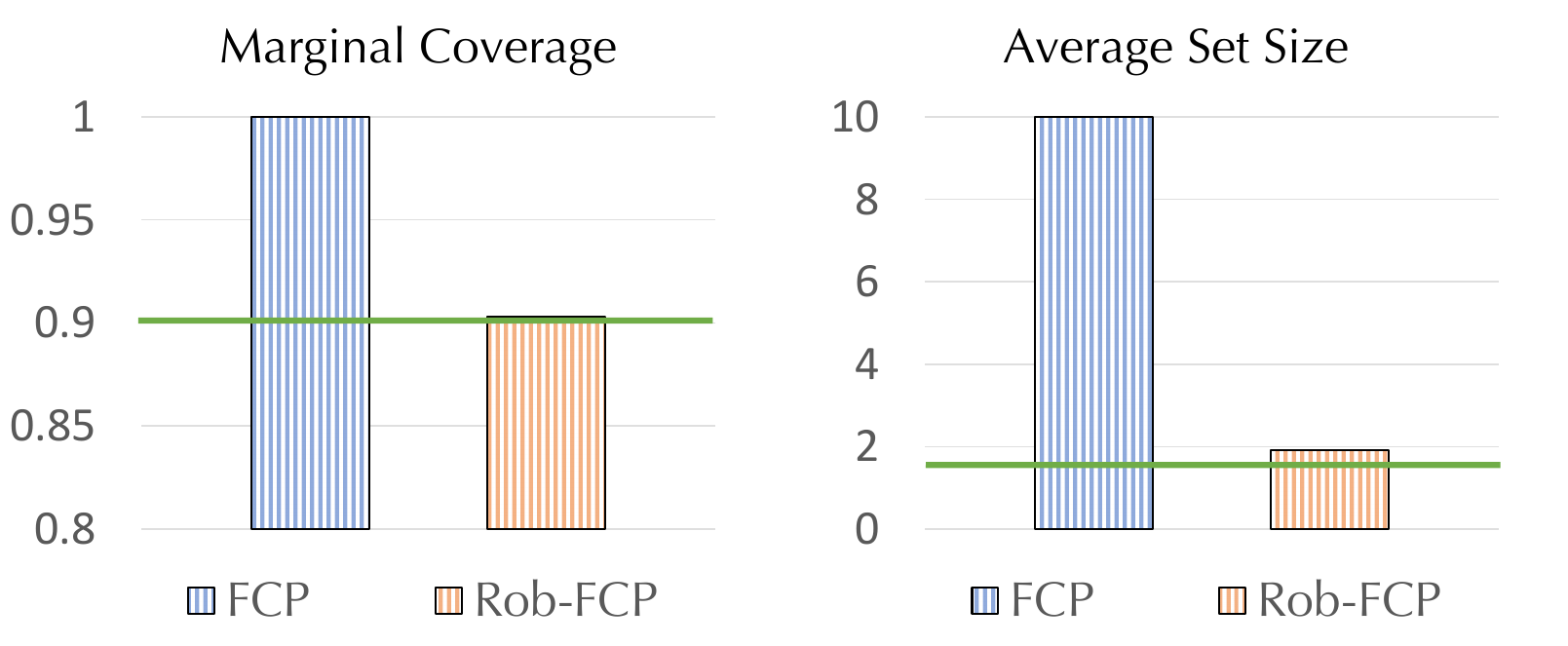}
    \caption{Marginal coverage / average set size under efficiency attack with 40\% malicious clients with $\beta=0.5$ on CIFAR-10. The green horizontal line represents the benign marginal coverage and average set size without any malicious clients.}
    \label{fig:abl_eff_noniid}
\end{figure}

\begin{figure}[t]
    \centering
    \includegraphics[width=\linewidth]{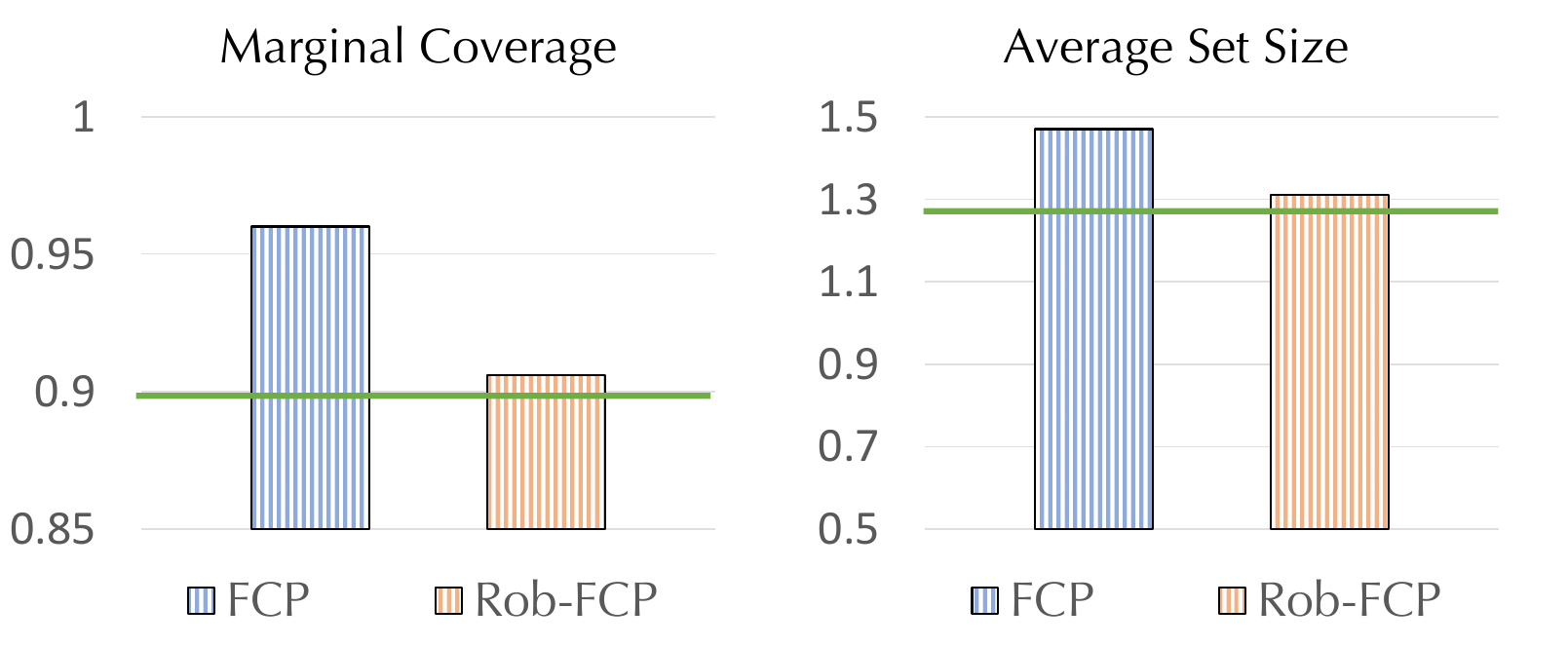}
    \caption{Marginal coverage / average set size under Gaussian attack with 40\% malicious clients with $\beta=0.0$ on CIFAR-10. The green horizontal line represents the benign marginal coverage and average set size without any malicious clients.}
    \label{fig:abl_gauss_iid}
\end{figure}

\begin{figure}[t]
    \centering
    \includegraphics[width=\linewidth]{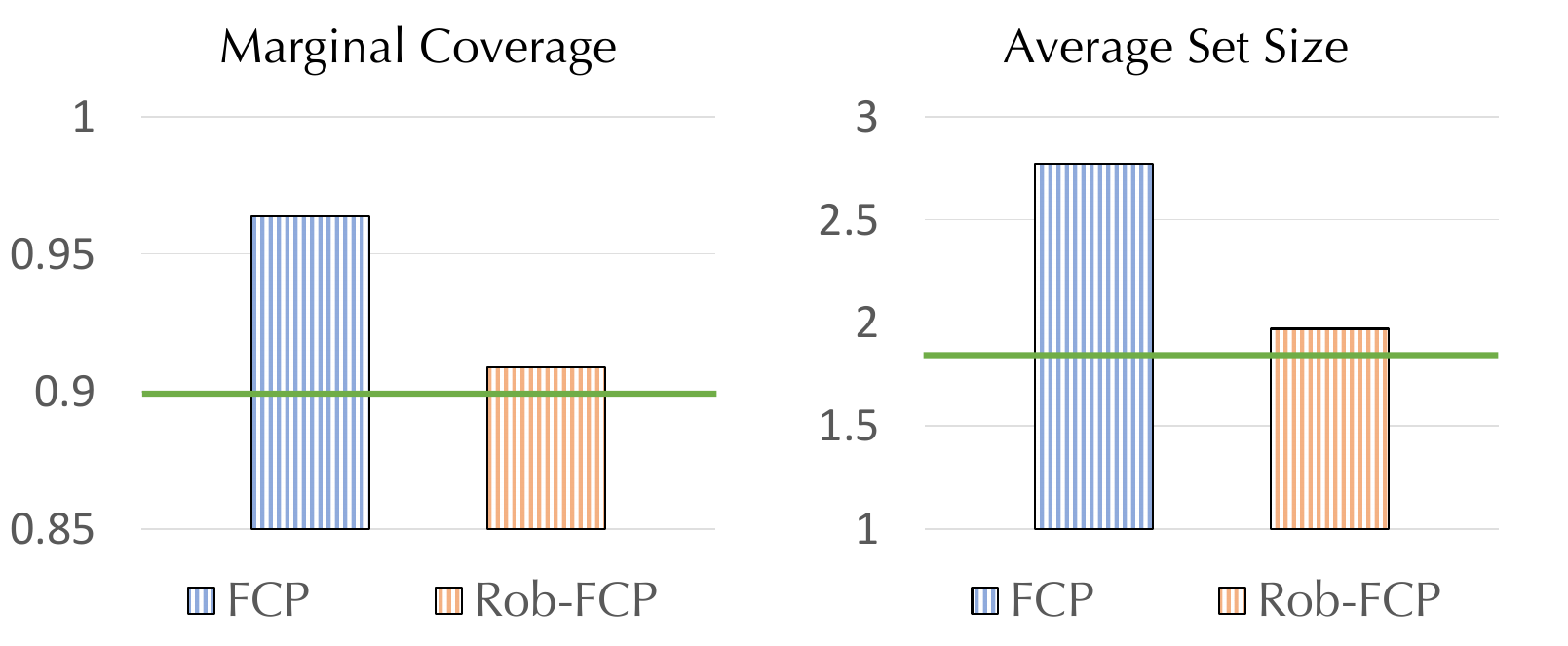}
    \caption{Marginal coverage / average set size under Gaussian attack with 40\% malicious clients with $\beta=0.5$ on CIFAR-10. The green horizontal line represents the benign marginal coverage and average set size without any malicious clients.}
    \label{fig:abl_gauss_noniid}
\end{figure}

% \subsection{Ablation studies}
% \label{app:abl}

% \begin{figure}[t]
%     \centering
%     \includegraphics[width=\linewidth]{figs/fig_abl_method_1.pdf}
%     \caption{Marginal coverage / average set size under coverage attack with 40\% malicious clients in the IID Byzantine setting on Tiny-ImageNet. The green horizontal line represents the benign marginal coverage and average set size without any malicious clients.}
%     \label{fig:abl_method_iid}
% \end{figure}

\begin{figure}[t]
    \centering
    \includegraphics[width=\linewidth]{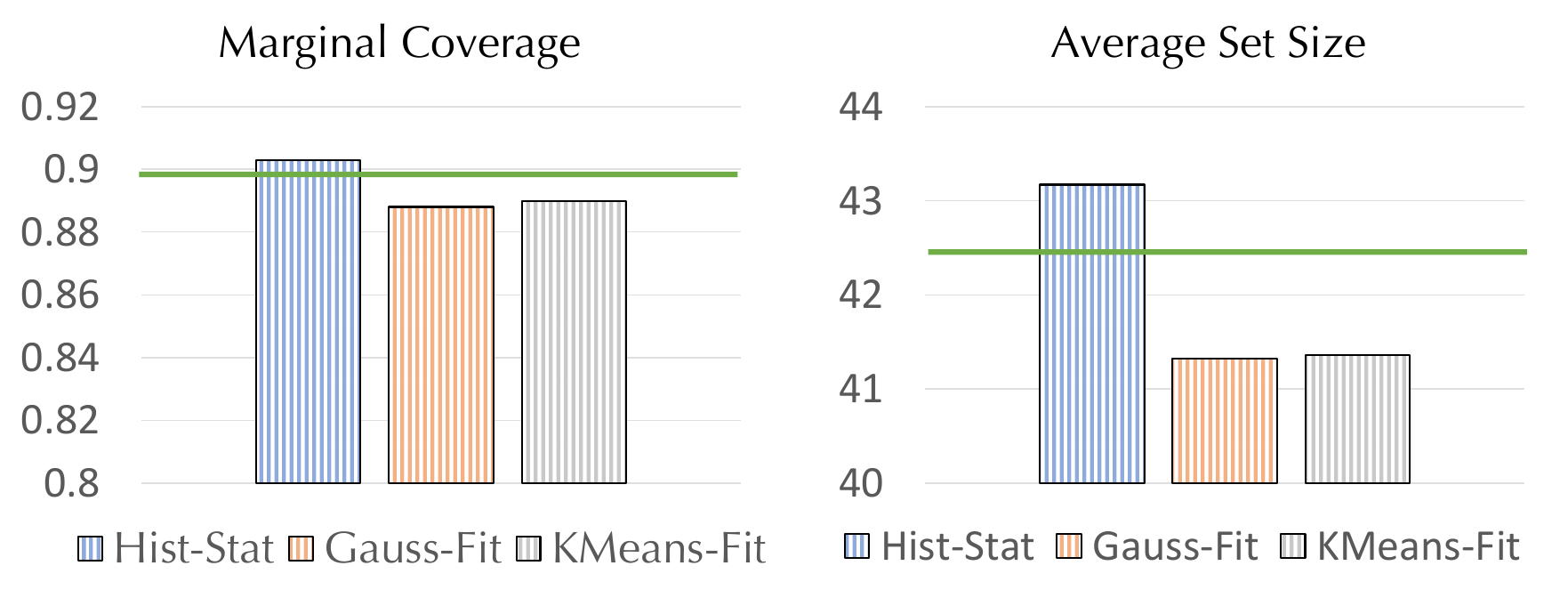}
    \caption{Marginal coverage / average set size under coverage attack with 40\% malicious clients with $\beta=0.5$ on Tiny-ImageNet. The green horizontal line represents the benign marginal coverage and average set size without any malicious clients.}
    \label{fig:abl_method_noniid}
\end{figure}

\begin{figure}[t]
    \centering
    \includegraphics[width=\linewidth]{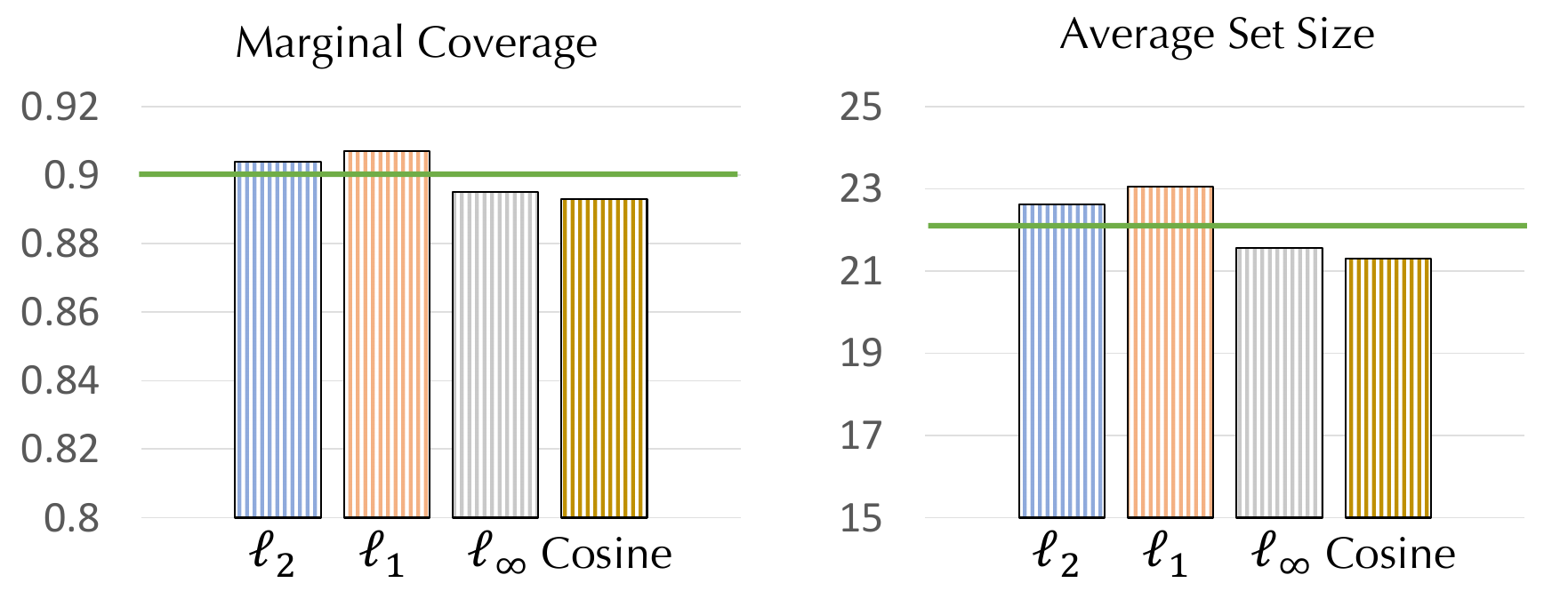}
    \caption{Marginal coverage / average set size under coverage attack with 40\% malicious clients with $\beta=0.0$ on Tiny-ImageNet. The green horizontal line represents the benign marginal coverage and average set size without any malicious clients.}
    \label{fig:abl_dis}
\end{figure}

\end{document}